\documentclass[11pt, notitlepage]{article}   
\usepackage[english]{babel}

\usepackage[letterpaper,top=2.5cm,bottom=2.5cm,left=2.5cm,right=2.5cm,marginparwidth=1.75cm]{geometry}

\usepackage{amsfonts,amsmath,amssymb,amscd,amsthm,euscript,graphicx,latexsym,mathrsfs,verbatim,setspace} 

\usepackage{authblk}

\theoremstyle{plain}
\newtheorem{thm}{Theorem}[section]
\newtheorem{lem}[thm]{Lemma}

\newtheorem{cor}[thm]{Corollary}

\newtheorem{conj}[thm]{Conjecture}

\theoremstyle{definition}
\newtheorem{defin}{Definition}[section]
\usepackage{mathtools}
\usepackage{amsthm}
\usepackage{amssymb}
\usepackage{amsfonts}
\usepackage{graphicx}
\usepackage{ytableau}
\usepackage{hyperref}
\usepackage{cleveref}
\usepackage[utf8]{inputenc}
\usepackage{todonotes}
\usepackage{tikz}
\usepackage{thmtools}
\usepackage{thm-restate}
\usepackage{comment}

\newcommand{\opt}{\operatorname{opt}}
\newcommand{\optag}{\textnormal{opt}^{\textnormal{nf}}}

\title{Worst-case Error Bounds for Online Learning of Smooth Functions}
\author{Weian (Andrew) Xie}

\begin{document}
\maketitle

\begin{abstract}
Online learning is a model of machine learning where the learner is trained on sequential feedback. We investigate worst-case error for the online learning of real functions that have certain smoothness constraints. Suppose that $\mathcal{F}_q$ is the class of all absolutely continuous functions $f: [0, 1] \rightarrow \mathbb{R}$ such that $\|f'\|_q \le 1$, and $\opt_p(\mathcal{F}_q)$ is the best possible upper bound on the sum of the $p^{\text{th}}$ powers of absolute prediction errors for any number of trials guaranteed by any learner. We show that for any $\delta, \epsilon \in (0, 1)$, $\opt_{1+\delta} (\mathcal{F}_{1+\epsilon}) = O(\min(\delta, \epsilon)^{-1})$. Combined with the previous results of Kimber and Long (1995) and Geneson and Zhou (2023), we achieve a complete characterization of the values of $p, q \ge 1$ that result in $\opt_p(\mathcal{F}_q)$ being finite, a problem open for nearly 30 years.

We study the learning scenarios of smooth functions that also belong to certain special families of functions, such as polynomials. We prove a conjecture by Geneson and Zhou (2023) that it is not any easier to learn a polynomial in $\mathcal{F}_q$ than it is to learn any general function in $\mathcal{F}_q$.  We also define a noisy model for the online learning of smooth functions, where the learner may receive incorrect feedback up to $\eta \ge 1$ times, denoting the worst-case error bound as $\optag_{p, \eta} (\mathcal{F}_q)$. We prove that $\optag_{p, \eta} (\mathcal{F}_q)$ is finite if and only if $\opt_p(\mathcal{F}_q)$ is. Moreover, we prove for all $p, q \ge 2$ and $\eta \ge 1$ that $\optag_{p, \eta} (\mathcal{F}_q) = \Theta (\eta)$.

\emph{Keywords:} online learning, smooth functions, noisy labels, single-variable functions, worst-case error

\end{abstract}

\newpage

\section{Introduction}\label{one}

Imagine a situation where a learner makes daily predictions for stock price values, given relevant inputs such as industry performance, market-wide economic trends, or recent company performance. The next day, the learner obtains some form of feedback on its previous prediction—receiving the actual price value, for example. The learner then uses its increased knowledge of the price value at certain inputs to generate better-informed future predictions—on the basis that similar inputs yield similar price values. The question of how fast the learner can use its accumulated information to generate better predictions in the worst case scenario arises naturally, and is the original motivation for the model of online learning previously studied in \cite{angluin, geneson, littlestone, long, kl, mycielski}. 

Worst-case error bounds for online learning were first studied on functions over a discrete domain and with output values among a finite set, in \cite{angluin, littlestone, mycielski}. The model over real-valued single-variable smooth functions $f:[0, 1] \rightarrow \mathbb{R}$ that we investigate was first introduced in \cite{kl}, later studied in \cite{long} and most recently studied \cite{geneson}. The last paper also extended the problem to multi-variable smooth functions $f: [0, 1]^d \rightarrow \mathbb{R}$. 

As our problem investigates how much accuracy the learner can guarantee in the worst-case scenario, we naturally represent the learning process as a game between the learner and an adversary, the latter of which is trying to force as much error as possible, with both players playing optimally. In the discrete model, different forms of feedback have been studied, including the standard model, where the adversary tells the learner the precise function output at the queried input; the bandit model, where the adversary only tells the learner YES or NO based on whether the learner's answer is correct; and a model where the adversary is allowed to give incorrect reinforcement up to $\eta$ times, for some $\eta \ge 1$. The smooth function problem has so far only been studied in the context of the standard model, and all results so far pertain to this model. Indeed, the bandit model would not be interesting to study in the context of smooth functions, as the adversary can guarantee infinite error each time—through answering NO each time and simply setting the function $f \equiv C$ for some $C$ sufficiently far away from all of the learner's guesses. However, a noisy model is interesting to study. For the noisy model in the smooth function setting, the adversary may provide erroneous output values to the learner.

\subsection{Definitions} 
In the standard model of online learning, an algorithm $A$ tries to learn a function from a given class of functions $\mathcal{F}$. The class $\mathcal{F}$ consists of functions $f: S \rightarrow \mathbb{R}$ for some fixed input set $S$. In the $t^{\text{th}}$ trial, the algorithm is repeatedly given an input $x_t$ within $S$ and queried on the value of $f(x_t)$, whereupon it outputs a prediction $\hat y_t$. Afterwards, the true value of $f(x_t)$ is revealed to $A$, and the raw error of the round $e_t = |\hat y_t - f(x_t)|$ is recorded. The total error function is calculated as the sum of the $p^{\text{th}}$ powers of the raw errors, for some parameter $p \ge 1$. Specifically, as per the notation of \cite{geneson}, for a fixed learning algorithm $A$ operating on a finite sequence of inputs $\sigma = (x_0, x_1, \ldots, x_m) \in S^{m+1}$, and function $f \in \mathcal{F}$, this sum is \[\mathscr L_p(A,f,\sigma) = \sum_{t=1}^m e_t^p = \sum_{t=1}^m|\hat y_t-f(x_t)|^p.\] Subsequently, we define the worst-case scenario learning error for a fixed algorithm $A$ over a class $\mathcal{F}$, as \[\mathscr L_p(A,\mathcal F)=\displaystyle\sup_{f \in \mathcal F,\sigma \in \cup_{m \in \mathbb Z^+}S^m}\mathscr L_p(A,f,\sigma).\] We then define the worst-case error for the best possible learning algorithm: \[\opt_p(\mathcal F)=\displaystyle\inf_A \mathscr L_p(A,\mathcal F).\] 

In \cite{kl}, \cite{long}, and \cite{geneson}, the family $\mathcal{F}$ studied was the class of absolutely continuous single-variable functions $f : [0, 1] \rightarrow \mathbb{R}$ that satisfy certain smoothness constraints, in the form of their derivatives having bounded norms. Namely, for all $q \ge 1$, define $\mathcal{F}_q$ to be the class of absolutely continuous functions $f: [0, 1] \rightarrow \mathbb{R}$ such that $\int_0^1 |f'(x)|^q \le 1$. Naturally, an extension of this definition is $\mathcal{F}_{\infty}$, denoting the class of absolutely continuous functions $f: [0, 1] \rightarrow \mathbb{R}$ such that $\sup_{x \in (0, 1)} |f'(x)| \le 1$. As noted in \cite{geneson} and \cite{long}, $\mathcal{F}_{\infty}$ contains precisely the functions $f: [0, 1] \rightarrow \mathbb{R}$ that satisfy $|f(x)-f(y)| < |x-y|$ for all $x, y \in [0, 1]$. The paper \cite{geneson} notes that for any $q \ge 1$, the range of any function $f \in \mathcal{F}_q$ is at most $1$. Furthermore, \cite{geneson} also notes that $\mathcal{F}_{\infty} \subseteq \mathcal{F}_{q} \subseteq \mathcal{F}_{r}$ for all $1 \le r \le q$ by Jensen's Inequality, from which it follows that $\opt_{p}({\mathcal{F}_{\infty}}) \le \opt_{p}({\mathcal{F}_{q}}) \le \opt_{p}({\mathcal{F}_{r}})$ for all $1 \le r \le q$. 

We also carry over some notation from \cite{kl}, \cite{long}, and \cite{geneson}. Let the $q$-action of a function $f : [0, 1] \rightarrow \mathbb{R}$, denoted by $J_q[f]$, be defined as \[J_q[f] = \int_{0}^1 |f'(x)|^q \text{d}x. \] As such, $\mathcal{F}_q$ is the set of functions whose $q$-action is less than or equal to $1$. 
Furthermore, given a set $S = \{(u_i, v_i): 1 \le i \le m\}$, where each $(u_i, v_i) \in [0, 1] \times \mathbb{R}$ such that $u_i < u_{i+1}$ for all $1 \le i \le m-1$, define $f_S : [0, 1] \rightarrow \mathbb{R}$ such that $f_{\emptyset} \equiv 0$ and \[f_S(x) = \begin{cases}  v_1 & x \le u_1 \\ v_i+\frac{(x-u_i)(v_{i+1}-v_i)}{u_{i+1}-u_i} & x \in (u_i,u_{i+1}] \\ v_m & x>u_m \end{cases}\] if $|S|$ is nonzero. Therefore, graphically, $f_S$ is a continuous piecewise function composed of various line segments.

We now state two facts regarding $f_S$ that will come into use later. Given a set $S = \{(u_i, v_i): 1 \le i \le m\}$, with $(u_i, v_i) \in [0, 1] \times \mathbb{R}$ for each $1 \le i \le m$ and $u_1 < \ldots < u_m$, one useful fact about $f_S$ is that \[J_1[f_S] = \sum_{i=1}^{m-1} \left( \int_{u_i}^{u_{i+1}} \left|\frac{v_{i+1} - v_i}{u_{i+1} - u_i} \right| \text{d}x \right) = \sum_{i=1}^{m-1} |v_{i+1} - v_i|. \] Another useful fact about $f_S$ is that it has the minimum $q$-action out of all functions $f$ passing through all points in $S$:

\begin{lem}[\cite{kl}, \cite{geneson}]\label{linintmin}
    Let $S = \{(u_1,v_1),\ldots,(u_m,v_m) \}$ be a set of $m$ points with $(u_i, v_i) \in [0, 1] \times \mathbb{R}$ for each $i$, such that $u_1 < \ldots < u_m$. Then, for any $q \ge 1$ and any absolutely continuous function $f: [0,1] \to \mathbb R$ with $f(u_i)=v_i$ for $1 \le i \le m$, we have $J_q[f] \ge J_q[f_S]$. 
\end{lem}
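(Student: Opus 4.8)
The plan is to reduce the statement to a one-interval estimate and invoke Jensen's inequality on each subinterval $[u_i,u_{i+1}]$. First I would dispose of the trivial case: if $|f'|^q \notin L^1[0,1]$ then $J_q[f]=\infty$ and the inequality holds vacuously, so I may assume $f' \in L^q[0,1] \subseteq L^1[0,1]$. Since $f$ is absolutely continuous, the fundamental theorem of calculus applies on each subinterval and gives $\int_{u_i}^{u_{i+1}} f'(x)\,dx = f(u_{i+1})-f(u_i) = v_{i+1}-v_i$ for $1 \le i \le m-1$.

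Next, for each such $i$ I would apply Jensen's inequality to the convex function $t \mapsto |t|^q$ (valid for $q \ge 1$) against the normalized Lebesgue measure $\frac{dx}{u_{i+1}-u_i}$ on $[u_i,u_{i+1}]$:
\[\frac{1}{u_{i+1}-u_i}\int_{u_i}^{u_{i+1}}|f'(x)|^q\,dx \ge \left|\frac{1}{u_{i+1}-u_i}\int_{u_i}^{u_{i+1}}f'(x)\,dx\right|^q = \left|\frac{v_{i+1}-v_i}{u_{i+1}-u_i}\right|^q.\]
Multiplying through by $u_{i+1}-u_i$ yields $\int_{u_i}^{u_{i+1}}|f'(x)|^q\,dx \ge (u_{i+1}-u_i)\left|\frac{v_{i+1}-v_i}{u_{i+1}-u_i}\right|^q = \int_{u_i}^{u_{i+1}}|f_S'(x)|^q\,dx$, the last equality because $f_S$ is affine with constant slope $\frac{v_{i+1}-v_i}{u_{i+1}-u_i}$ on $[u_i,u_{i+1}]$.

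Finally I would assemble the bound. Since $f_S$ is constant on $[0,u_1]$ and on $[u_m,1]$, we have $f_S'=0$ a.e. there, so
\[J_q[f_S] = \sum_{i=1}^{m-1}\int_{u_i}^{u_{i+1}}|f_S'(x)|^q\,dx \le \sum_{i=1}^{m-1}\int_{u_i}^{u_{i+1}}|f'(x)|^q\,dx \le \int_0^1 |f'(x)|^q\,dx = J_q[f],\]
where the second inequality just drops the nonnegative contributions of $\int_0^{u_1}|f'|^q$ and $\int_{u_m}^1|f'|^q$. The only point that needs any care — and hence the main (mild) obstacle — is the rigorous justification of the integral form of Jensen's inequality together with the use of the FTC for absolutely continuous functions; everything else is bookkeeping. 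I would also note in passing that for $q=1$ this specializes to the triangle inequality $\sum_{i=1}^{m-1}|v_{i+1}-v_i| \le \int_0^1|f'(x)|\,dx$, consistent with the formula for $J_1[f_S]$ recorded just before the lemma.
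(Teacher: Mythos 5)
Your proof is correct, and it is essentially the standard argument behind this lemma: the paper does not reprove it but cites Kimber--Long and Geneson--Zhou, whose justification is exactly this reduction to the subintervals $[u_i,u_{i+1}]$ via the fundamental theorem of calculus for absolutely continuous functions followed by Jensen's inequality for the convex map $t \mapsto |t|^q$, with the contributions outside $[u_1,u_m]$ discarded as nonnegative. No gaps; the handling of the $J_q[f]=\infty$ case and the identification of $J_q[f_S]$ with $\sum_{i}(u_{i+1}-u_i)\left|\frac{v_{i+1}-v_i}{u_{i+1}-u_i}\right|^q$ are exactly the bookkeeping needed.
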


As per \cite{kl}, we define the learning algorithm LININT using $f_S$. In particular, on trial $0$, LININT's prediction is $\hat y_0 = \text{LININT}(\emptyset, x_1) = 0$, and in any subsequent trial $t > 0$, its prediction is \[\hat y_t = \text{LININT($(x_0, f(x_0)), \ldots, (x_{t-1}, f(x_{t-1})), x_t$)} = f_{\{(x_0, f(x_0)), \ldots, (x_{t-1}, f(x_{t-1}))\}}(x_t).\] Intuitively, this means that LININT makes a prediction either based on linear interpolation on the two closest surrounding points (one on each side of the input) which the algorithm knows the true value of, or simply based on its closest neighbor, if the requested input is to the left or to the right of all known points.

\subsection{Smooth Functions in the Standard Model}

The problem of determining the value of $\opt_p(\mathcal{F}_q)$ across all $p, q\ge 1$ has been extensively explored in \cite{kl}, \cite{long}, and \cite{geneson}. Kimber and Long \cite{kl} proved that if we set $q=1$, then for any $p \ge 1$, $\opt_p(\mathcal{F}_1) = \infty$. That is, no matter what parameter $p$ is chosen, the learner can never guarantee finite error when learning a function from $\mathcal{F}_1$. Furthermore, the same paper \cite{kl} also established that $\opt_1({\mathcal{F}_{\infty}}) = \infty$, from which it follows that for any $q \ge 1$, we have $\opt_1({\mathcal{F}_q}) = \infty$ as well. On the other hand, they proved that $\opt_p({\mathcal{F}_{q}}) =1$ for all $p, q \ge 2$, from which it follows that $\opt_p(\mathcal{F}_{\infty})=1$ for all $p \ge 2$ as well. This result was also proved in \cite{FM} with a different learning algorithm. 

The paper \cite{kl} also established that $\opt_{1+\epsilon}(\mathcal{F}_{q}) = O(\epsilon^{-1})$ for all $\epsilon \in (0,1)$ and all $q \ge 2$. Geneson and Zhou \cite{geneson} improved this bound and established a lower bound differing by a constant factor, proving that $\opt_{1+\epsilon}(\mathcal{F}_{\infty}) = \Theta(\epsilon^{-\frac{1}{2}})$ and $\opt_{1+\epsilon}(\mathcal{F}_{q}) = \Theta(\epsilon^{-\frac{1}{2}})$ for all $\epsilon \in (0,1)$ and all $q \ge 2$. They also established that $\opt_{2}(\mathcal{F}_{1+\epsilon}) = \Theta(\epsilon^{-1})$ for any $\epsilon \in (0, 1)$. From this, the upper bound $\opt_p(\mathcal{F}_{1+\epsilon}) = O(\epsilon^{-1})$ for all $p \ge 2$ and $\epsilon \in (0, 1)$ follows. Furthermore, for any $q > 1$, they established that for sufficiently large $p$, the learner can guarantee an error of at most $1$. Specifically, for any $q>1$ and $p \ge 2 + \frac{1}{q-1}$, $\opt_p(\mathcal{F}_q) = 1$. 

As such, upper bounds for $\opt_p(\mathcal{F}_q)$ have been established for all $p, q > 1$ whenever at least one of $p \ge 2$ and $q \ge 2$ holds. However, in the entirety of previous literature, no upper bounds for any instance of $p, q \in (1,2)$ have been proved. In fact, there was no proof of finiteness for $\opt_p(\mathcal{F}_q)$ for any choice of $p, q \in (1,2)$. In this paper, we establish an upper bound on $\opt_p(\mathcal{F}_q)$ for all values of $p, q \in (1,2)$. 

\begin{thm}\label{finitebound}
    For $\delta, \epsilon \in (0, 1)$, we have $\opt_{1+\delta}(\mathcal{F}_{1+\epsilon}) = O(\min(\delta, \epsilon)^{-1})$.
\end{thm}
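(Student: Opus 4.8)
The plan is to analyze the algorithm $\mathrm{LININT}$ and reduce the whole statement to the diagonal case $p=q$. Since $\mathrm{LININT}$'s prediction on every scored trial lies in the convex hull of the values observed so far, and every $f\in\mathcal F_q$ has range at most $1$, all scored raw errors satisfy $e_t\le 1$; hence for fixed $f$ and $\sigma$ the quantity $\mathscr L_p(\mathrm{LININT},f,\sigma)$ is nonincreasing in $p$. Consequently, when $\delta\ge\epsilon$ we get $\opt_{1+\delta}(\mathcal F_{1+\epsilon})\le\mathscr L_{1+\epsilon}(\mathrm{LININT},\mathcal F_{1+\epsilon})$, and when $\delta\le\epsilon$, using $\mathcal F_{1+\epsilon}\subseteq\mathcal F_{1+\delta}$ and the class-monotonicity of $\opt$ recorded earlier, $\opt_{1+\delta}(\mathcal F_{1+\epsilon})\le\opt_{1+\delta}(\mathcal F_{1+\delta})\le\mathscr L_{1+\delta}(\mathrm{LININT},\mathcal F_{1+\delta})$. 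Writing $\gamma=\min(\delta,\epsilon)\in(0,1)$, it therefore suffices to prove $\mathscr L_{1+\gamma}(\mathrm{LININT},\mathcal F_{1+\gamma})=O(\gamma^{-1})$, which is what I would aim to establish.

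For the diagonal bound, put $q=1+\gamma$, let $S_t$ be the set of observed points after trial $t$, and use the potential $\Phi_t=J_q[f_{S_t}]$, which by Lemma~\ref{linintmin} is nondecreasing in $t$ and satisfies $\Phi_t\le J_q[f]\le 1$. The core step is a lower bound on the per-trial increase $\Phi_t-\Phi_{t-1}$ in terms of $e_t$. If $x_t$ lies outside the observed range at distance $g_t\le 1$ from the nearest observed point, the increase equals $e_t^q/g_t^{q-1}\ge e_t^q$, so such trials contribute at most $1$ to $\sum_t e_t^q$. If $x_t$ lands between observed neighbours $x_a<x_b$, with $d_t=x_b-x_a$, value gap $\Delta_t=f(x_b)-f(x_a)$, and $d_1=x_t-x_a$, $d_2=x_b-x_t$, then a convexity computation --- in which the first-order term in $e_t$ cancels precisely because $\mathrm{LININT}$ interpolates --- identifies the increase with
\[
\frac{\bigl|\tfrac{d_1}{d_t}\Delta_t+\xi_t\bigr|^{\,q}}{d_1^{\,q-1}}+\frac{\bigl|\tfrac{d_2}{d_t}\Delta_t-\xi_t\bigr|^{\,q}}{d_2^{\,q-1}}-\frac{|\Delta_t|^{\,q}}{d_t^{\,q-1}},\qquad |\xi_t|=e_t .
\]
I would split on a threshold: if $e_t\ge c\,|\Delta_t|$ for a small absolute constant $c$, the displayed quantity is $\gtrsim\gamma\,e_t^q$, so these trials together with the edge trials contribute $O(\gamma^{-1})$ to $\sum_t e_t^q$. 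If instead $e_t<c\,|\Delta_t|$ (so $f$ is nearly linear across $(x_a,x_b)$), then writing $s_t=\Delta_t/d_t$ a uniform convexity estimate gives the lower bound $\gtrsim\gamma\,|s_t|^{\,q-2}\,e_t^2\,\tfrac{d_t}{d_1 d_2}$, which, using $d_1d_2\le d_t^2/4$ and $|s_t|\le 1/d_t$, is at least $\gtrsim\gamma\,e_t^2/d_t^{\,q-1}$ and is much larger when $x_t$ is close to an endpoint of $(x_a,x_b)$.

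The nearly-linear case is where I expect the real work to be, and the main obstacle. There the potential increases only quadratically in $e_t$, which is weaker than $e_t^q$ because $q<2$, and the interpolation intervals are adversarially nested rather than disjoint, so $\sum_t d_t$ is unbounded and a one-line Hölder bound against $\sum_t(\Phi_t-\Phi_{t-1})\le 1$ diverges. My plan is a summation over the subdivision tree: for each point $x\in[0,1]$ follow the nested chain of nearly-linear intervals containing $x$ and exploit that (i) the widths along a chain contract geometrically except at lopsided splits, which are heavily penalised by the $d_t/(d_1d_2)$ factor above, so that chain sums of the form $\sum_j (d^{(j)})^{\alpha}$ converge, and (ii) distinct chains draw on disjoint portions of the $q$-action budget; a Hölder estimate performed chain by chain (using the geometric contraction) and recombined across chains would convert the quadratic budget $\sum_{\mathrm{near\text{-}lin}}e_t^2/d_t^{\,q-1}=O(\gamma^{-1})$ into $\sum_{\mathrm{near\text{-}lin}}e_t^q=O(\gamma^{-1})$. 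Adding the edge, well-separated, and nearly-linear contributions then gives $\mathscr L_{1+\gamma}(\mathrm{LININT},\mathcal F_{1+\gamma})=O(\gamma^{-1})$ and hence the theorem. The most delicate points will be making the two convexity lower bounds uniform over the split ratio $d_1/d_2$ and over $|\Delta_t|$ (including the sub-case where a lopsided split reverses the sign of a child slope), and setting up the path decomposition of the tree so that the many tiny errors a shrewd adversary accumulates deep inside a nearly-linear stretch of $f$ stay controlled --- this is the source of both the genuine difficulty and the blow-up as $\gamma\to 0$.
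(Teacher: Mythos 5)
Your reduction to the diagonal case $p=q=1+\gamma$ and your first dichotomy track the paper's actual argument closely: the paper proves (Lemma \ref{partone}, built on Lemmas \ref{2qoutcor} and \ref{2qin}) that each trial increases $J_q[f_{S_t}]$ either by at least $\frac{q-1}{3}e_t^q$ or by at least $\frac{q-1}{3|m_t|^{2-q}d_t}e_t^2$, which is essentially your two-case potential estimate, and the outer reduction via monotonicity in $p$ and in the class is the same. The genuine gap is in your treatment of the second (nearly-linear) case. You correctly diagnose that a single Hölder step against the action budget alone diverges, but your proposed repair --- a subdivision-tree/chain decomposition with geometric width contraction, penalised lopsided splits, and per-chain Hölder --- is only a plan, and as sketched it has an unresolved obstruction: after Hölder on each chain you must sum terms of the form $(\text{chain budget})^{q/2}$ over all chains, and since $q/2<1$ this is not controlled by the total budget $O(\gamma^{-1})$ when the number of chains is unbounded; moreover the ``lopsided splits are penalised'' step is never quantified, and you discard the slope factor via $|s_t|\le 1/d_t$, losing exactly the information the final bound turns out to need. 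So the central difficulty you yourself identify is left unproved.

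The paper closes that gap with a different device: a second potential $H_{p,S}=\sum_i |v_{i+1}-v_i|\left(1-|u_{i+1}-u_i|^{p-1}\right)$, shown to satisfy $0\le H_{p,S}\le 1$ whenever $J_1[f_S]\le 1$ and to increase by at least $(p-1)|m|d^{\,p}$ with each new point (Lemma \ref{twopointfive}, via the elementary inequality of Lemma \ref{twovariable}). Taking $p=\frac{1}{2-q}$ this yields $\sum_i |m_i| d_i^{1/(2-q)}\le \frac{1}{q-1}$ (Lemma \ref{twopointsix}), which is precisely the weight sum needed to run Hölder once, globally, while keeping the slope factor: $\sum_{i\in I_2}e_i^q\le\left(\sum_{i\in I_2}\tfrac{e_i^2}{|m_i|^{2-q}d_i}\right)^{q/2}\left(\sum_{i\in I_2}|m_i|^q d_i^{q/(2-q)}\right)^{(2-q)/2}\le\frac{3}{q-1}$, giving $\mathscr L_{1+\gamma}(\text{LININT},\mathcal F_{1+\gamma})\le \frac{6}{\gamma}$ (Theorem \ref{twopointtwo}). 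Without an ingredient playing the role of $H_{p,S}$, or a completed and quantitatively uniform version of your tree argument, your proposal does not yet establish the theorem.
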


As a corollary, we have the following result, which was also a conjecture by Geneson and Zhou \cite{geneson}.

\begin{thm}\label{finite}
    For all $p >1$ and $q > 1$, $\opt_p(\mathcal{F}_q)$ is finite. 
\end{thm}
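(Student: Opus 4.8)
The plan is to deduce the theorem from \Cref{finitebound} together with the earlier results of Kimber and Long \cite{kl} and Geneson and Zhou \cite{geneson}, via a short case analysis according to where the pair $(p,q)$ sits relative to the square $[1,2]^2$. Since $p,q>1$, every such pair falls into exactly one of four regions, and in each region a finite upper bound on $\opt_p(\mathcal F_q)$ is already available, so it suffices to record which previously-established bound applies in each case.

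Concretely, I would argue as follows. If $p\ge 2$ and $q\ge 2$, then $\opt_p(\mathcal F_q)=1$ by \cite{kl}. If $p=1+\delta$ with $\delta\in(0,1)$ and $q\ge 2$, then $\opt_{1+\delta}(\mathcal F_q)=O(\delta^{-1})$ by \cite{kl} (indeed $\Theta(\delta^{-1/2})$ by \cite{geneson}), which is finite for each fixed $\delta$. If $p\ge 2$ and $q=1+\epsilon$ with $\epsilon\in(0,1)$, then $\opt_p(\mathcal F_{1+\epsilon})=O(\epsilon^{-1})$ by \cite{geneson}, again finite. Finally, if $p=1+\delta$ and $q=1+\epsilon$ with $\delta,\epsilon\in(0,1)$, then \Cref{finitebound} gives $\opt_{1+\delta}(\mathcal F_{1+\epsilon})=O(\min(\delta,\epsilon)^{-1})<\infty$. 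These four cases exhaust all $p,q>1$, so $\opt_p(\mathcal F_q)$ is finite in every case. (If one wishes to shorten the bookkeeping, the monotonicity $\opt_p(\mathcal F_q)\le\opt_p(\mathcal F_r)$ for $r\le q$ recorded earlier collapses the two ``$q\ge 2$'' cases to $q=2$, but this is not needed.)

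The point is that no real obstacle remains at this stage: the single ingredient missing from the prior literature was an upper bound valid simultaneously for small $p$ and small $q$ — that is, on the region $(1,2)\times(1,2)$ — and this is precisely the content of \Cref{finitebound}. Once \Cref{finitebound} is in hand, the present statement is pure assembly of known bounds, and the entire difficulty of the problem has been absorbed into the proof of \Cref{finitebound}.
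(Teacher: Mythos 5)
Your proposal is correct and matches the paper's own derivation: the paper likewise treats this theorem as an immediate corollary of \Cref{finitebound} combined with the previously known bounds of \cite{kl} and \cite{geneson} covering the cases where $p\ge 2$ or $q\ge 2$, exactly the four-region assembly you describe.
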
 

Combined with the results from $\cite{kl}$ that $\opt_p(\mathcal{F}_q) = \infty$ whenever either $p=1$ or $q=1$, we now achieve a complete characterization of the values of $(p, q)$ for which $\opt_p(\mathcal{F}_q)$ is finite (i.e. precisely when $p, q >1$), answering a question that has been open since the model of the online learning was first defined for smooth functions in 1995 by Kimber and Long \cite{kl}. 

\subsection{Special Families of Smooth Functions}

Geneson and Zhou \cite{geneson} first explored the online learning of special families of smooth functions with further imposed restrictions. They suggested studying families such as polynomials, sums of exponential functions, sums of trigonometric functions, and piecewise combinations of these functions. Placing extra restrictions on smooth functions is natural as most functions modeling real-life phenomena belong to families that have additional properties beyond merely than the smoothness constraints that were extensively studied. As such, this direction can provide new results that are more specifically applicable to real-life learning scenarios. 

Carrying over the notation from \cite{geneson}, let $\mathcal{P}_q \subseteq \mathcal{F}_q$ denote the family of polynomials $f$
 such that $f \in \mathcal{F}_q$. We prove a conjecture from \cite{geneson} in the affirmative, establishing that given a fixed $q$-action restriction on a smooth function, having the extra restriction of the function being a polynomial does not decrease the learner's error. 

\begin{thm}\label{polynomialconjecture}
For all $p > 0$ and $q \ge 1$, we have $\opt_p(\mathcal{P}_q) = \opt_p(\mathcal{F}_q)$.
\end{thm}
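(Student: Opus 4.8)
The plan is to prove the two inequalities $\opt_p(\mathcal{P}_q)\le\opt_p(\mathcal{F}_q)$ and $\opt_p(\mathcal{P}_q)\ge\opt_p(\mathcal{F}_q)$ separately. The first is immediate from $\mathcal{P}_q\subseteq\mathcal{F}_q$: every learning algorithm $A$ satisfies $\mathscr{L}_p(A,\mathcal{P}_q)\le\mathscr{L}_p(A,\mathcal{F}_q)$, so taking the infimum over $A$ gives it. For the reverse inequality the idea is that whatever sequence of revealed values an adversary can realize using a general function in $\mathcal{F}_q$ that has a little slack in its $q$-action can also be realized by a \emph{polynomial} in $\mathcal{F}_q$; since a (deterministic) learner's predictions, and hence its errors, depend only on the values revealed at the queried points, such a polynomial adversary forces exactly as much error as the slightly shrunk general adversary, and the amount of shrinking can then be sent to $0$.

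First I would record a scaling identity. For $c>0$ let $\mathcal{F}_q^{(c)}$ denote the class of absolutely continuous $f:[0,1]\to\mathbb{R}$ with $J_q[f]\le c$, so that $\mathcal{F}_q^{(1)}=\mathcal{F}_q$ and $\mathcal{F}_q^{(c)}=\{c^{1/q}g:g\in\mathcal{F}_q\}$. Then $\opt_p(\mathcal{F}_q^{(c)})=c^{p/q}\opt_p(\mathcal{F}_q)$: the map sending an algorithm $A$ to the algorithm $\tilde A$ which, on a history of revealed pairs $(x_j,w_j)$, outputs $c^{1/q}$ times what $A$ outputs on the history with each $w_j$ replaced by $c^{-1/q}w_j$, is a bijection on algorithms under which every per-trial error is multiplied by $c^{p/q}$ and the class $\mathcal{F}_q$ turns into $\mathcal{F}_q^{(c)}$; the identity follows by taking suprema and then infima (with the convention $c\cdot\infty=\infty$).

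The crux is an interpolation lemma: if $S=\{(u_1,v_1),\dots,(u_m,v_m)\}$ has distinct $u_i\in[0,1]$ and $J_q[f_S]<1$, then there is a polynomial $P$ with $P(u_i)=v_i$ for all $i$ and $J_q[P]\le1$. I would prove it by first approximating the step function $f_S'$ in $L^q[0,1]$ by a continuous function and integrating, obtaining a $C^1$ function $G$ with $\|G-f_S\|_\infty$ small and $\|G'\|_q$ close to $\|f_S'\|_q$; then applying Weierstrass approximation to $G'$ and integrating again to produce a polynomial $P_0$ with $\|P_0-f_S\|_\infty$ small and $\|P_0'\|_q$ still strictly below $1$; and finally adding the Lagrange correction $P=P_0+\sum_{i=1}^m(v_i-P_0(u_i))\ell_i$, where $\ell_1,\dots,\ell_m$ are the Lagrange basis polynomials for the nodes $u_1,\dots,u_m$. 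This makes $P(u_i)=v_i$ exactly, while $\max_i|v_i-P_0(u_i)|\le\|f_S-P_0\|_\infty$ can be made as small as we wish and $\sum_i\|\ell_i'\|_\infty$ is a fixed constant depending only on the nodes, so the correction perturbs $\|P'\|_q$ by an arbitrarily small amount and $J_q[P]\le1$ is maintained. The strictness $J_q[f_S]<1$ is essential here: when $q>1$ and $J_q[f_S]=1$, strict convexity of $t\mapsto|t|^q$ makes $f_S$ the unique minimizer of $J_q$ among all interpolants of $S$, so no polynomial interpolant would have $q$-action at most $1$; this is exactly why the argument must be routed through the shrunk classes $\mathcal{F}_q^{(c)}$ rather than perturbing a single function of full action and passing to a limit, since a learner need not be continuous in the revealed values. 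I expect this lemma to be the only genuinely technical point.

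Finally I would assemble these. Fix $\gamma\in(0,1)$ and an arbitrary learner $B$. For any $f\in\mathcal{F}_q^{(1-\gamma)}$ and any input sequence $\sigma$, let $S$ be the set of distinct queried points paired with $f$'s values there; by Lemma~\ref{linintmin}, $J_q[f_S]\le J_q[f]\le1-\gamma<1$, so the interpolation lemma produces a polynomial $P\in\mathcal{P}_q$ agreeing with $f$ at every queried point. Then $B$ behaves identically on $(f,\sigma)$ and on $(P,\sigma)$, so $\mathscr{L}_p(B,P,\sigma)=\mathscr{L}_p(B,f,\sigma)$; taking suprema gives $\mathscr{L}_p(B,\mathcal{P}_q)\ge\mathscr{L}_p(B,\mathcal{F}_q^{(1-\gamma)})$. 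Taking the infimum over $B$ and invoking the scaling identity yields $\opt_p(\mathcal{P}_q)\ge\opt_p(\mathcal{F}_q^{(1-\gamma)})=(1-\gamma)^{p/q}\opt_p(\mathcal{F}_q)$, and letting $\gamma\to0^+$ gives $\opt_p(\mathcal{P}_q)\ge\opt_p(\mathcal{F}_q)$ — valid whether $\opt_p(\mathcal{F}_q)$ is finite (by continuity of $c\mapsto c^{p/q}$ at $c=1$) or infinite. Together with the easy direction this proves the theorem.
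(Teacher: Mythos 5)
Your proposal is correct, and its overall skeleton matches the paper's: both arguments route the hard inequality through functions with strict action slack ($J_q<1$), use Weierstrass approximation of a smoothed version of $f_S'$ followed by integration, and finish with a scaling argument (your $\opt_p(\mathcal{F}_q^{(c)})=c^{p/q}\opt_p(\mathcal{F}_q)$ is the paper's $\opt_p(c\mathcal{F}_q)=c^p\opt_p(\mathcal{F}_q)$ in a different parametrization) to let the slack go to $0$, covering the infinite case as well. Where you genuinely diverge is in the key interpolation step. The paper (Lemma \ref{epsilonpolynomial}) achieves exact interpolation by constructing $2^m$ approximating polynomials $P_X$, one for each over/under-shooting pattern $X\subseteq[m]$ of perturbed point sets $S_X$, and then combining them through a convex-combination existence lemma (Lemma \ref{weightedaverage}) and a power-mean bound on the action of weighted averages (Lemma \ref{powermean}). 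You instead take a single approximating polynomial $P_0$ and add the Lagrange correction $\sum_i(v_i-P_0(u_i))\ell_i$, observing that $\sum_i\|\ell_i'\|_\infty$ is a constant determined by the (already fixed) nodes, so the correction perturbs $\|P'\|_q$ by an arbitrarily small amount via the triangle inequality; this is simpler, avoids the exponential family of approximants and the two auxiliary lemmas entirely, and—since your polynomial lands directly in $\mathcal{P}_q$—also removes the paper's need for the separate identity $\opt_p(\mathcal{P}_q')=\opt_p(\mathcal{P}_q)$ and makes the learner-replication step (your "the deterministic learner behaves identically on $(f,\sigma)$ and $(P,\sigma)$") a one-line substitute for the paper's adversary-replication Lemma \ref{actionlessthanone}. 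Your easy direction via $\mathcal{P}_q\subseteq\mathcal{F}_q$ is also cleaner than replicating strategies in both directions. The only nitpicks: in the scaling identity the per-trial raw error scales by $c^{1/q}$ (its $p$th power by $c^{p/q}$), which is clearly what you meant, and your side remark that no polynomial interpolant exists when $J_q[f_S]=1$ is only motivational (and fails in degenerate cases such as all $v_i$ equal), but nothing in the proof depends on it.
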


\subsection{Smooth Functions with Noisy Feedback}\label{agnostic}

The online learning of functions with noisy feedback has previously been studied in the discrete setting for classifiers, in \cite{cesa, Auer, Filmus}, and we extend it here to the context of smooth functions as well. This model would be more applicable than the previously studied standard model. Suppose that a certain phenomenon to be learned cannot be perfectly represented by any function within a class, and the goal is to simply find a function from the class that can model the phenomenon as well as possible. As such, these inaccuracies correspond to the lies made by the adversary in our model. 

Suppose that in the same format as in the standard model, an algorithm $A$ attempts to learn a real-valued function $f: [0, 1] \rightarrow \mathbb{R}$ from a certain class $\mathcal{F}$. In the $t^{\text{th}}$ trial, the algorithm is queried on the value of $f$ at an input $x_t$, whereupon it outputs a prediction $\hat y_t$ and the adversary subsequently reveals the actual value of $f(x_t)$. However, for a fixed positive integer $\eta \ge 1$, the adversary is allowed to lie about the value of $f(x_t)$ for up to $\eta$ trials $t$. We define the error function similarly to our definition in the standard case, as the sum of the $p^{\text{th}}$ powers of the raw errors of each trial $e_t = |\hat y_t - f(x_t)|$. However, as the real value of the function at $x_t$, $f(x_t)$, may differ from the adversary's reported value, the learner may possibly not know the exact value of the error function. Regardless, we are interested in how much accuracy the learner can guarantee in the worst-case scenario in the presence of possible lies from the adversary. 

In the standard model of the learning of smooth functions, the error made by the learner on the first trial, $e_0$, is not counted in the error function, as otherwise the adversary can simply set the function $f$ identically equal to some constant $C$ sufficiently far away from the learner's prediction to generate an arbitrarily large error. As such, the learner needs to know the value of the function for at least one input to guarantee finite error on its first prediction. In this noisy version, where the learner is allowed to lie up to $\eta$ times, getting feedback at one input is not sufficient to guarantee finite error on the next prediction for the learner, as the adversary can simply lie. There is no point in studying the worst-case error of the learner when the adversary can always force infinite error for the learner on its first counted trial. As such, it is necessary that we give the learner more initial rounds of feedback. We establish the following result about the number of initial rounds of adversary feedback necessary for the learner to guarantee finite error on its first real prediction. 

\begin{thm}\label{1.5}
    For any integer $\eta \ge 1$, if incorrect feedback can be given up to $\eta$ times, then at least $2\eta+1$ initial rounds must be thrown out for the learner to guarantee finite error on its first prediction that counts toward the error evaluation.    
\end{thm}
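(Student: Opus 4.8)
The plan is to establish the necessity directly: if at most $2\eta$ initial rounds are discarded, then against any learner the adversary can force infinite error on the first counted round, so $2\eta$ discarded rounds cannot suffice and at least $2\eta+1$ are required. Since giving the learner fewer discarded rounds only weakens it, it is enough to treat the case of exactly $2\eta$ discarded rounds, say rounds $1,\dots,2\eta$; I will show that for every target $M>0$ the adversary has a strategy forcing the counted error on round $2\eta+1$ to be at least $M/2$.

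The construction uses only constant functions. For any constant $c$, the function $f\equiv c$ is absolutely continuous with $f'\equiv 0$, hence $J_q[f]=0\le 1$, so $f\in\mathcal F_q$ for every $q\ge 1$. Fix $M>0$. On the $2\eta$ discarded rounds the adversary queries arbitrary inputs and, ignoring the learner's (uncounted) predictions, reports the value $0$ on $\eta$ of those rounds and the value $M$ on the remaining $\eta$ rounds. The key point is that the resulting transcript is consistent with two genuinely different realities: the true function could be $f\equiv 0$ — in which case precisely the $\eta$ reports equal to $M$ were lies — or the true function could be $f\equiv M$ — in which case precisely the $\eta$ reports equal to $0$ were lies. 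In each case the adversary has told exactly $\eta$ lies, so its global lie budget is spent but not exceeded.

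Now on round $2\eta+1$ the adversary queries an arbitrary input $x$, and the learner, whose entire information is the transcript above, answers with some real number $\hat y$. If $\hat y\le M/2$, the adversary commits to the true function being $f\equiv M$ and truthfully reports $f(x)=M$, incurring error $|\hat y-M|\ge M/2$; if $\hat y>M/2$, it commits to $f\equiv 0$ and reports $f(x)=0$, incurring error $|\hat y|>M/2$. Either branch uses no further lies and leaves a valid function in $\mathcal F_q$ consistent with all feedback, so the error on the first counted round is at least $M/2$; letting $M\to\infty$ shows this error is unbounded, which proves the theorem.

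This argument has no serious obstacle; the only point requiring care is bookkeeping the adversary's lie budget over the \emph{entire} run, i.e.\ checking that in each of the two committed scenarios at most $\eta$ of the adversary's reports across all rounds are false (which holds here since all $\eta$ lies occur during the discarded rounds). If one also wants the matching sufficiency statement, the complementary argument is that $2\eta+1$ discarded rounds let the learner guarantee error at most $2$ on the next round: among the $2\eta+1$ reported pairs $(x_i,r_i)$ at least $\eta+1$ are honest, and each honest pair forces $|f(x')-r_i|\le|x'-x_i|^{1-1/q}\le 1$ for the next query $x'$ by Hölder's inequality together with $J_q[f]\le 1$; the learner predicts any value $y^\ast$ lying in at least $\eta+1$ of the intervals $[r_i-1,r_i+1]$ (such a value exists, since the true $f(x')$ lies in the $\ge\eta+1$ honest ones), and a pigeonhole overlap between those $\ge\eta+1$ indices and the $\ge\eta+1$ honest indices yields $|y^\ast-f(x')|\le 2$.
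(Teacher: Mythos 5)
Your proposal is correct and takes essentially the same route as the paper: your necessity argument (report $0$ on $\eta$ discarded rounds and an arbitrarily large value on the other $\eta$, then commit to whichever constant function lies far from the learner's first counted prediction, spending exactly $\eta$ lies either way) is the paper's construction almost verbatim. Your sufficiency sketch differs only cosmetically — you pigeonhole over the intervals $[r_i-1,r_i+1]$ to get error at most $2$, whereas the paper predicts the median $v_{\eta+1}$ of the reported values and confines $f$ to $[v_{\eta+1}-1,v_{\eta+1}+1]$, giving error at most $1$ — but both rest on the same two facts, namely that at least $\eta+1$ of the $2\eta+1$ reports are honest and that any $f\in\mathcal F_q$ has range at most $1$.
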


Accordingly, let the error function calculate the sum of the $p^{\text{th}}$ powers of the raw errors starting from the $2\eta+2^{\text{nd}}$ trial. In particular, for a fixed learning algorithm $A$ operating on a finite sequence of inputs $\sigma = (x_0, x_1, \ldots, x_m) \in [0, 1]^{m+1}$ and fixed function $f \in \mathcal{F}$, define the error function for the noisy model to be \[\mathscr L_{p, \eta}^{\text{ag}}(A,f,\sigma) = \sum_{t=2\eta+1}^m e_t^p = \sum_{t=2\eta+1}^m|\hat y_t-f(x_t)|^p.\] 

Similar to the definition of the standard model, we denote the worst-case error for a fixed algorithm $A$ over a class $\mathcal{F}$ of functions $f: [0, 1] \rightarrow \mathbb{R}$ as \[\mathscr L_{p, \eta}^{\text{ag}}(A,\mathcal F)=\displaystyle\sup_{f \in \mathcal F,\sigma \in \cup_{m \in \mathbb Z^+}[0,1]^m}\mathscr L_{p, \eta}^{\text{ag}}(A,f,\sigma).\] 

Finally, we define the worst-case error for the best possible learning algorithm over a class $\mathcal{F}$, where the adversary is allowed to lie up to $\eta$ times: \[\opt_{p, \eta}^{\text{ag}}(\mathcal F)=\displaystyle\inf_A \mathscr L_{p, \eta}^{\text{ag}}(A,\mathcal F).\] 

We first establish a characterization for when $\opt_{p, \eta}^{\text{ag}}(\mathcal F_q)$ is finite.

\begin{thm}\label{agnosticupperbound}
    For any integer $\eta \ge 1$, the value of $\optag_{p, \eta} (\mathcal{F}_q)$ is finite if and only if $p > 1$ and $q > 1$. 
\end{thm}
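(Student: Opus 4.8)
The plan is to prove the two directions of the equivalence separately: the direction ``finite $\Rightarrow p,q>1$'' is a short reduction to the standard model, while ``$p,q>1 \Rightarrow$ finite'' is the substantial part and is proved by analyzing a robust-interpolation learner.

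\textbf{The contrapositive of one direction: if $p=1$ or $q=1$ then $\optag_{p,\eta}(\mathcal F_q)=\infty$.} I would show $\optag_{p,\eta}(\mathcal F_q)\ge \opt_p(\mathcal F_q)$, which suffices since Kimber and Long proved $\opt_p(\mathcal F_q)=\infty$ whenever $p=1$ or $q=1$. Fix a noisy-model learner $A$ and a target $M$; by definition of $\opt_p$ there is a standard-model adversary strategy that, taking some free sample $(x^*,v^*)$ in round $0$, forces error at least $M$ against every learner. Have the noisy adversary use its $2\eta+1$ discarded initial trials to query $x^*$ and report $v^*$ truthfully each time: this is consistent with any $f\in\mathcal F_q$ with $f(x^*)=v^*$ and places $A$ in exactly the information state of a standard-model learner right after receiving the free sample $(x^*,v^*)$. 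From trial $2\eta+1$ onward the adversary tells the truth and runs the above standard-model strategy against the learner induced by $A$; since the counted rounds of the noisy game are a re-indexed copy of the standard game, it forces $\mathscr L^{\text{ag}}_{p,\eta}(A,f,\sigma)\ge M$. As $M$ and $A$ were arbitrary, $\optag_{p,\eta}(\mathcal F_q)=\infty$.

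\textbf{If $p>1$ and $q>1$ then $\optag_{p,\eta}(\mathcal F_q)<\infty$.} Write $C=\opt_p(\mathcal F_q)<\infty$ (\Cref{finite}). I would analyze the following ``robust interpolation'' learner. Using H\"older's inequality, every $f\in\mathcal F_q$ satisfies $|f(x)-f(y)|\le|x-y|^{1-1/q}\le 1$; call a reported point $(x_i,\tilde v_i)$ \emph{compatible} with a candidate value $w$ at the query point $x_t$ if $|w-\tilde v_i|\le|x_t-x_i|^{1-1/q}$. On trial $t$ the learner predicts a $\hat y_t$ maximizing the number of compatible reported points, breaking ties by taking the minimum-$q$-action interpolant of a maximum compatible subset (equivalently, one may take the midpoint of the version-space projection at $x_t$). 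Two facts drive the argument. First, every counted error is bounded by an absolute constant: the true value $f(x_t)$ is compatible with all $\ge t-\eta$ truthful reports, so the maximum is $\ge t-\eta$; since $t\ge 2\eta+1$, the learner's compatible set and $f(x_t)$'s compatible set (each of size $\ge t-\eta$ among $t$ points) share a point $(x_j,\tilde v_j)$, whence $|\hat y_t-f(x_t)|\le 2|x_t-x_j|^{1-1/q}\le 2$ and $e_t^p\le 2^p$. Second, only $O(\eta)$ counted rounds have large error: track the potential $\Phi_t=J_q[f_{T_t}]$ where $T_t$ is the set of truthful reports so far; by \Cref{linintmin} it is nondecreasing with $\Phi_t\le J_q[f]\le 1$. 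On a truthful round where the learner's prediction coincides with the minimum-action interpolant of the truthful reports, the situation is exactly the standard model, so (by the same telescoping/convexity accounting that proves \Cref{finitebound}) the total error over such rounds is $O(C)$. The remaining rounds are those where the learner's maximum compatible subset currently includes one of the $\le\eta$ corrupted reports (or the round is itself a lie); I would show that each corrupted report can be charged for only $O(1)$ such rounds, because a corrupted value far enough from the truth to cause a large error forces any compatible subset containing it to shed a constant fraction of the neighboring truthful reports (it ``spends too much action''), so after one more truthful reveal near it the corrupted report is permanently excluded from the relevant maximum compatible subsets. Summing, $\optag_{p,\eta}(\mathcal F_q)=O(C+\eta 2^p)<\infty$.

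The main obstacle is the second fact: making the ``charging'' precise so that a single corrupted report cannot induce large errors on arbitrarily many truthful rounds before being quarantined, and simultaneously controlling, on the ``good'' rounds, how far the robust prediction can drift from the truthful minimum-action interpolant so that the potential telescoping still closes up to an $O(\eta)$ slack. If this direct bookkeeping proves too delicate, a fallback is a weighted-median-of-experts reduction, running one copy of the optimal standard-model learner for each of the $O(\binom{t}{\eta})$ hypotheses about the positions of the lies and aggregating predictions through a sleeping-experts potential; this yields a worse but still finite bound and sidesteps the need to track which individual reports are corrupted.
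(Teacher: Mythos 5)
Your first direction is fine and is exactly the paper's: the inequality $\optag_{p,\eta}(\mathcal F_q)\ge\opt_p(\mathcal F_q)$ (your simulation with $2\eta+1$ truthful repeats of the free sample is a correct way to justify it), combined with Kimber--Long's $\opt_p(\mathcal F_q)=\infty$ for $p=1$ or $q=1$. The gap is in the other direction. Your ``first fact'' (every counted error is at most $2$, hence $e_t^p\le 2^p$, because two compatible sets of size $\ge t-\eta$ among $t\ge 2\eta+1$ points intersect) is correct but cannot give finiteness by itself, and the entire burden falls on your ``second fact,'' which you do not prove and which, as stated, is doubtful: a corrupted report that is only slightly inconsistent with the truth need never be ``quarantined'' --- it can remain inside maximum compatible subsets on every subsequent round, so there is no single truthful reveal after which it is permanently excluded, and the per-round perturbations it causes have no evident summable bound. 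Moreover, even on rounds where the maximum compatible subset is purely truthful, your prediction rule (maximize compatibility, tie-break by a min-action interpolant of some maximum compatible subset) does not in general output $f_{T_t}(x_t)$ for the truthful set $T_t$, so the telescoping of $J_q[f_{T_t}]$ that underlies the standard-model bound (Lemmas \ref{linintmin} and \ref{partone}) does not apply verbatim to those rounds; the ``drift'' you mention is precisely what is unaccounted for. The fallback is also not a proof: the number of lie-position hypotheses $\binom{t}{\eta}$ grows without bound with the (adversarially chosen) number of trials, and an experts-style aggregation of real-valued predictions under powered absolute loss does not obviously yield a total-loss bound that is uniform over all horizons, which is what finiteness of $\optag_{p,\eta}$ requires.

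For comparison, the paper avoids identifying corrupted reports altogether. The learner simulates one optimal standard-model algorithm as if no lies occur, and restarts (discarding all accumulated feedback) as soon as something impossible for a lie-free transcript happens: the apparent cumulative error exceeds $\opt_p(\mathcal F_q)$ (finite for $p,q>1$ by Corollary \ref{twopointtwelve}), or a revealed or predicted value leaves the window $[v_{\eta+1}-1,v_{\eta+1}+1]$ guaranteed by Theorem \ref{1.5}. Each restart certifies at least one lie since the previous restart, so there are at most $\eta+1$ stages; within each stage the apparent error is at most $\opt_p(\mathcal F_q)$ plus a bounded overshoot, and actual error can differ from apparent error only on the at most $\eta$ lied rounds, each contributing a bounded amount because both the true value and the learner's predictions stay in a window of width $2$. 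Summing over $O(\eta)$ stages gives finiteness directly; the fully detailed version of this bookkeeping (for $p,q\ge 2$) is carried out in Lemma \ref{upper43}. If you want to salvage your write-up, the cleanest fix is to replace your robust-interpolation analysis by such a restart argument rather than trying to make the charging scheme precise.
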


For $p, q \ge 2$ and any $\eta \ge 1$, we show that the noisy worst-case error is precisely on the order of $\eta$.

\begin{thm}\label{agnosticlowerbound}
    For any $\eta \ge 1$, $p, q \ge 2$ we have $\optag_{p, \eta} (\mathcal{F}_q) = \Theta(\eta)$.
\end{thm}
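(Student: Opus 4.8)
I would establish the matching bounds $\optag_{p,\eta}(\mathcal F_q)=\Omega(\eta)$ and $\optag_{p,\eta}(\mathcal F_q)=O(\eta)$ separately. The lower bound comes from an adversary that spends each of its $\eta$ lies to force a constant amount of error, all concentrated at a single input. The upper bound comes from running a lie-robust variant of the optimal non-noisy learner — which exists since $\opt_p(\mathcal F_q)=1$ for $p,q\ge 2$ by Kimber and Long \cite{kl} — together with a potential argument showing that each lie inflates the cumulative error by only $O(1)$.

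\emph{Lower bound.} Fix $x^\ast=1/2$ and the two functions $f_0\equiv 0$ and $f_1=f_{\{(0,0),(1/2,1/2),(1,0)\}}$ (the tent of height $1/2$); both lie in $\mathcal F_q$ for every $q\ge 1$, since $J_q[f_1]=1$, and both satisfy $f(0)=0$. During the $2\eta+1$ discarded rounds the adversary queries $x=0$ and answers $0$ truthfully, which pins $f(0)=0$ but reveals nothing about $f(1/2)$. In the next $\eta$ (counted) rounds it queries $x^\ast$ and answers $1/2$ every time; after the learner has produced predictions $\hat y_1,\dots,\hat y_\eta$, the adversary commits to $f=f_1$ (all answers honest, $0$ lies) or to $f=f_0$ (all $\eta$ answers lies, exactly the budget), whichever inflicts more error. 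The learner's counted error is then $\sum_{j=1}^\eta|\hat y_j-f(x^\ast)|^p$ with $f(x^\ast)\in\{0,\tfrac{1}{2}\}$; since $|t|^p+|t-\tfrac{1}{2}|^p\ge 2\cdot 4^{-p}$ for all $t\in\mathbb R$ by convexity of $s\mapsto|s|^p$, the larger of the two options is at least $\eta\cdot 4^{-p}$. Note the learner never attains certainty during these $\eta$ rounds (all the answers could be lies, as $\eta$ does not exceed the budget), so no prediction can evade this bound. Hence $\optag_{p,\eta}(\mathcal F_q)=\Omega(\eta)$ for fixed $p$ (in fact for all $p,q\ge 1$).

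\emph{Upper bound.} After $t$ rounds of feedback $(x_i,w_i)_{i<t}$ the learner would maintain the set $G_t$ of all $g\in\mathcal F_q$ that disagree with at most $\eta$ of these reveals, and predict the midpoint $\hat y_t$ of the smallest interval $I_t$ containing $V_t:=\{g(x_t):g\in G_t\}$. Because the true $f$ disagrees only with its at most $\eta$ lies, $f\in G_t$ and $f(x_t)\in V_t$, so $e_t\le |I_t|/2$; moreover $I_t$ is bounded because every $g\in G_t$ has range at most $1$ and passes through a revealed point. The heart of the argument is a nonnegative potential $\Phi_t$ — a lie-robust version of the "envelope-area" potential that underlies the proof of $\opt_p(\mathcal F_q)=1$ — with $\Phi_0=O(1)$, such that a truthful reveal of $f(x_t)$ decreases $\Phi$ by at least $(|I_t|/2)^p\ge e_t^p$, whereas a lie increases $\Phi$ by at most an absolute constant $C$. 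Summing over all rounds yields $\sum_t e_t^p\le\Phi_0+C\eta=O(\eta)$, which with the lower bound gives $\optag_{p,\eta}(\mathcal F_q)=\Theta(\eta)$ for $p,q\ge 2$.

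\emph{Main obstacle.} The difficulty is concentrated in the lie analysis of the upper bound. A consistent lie — one whose reported value is realized by some $g\in\mathcal F_q$ passing through the remaining points — creates no detectable contradiction yet can bias the learner's predictions, so one cannot simply restart the non-noisy algorithm upon seeing an inconsistency. I expect the real work to be: (i) choosing $\Phi_t$ so that it genuinely dominates $|I_t|^p$; (ii) verifying that a truthful round still produces the $\Omega(e_t^p)$ drop, which amounts to pushing the non-noisy analysis of \cite{kl} through the (wider) $\eta$-robust envelope; and (iii) bounding the damage of a single lie — intuitively a lie can widen the robust envelope only in a neighbourhood of the lied-about input, and only until enough corroborating truthful feedback arrives, with the range bound on $f$ capping the total cost at $O(1)$. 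The remaining ingredients — the bounded-range facts and $\opt_p(\mathcal F_q)=1$ for $p,q\ge 2$ — are already available.
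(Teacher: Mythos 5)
Your lower bound is correct and is essentially the paper's argument in a slightly different costume: pin $f(0)=0$ during the discarded rounds, then repeatedly query a single point while keeping two candidate functions alive, and commit at the end to whichever choice hurts more, using convexity of $s\mapsto |s|^p$ to show the two options cannot both be small. The paper does this at $x=1$ with candidate values $\pm 1$ over $2\eta+1$ counted rounds, which yields the clean uniform bound $\optag_{p,\eta}(\mathcal F_q)\ge 2\eta+1$; your tent construction gives $\eta\cdot 4^{-p}$, a constant that decays in $p$. Since the theorem treats $p,q$ as fixed and the asymptotics are in $\eta$, that is acceptable, just weaker.

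The upper bound, however, has a genuine gap: the entire argument rests on a "lie-robust envelope" potential $\Phi_t$ with two strong properties (every truthful reveal decreases $\Phi$ by at least $e_t^p$; every lie increases it by at most an absolute constant), and this potential is never constructed — you yourself list its construction and both verifications as the "real work." The consistent-lie case you flag is exactly the crux: a lie that is realizable by some $g\in\mathcal F_q$ can tilt the $\eta$-robust version space so that many subsequent truthful rounds carry inflated error, and showing that this aggregate inflation is $O(1)$ per lie is precisely what is missing; it is not evident that a potential with the stated properties exists. The paper avoids needing any such object. It partitions $[0,1]$ into four quarter-intervals and shows, using $q\ge 2$, that once $2\eta+1$ reports fall in one quarter-interval the median report certifies $f$ to lie in a band of width $1$ on that interval. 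It then runs an optimal standard-model algorithm in stages, restarting (and discarding all past feedback) whenever a detectable inconsistency occurs: a report or a mimicked prediction leaving the certified band, or perceived cumulative error exceeding $\opt_p(\mathcal F_q)=1$. Every restart is witnessed by a lie, so there are at most $\eta+1$ stages, each contributing $O(1)$ perceived error, and the at most $\eta$ lied-upon trials each distort perceived versus actual error by at most a constant because of the band; bookkeeping gives $\optag_{p,\eta}(\mathcal F_q)\le 12\eta+6$. To complete your route you would have to actually exhibit $\Phi_t$ and prove both properties, or else switch to a restart-style argument of this kind, where undetected lies are paid for by bounded per-trial error rather than by a potential.
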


\subsection{Order of Results}\label{results}
In Section \ref{standard}, we work with the standard single-variable model, proving Theorem \ref{finitebound} and its corollary Theorem \ref{finite}. In Section \ref{2}, we study the online learning of smooth polynomials, proving Theorem \ref{polynomialconjecture}. In Section \ref{3}, we focus on the noisy learning scenario, defining its setup and establishing some preliminary bounds on $\optag_{p, \eta} (\mathcal{F}_q)$. In Section \ref{discussion}, we discuss open problems, conjectures, and future directions of research. 


\section{An Upper Bound on $\opt_p(\mathcal{F}_q)$ for $p, q \in (1, 2)$}\label{standard}

In this section, we prove that for all $\delta, \epsilon \in (0, 1)$, we have $\opt_{1+\delta}(\mathcal{F}_{1+\epsilon}) = O(\min(\delta, \epsilon)^{-1})$. To prove this upper bound, we will use the LININT learning algorithm first defined by \cite{kl}, which has previously been used to prove various upper bounds on $\opt_p(\mathcal{F}_q)$ in \cite{kl}, \cite{long}, and  \cite{geneson}. Specifically, we will first prove that $\mathscr L_{1+\epsilon}(\text{LININT},\mathcal F_{1+\epsilon}) = O(\epsilon^{-1})$ for $\epsilon \in (0,1)$. We will then use this to bound $\opt_{1+\delta}(\mathcal{F}_{1+\epsilon})$ for all $\epsilon, \delta \in (0, 1)$. 

The main idea for the first part of our proof that $\mathscr L_{1+\epsilon}(\text{LININT},\mathcal F_{1+\epsilon}) = O(\epsilon^{-1})$ is similar to the main idea of previous proofs of upper bounds on $\mathscr L_p(\text{LININT},\mathcal F_{q})$. Specifically, we will compare changes in $J_{1+\epsilon}[f_S]$ as new points are added to the input-output set $S$ to powers of $|y-f_S(x)|$, the raw absolute errors generated by the corresponding rounds, culminating in Lemma \ref{partone}. This approach is similar to that of Lemma 10 in \cite{kl} and Lemma 2.10 in \cite{geneson}. The main difference is that we will use novel inequality tactics, such as the binomial series expansion, to prove stronger inequalities than the inequalities previously proved. In the second part of our proof, we establish Lemma \ref{twopointsix}, a novel inequality that holds for an arbitrary number of rounds. Our result then follows upon combining Lemmas \ref{partone} and \ref{twopointsix}. 

From this point onwards, we assume without loss of generality that the learning algorithm is never queried on the same input more than once, as the algorithm can always guarantee zero error upon being queried on the same input the second time. First, we prove a modification of the inequality that is Corollary 2.9 from Geneson and Zhou \cite{geneson}. In particular, we specify that $0 < a \le b < 1$ and expand the domain of $x$ that the inequality works for from all $x \notin (a, b)$ to all $x \notin (-a, a)$, at the cost of weakening the inequality by a constant factor. 

\begin{lem}\label{2qoutcor}
For reals $0 < a \le b < 1$ such that $a+b \le 1$, $q \in (1,2)$, and any $|x| \ge a$, we have \[ a\left|\frac{x}{a}+1\right|^q+b\left|\frac{x}{b}-1\right|^q-(a+b) \ge \frac{(q-1)|x|^q}{3}. \]
\end{lem}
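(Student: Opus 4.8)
The plan is to derive this from Corollary 2.9 of \cite{geneson}, which (in the notation suggested by the excerpt) gives a lower bound of the form $a|x/a+1|^q + b|x/b-1|^q - (a+b) \ge c(q-1)|x|^q$ valid for $x \notin (a,b)$ for some absolute constant $c$. The task is twofold: first, the original statement presumably requires $x$ to lie outside the interval $(a,b)$, whereas we now want it to hold for all $x$ with $|x| \ge a$; second, we must track the constant so that $\frac{q-1}{3}$ suffices after the weakening. I would begin by reducing to the case $x > 0$ is impossible to assume directly, so instead I split into cases based on the sign and magnitude of $x$.

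First I would handle $x \le -a$. Here $x/a + 1 \le 0$, so $|x/a+1| = -x/a - 1$, and $x/b - 1 < 0$ as well. Writing $x = -s$ with $s \ge a$, the left side becomes $a|s/a - 1|^q + b|s/b+1|^q - (a+b)$, which is exactly the expression one gets from the original inequality with $x$ replaced by $s \ge a$ and the roles of the two ``$\pm 1$'' terms swapped — i.e. it is the same function with $a,b$ interchanged in a symmetric way. Since $s \ge a$ and the hypothesis $a \le b$, $a + b \le 1$ is symmetric, the original Corollary 2.9 (applied with the point $s$, which satisfies $s \ge a$, hence either $s \in [a,b]$ or $s > b$, and in the former case one checks the bound directly or uses convexity) yields at least $c(q-1)s^q = c(q-1)|x|^q$. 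The remaining case is $a \le x$: if $x \ge b$ this is immediately covered by the original corollary; the genuinely new subcase is $a \le x < b$, i.e. $x$ lands in the ``forbidden'' interval. The main obstacle is precisely this subcase.

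For $a \le x < b$, I would argue directly. The term $b|x/b - 1|^q = b^{1-q}(b-x)^q$ is small but nonnegative, so it suffices to show $a|x/a + 1|^q - (a+b) \ge \frac{(q-1)}{3}|x|^q$, or better, to keep the small positive term. Since $x \ge a$ we have $x/a + 1 \ge 2$, so $a|x/a+1|^q = a^{1-q}(x+a)^q \ge a^{1-q}(2x)^q$ is already of order $x^q / a^{q-1} \ge x^q$ (as $a < 1$, $q > 1$ gives $a^{1-q} \ge 1$)... but we must subtract $a + b \le 1$, and $|x|^q$ could be as small as $a^q$, which may be tiny. The resolution is that when $x$ is small, $b - x$ need not be small, but more usefully: I would use the substitution $x = a + t$ with $0 \le t < b - a$ and expand via the binomial/mean-value estimate $a^{1-q}(2a+t)^q - (a+b) = a^{1-q}\big((2a+t)^q - a^q\big) + \big(a^{1-q}a^q - a\big) - b$. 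The first bracket is $\ge a^{1-q} \cdot q (2a)^{q-1} t$ by convexity (derivative of $u \mapsto u^q$ at $2a$, wait — at the left endpoint), giving $\ge q 2^{q-1} t$; the term $a^{1-q}a^q - a = 0$; and then I need $q 2^{q-1} t + a^{1-q}a^q \ge \ldots$. Honestly the cleanest route is: show $a|x/a+1|^q - (a+b) \ge \frac{q-1}{3}|x|^q$ by noting $|x/a+1|^q \ge 1 + q\cdot(x/a) + \binom{q}{2}(x/a)^2$ only fails for $q<2$, so instead use $|x/a+1|^q = (1 + x/a)^q \ge 1 + (x/a)^q$ (since $(1+u)^q \ge 1 + u^q$ for $u \ge 0$, $q \ge 1$), hence $a|x/a+1|^q \ge a + a^{1-q}|x|^q \ge a + |x|^q$; then the left side is $\ge a + |x|^q + b|x/b-1|^q - a - b = |x|^q + b|x/b-1|^q - b$. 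Since $0 \le x/b < 1$, $b|x/b-1|^q = b(1-x/b)^q \ge b(1 - qx/b) = b - qx \ge b - q a$ — not obviously enough. I will instead keep $|x|^q - (b - b(1-x/b)^q)$ and bound $b - b(1-x/b)^q = b(1-(1-x/b)^q) \le b \cdot q(x/b) = qx$ using $1-(1-u)^q \le qu$; combined with $x < b \le 1$ and $q < 2$, and comparing against the available $|x|^q$ plus the slack from using $(1+u)^q \ge 1 + u^q + (\text{something})$, I would tighten the first step to recover the full $\frac{q-1}{3}$ factor. This last bookkeeping — squeezing the constant $\frac13$ out of the forbidden-interval case — is the step I expect to require the most care, and where the ``binomial series expansion'' tactic mentioned in the section introduction will do the real work.
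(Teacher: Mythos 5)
You have the right case decomposition --- the genuinely new content is $x\in[a,b)$, with $x\le -a$ and $x\ge b$ covered directly by Geneson--Zhou's Corollary 2.9 (it applies to all $x\notin(-a,b)$, so the symmetry detour for $x\le -a$ is unnecessary, and as written it merely recreates a forbidden-interval subcase that you again leave to ``checks directly or uses convexity''). The genuine gap is that the case $a\le x<b$ is never actually proved, and the specific chain you commit to cannot be repaired by bookkeeping. From $(1+u)^q\ge 1+u^q$ you get $\mathrm{LHS}\ge |x|^q+b(1-x/b)^q-b$, but $b\bigl(1-(1-x/b)^q\bigr)$ really is of order $qx$, so you would need roughly $\bigl(1-\tfrac{q-1}{3}\bigr)x^q\ge qx$, i.e.\ $x^{q-1}\ge \tfrac{3q}{4-q}>1$, impossible for $x<1$. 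Concretely, at $q=\tfrac32$, $a=\tfrac1{10}$, $x=\tfrac25$, $b=\tfrac12$ your lower bound $x^q+b(1-x/b)^q-b\approx-0.20$ is negative while the target $\tfrac{q-1}{3}x^q\approx 0.04$ is positive. The failure is structural: $(1+u)^q\ge 1+u^q$ discards the linear term $qu$, and that term (equal to $qx$ after multiplying by $a$) is exactly what must absorb the loss $b-b(1-x/b)^q\le qx$ from the second summand. (There is also a sign slip in your first attempt: for $x\ge a$ one has $x+a\le 2x$, so $(x+a)^q\ge(2x)^q$ is false.) Since you explicitly defer this step, the proposal stops short of the heart of the lemma.

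For comparison, the paper closes this case with a short monotonicity argument: fix $a,b$ and set $f(x)=a(1+x/a)^q+b(1-x/b)^q-(a+b)-\tfrac{q-1}{3}x^q$ on $[a,b]$. Then $f'(x)\ge q2^{q-1}-q-\tfrac{q(q-1)}{3}>0$ for $q\in(1,2)$, so it suffices to verify $f(a)\ge 0$; there the tangent-line bound $b(1-x/b)^q\ge b-qx$ --- your own inequality, but used in the direction that keeps the linear term --- together with $a^q\le a$ gives $f(a)\ge a(2^q-q-1)-\tfrac{(q-1)a}{3}\ge 0$, since $2^q-q-1\ge\tfrac{q-1}{3}$ on $(1,2)$. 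Retaining the $qx$ term and reducing to the single endpoint $x=a$ are the two ideas your sketch is missing.
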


\begin{proof}
    Note that for all $x \in (-\infty, -a] \cup [b, \infty)$, the inequality directly follows from Corollary 2.9 from Geneson and Zhou \cite{geneson}. Therefore, we only have to deal with the case where $x \in [a, b)$. For fixed $a$ and $b$, let 
    \begin{align*}
        f(x) &= a\left|\frac{x}{a}+1\right|^q+b\left|\frac{x}{b}-1\right|^q-(a+b) - \frac{(q-1)|x|^q}{3} \\
        &= a \left(\frac{x}{a}+1 \right)^q + b \left(1- \frac{x}{b}\right)^q -(a+b) - \frac{(q-1)x^q}{3}
    \end{align*}
    when $x \in [a, b]$. We prove that $f'(x) > 0$ for all $x \ge a$, so that it suffices to check that $f(a) \ge 0$. Indeed,
    \begin{align*}
        f'(x) &= q \left(\frac{x}{a} + 1\right)^{q-1}-q \left(1 - \frac{x}{b}\right)^{q-1} - \frac{q(q-1) x^{q-1}}{3} \\
        &\ge q \cdot 2^{q-1} - q - \frac{q(q-1)}{3} = q \left(2^{q-1} - 1 - \frac{q-1}{3} \right) > 0
    \end{align*} for $q \in (1, 2)$, when $a \le x \le b$. Consider the function $g(x) = b \left(1 - \frac{x}{b} \right)^q$. Note that $g'(x) = -q \left(1 - \frac{x}{b}\right)^{q-1} \ge -q$ when $x \in [0, b]$. As $g(0) = b$, we have $g(x) \ge b - qx$ for all $x \in [0, b]$. As such, $b \left(1 - \frac{a}{b} \right)^q = g(a) \ge b-qa$. Now, we have 

    \begin{align*}
        f(a) &= a \cdot 2^q + b \left(1 - \frac{a}{b} \right)^q - (a+b) - \frac{(q-1) a^q}{3} \ge a \cdot 2^q + (b - qa) - (a+b) - \frac{(q-1)a^q}{3} \\
        &\ge a \cdot 2^q + (b - qa) - (a+b) - \frac{(q-1)a}{3} = a(2^q - q - 1) - \frac{(q-1)a}{3} \\
        &\ge a\left(\frac{q-1}{3} \right) - \frac{(q-1)a}{3} = 0.
    \end{align*}
    Hence, we have that $f(x) \ge 0$ for all $x \in [a, b)$, as desired. 
\end{proof}

We now prove a modification of another inequality, Lemma 2.7, from \cite{geneson}. Specifically, under the assumption that $a \le b$, we will improve the factor of $a+b$ to a factor of $a$ on the denominator of the right hand side, at the cost of a constant factor, which is a significant improvement from the bound in \cite{geneson} when $a$ is significantly smaller than $b$. However, we will restrict the domain from $x \in (-a, b)$ to $x \in (-a, a)$, given that we have already dealt with the case where $x \in [a, b)$ in Lemma \ref{2qoutcor}. 

\begin{lem}\label{2qin}
For reals $0 < a \le b$, $q \in (1,2)$, and $x \in (-a,a)$, we have \[ a\left(1+\frac{x}{a}\right)^q+b\left(1-\frac{x}{b}\right)^q-(a+b) \ge \frac{q(q-1)}{3a} \cdot x^2. \]
\end{lem}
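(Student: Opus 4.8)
The plan is to reduce this to a clean single-variable calculus problem, just as in the proof of Lemma~\ref{2qoutcor}. Fix $a, b$ with $0 < a \le b$ and $q \in (1,2)$, and define
\[
    h(x) = a\left(1+\frac{x}{a}\right)^q + b\left(1-\frac{x}{b}\right)^q - (a+b) - \frac{q(q-1)}{3a}\, x^2
\]
on $x \in (-a, a)$; the goal is to show $h(x) \ge 0$ there. Note $h(0) = 0$, so it suffices to control the sign of $h'$ (or $h''$) around the origin. Computing, $h'(x) = q(1+x/a)^{q-1} - q(1-x/b)^{q-1} - \frac{2q(q-1)}{3a} x$, so $h'(0) = 0$ as well, and then $h''(x) = \frac{q(q-1)}{a}(1+x/a)^{q-2} + \frac{q(q-1)}{b}(1-x/b)^{q-2} - \frac{2q(q-1)}{3a}$. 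So everything reduces to showing $h''(x) \ge 0$ on $(-a,a)$, since $h(0) = h'(0) = 0$ and nonnegative second derivative on an interval around $0$ forces $h \ge 0$ there.

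For the second-derivative bound, factor out $q(q-1)/a > 0$: it suffices to show
\[
    \left(1+\frac{x}{a}\right)^{q-2} + \frac{a}{b}\left(1-\frac{x}{b}\right)^{q-2} \ge \frac{2}{3}
\]
for $|x| < a$. The first term $(1+x/a)^{q-2}$ is a decreasing function of $x$ on $(-a, a)$ (since $q - 2 < 0$), so on this interval it is at least its value at $x = a$, namely $2^{q-2}$. Since $q \in (1,2)$ gives $2^{q-2} \in (1/2, 1)$, and in particular $2^{q-2} > 2/3$ once... actually one must be careful: $2^{q-2} > 2/3 \iff q - 2 > \log_2(2/3) \approx -0.585 \iff q > 1.415$, which does not hold for all $q \in (1,2)$. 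So the crude bound $2^{q-2}$ alone is not always enough, and the second term (which is nonnegative) must be used to close the gap for small $q$. The cleanest fix is to keep both terms: since $|x| < a \le b$ we have $1 - x/b \in (0, 2)$, and one can lower bound $(1-x/b)^{q-2}$ using convexity or a linear estimate. Alternatively, observe that when $x \le 0$ the first term $(1+x/a)^{q-2} \ge 1 > 2/3$ already, so the inequality is trivial; and when $0 \le x < a$, the first term is decreasing but the argument $1 - x/b$ of the second term decreases too — here I would instead bound directly: for $0 \le x < a \le b$, $(1+x/a)^{q-2} + \frac{a}{b}(1-x/b)^{q-2} \ge (1+x/a)^{q-2}$, and separately note that since $t \mapsto t^{q-2}$ is convex, $(1+x/a)^{q-2} \ge 1 + (q-2)\frac{x}{a}$; then check $1 + (q-2)\frac{x}{a} \ge 2/3$ reduces to $\frac{x}{a} \le \frac{1}{3(2-q)}$, which is automatic when $2 - q \le 1/3$, i.e. $q \ge 5/3$, but not for small $q$.

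Since the linear estimates leak for $q$ near $1$, the robust route is to handle the two terms together via the tangent-line trick at the right point, or — simplest of all — just to note that the left side $\left(1+\frac{x}{a}\right)^{q-2} + \frac{a}{b}\left(1-\frac{x}{b}\right)^{q-2}$ is a \emph{convex} function of $x$ on $(-a,a)$ (each summand is a composition of the convex decreasing function $t \mapsto t^{q-2}$ with an affine function, hence convex), so its minimum on the closed interval $[-a, a]$ is attained at an interior critical point or an endpoint; evaluating at $x = -a$ gives $+\infty$, at $x = a$ gives $2^{q-2} + \frac ab (1 - a/b)^{q-2} \ge 2^{q-2} + \frac ab \cdot 1$ (using $1 - a/b \le 1$ and $q - 2 < 0$, so $(1-a/b)^{q-2} \ge 1$), and one checks $2^{q-2} + \frac{a}{b} \ge \frac 23$; since $a/b$ can be tiny this still needs $2^{q-2} \ge 2/3$, so the \emph{true} minimum must be at the critical point. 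Setting the derivative to zero and substituting back gives an expression in $q$ and $a/b$ alone, whose minimum over $q \in (1,2)$ and $a/b \in (0,1]$ I expect to compute to be comfortably above $2/3$ — this critical-point computation is the main obstacle, but it is a one-variable optimization after eliminating the ratio, and the constant $3$ in the statement is surely chosen with exactly this slack in mind. Once $h'' \ge 0$ on $(-a,a)$ is established, integrating twice from $0$ using $h(0) = h'(0) = 0$ finishes the proof.
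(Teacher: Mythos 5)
Your reduction to showing $h''\ge 0$ on $(-a,a)$ (with $h(0)=h'(0)=0$) is logically sound, but the inequality you would need, namely
\[
\left(1+\frac{x}{a}\right)^{q-2}+\frac{a}{b}\left(1-\frac{x}{b}\right)^{q-2}\ \ge\ \frac{2}{3}\quad\text{for all }x\in(-a,a),
\]
is simply false in general, so the whole route collapses. Take $q$ close to $1$, $b\gg a$, and $x$ close to $a$: the first term tends to $2^{q-2}$, which approaches $\tfrac12$, while the second term is of order $a/b$ and can be made arbitrarily small (concretely, $q=1.1$, $a=0.01$, $b=0.99$, $x=0.0099$ gives roughly $0.54+0.01<\tfrac23$, so $h''(x)<0$ there). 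You sensed this ("the linear estimates leak for $q$ near $1$"), but the final fix you propose does not save it: the function you minimize is essentially decreasing on $[-a,a]$ in this regime, so its minimum sits at the right endpoint (not at an interior critical point), and the infimum over $q\in(1,2)$, $a/b\in(0,1]$, $x\in(-a,a)$ is $\tfrac12$, not "comfortably above $2/3$". Note this does not contradict the lemma: $h\ge 0$ can and does hold even though $h''$ dips negative near $x=a$, because the positivity accumulated near $0$ compensates — which is exactly why a pointwise second-derivative argument cannot prove this statement with the constant $3$.

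The paper's proof avoids this by working with the increment itself rather than its second derivative: expand $a(1+\tfrac xa)^q+b(1-\tfrac xb)^q-(a+b)$ as generalized binomial series (valid since $|x/a|,|x/b|<1$), observe that the linear terms cancel, bound the $b$-series below by $0$ (for negative argument all terms are nonnegative; for positive argument use the alternating, decreasing-magnitude structure), and bound the $a$-series below by its first two terms, $\binom q2(\tfrac xa)^2+\binom q3(\tfrac xa)^3\ge\bigl(\binom q2+\binom q3\bigr)(\tfrac xa)^2\ge\tfrac{q(q-1)}{3}(\tfrac xa)^2$. The constant $3$ is thus the price of absorbing the cubic binomial term, not a slack designed for a curvature bound. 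If you want to salvage a calculus-style argument, you would have to integrate the deficit of $h''$ against its surplus (effectively reproving the series estimate), so the series comparison is the cleaner path.
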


\begin{proof}
    We use the generalized binomial series expansion of $\left(1+\frac{x}{a}\right)^q$ and $\left(1-\frac{x}{b}\right)^q$ to rewrite the expression on the left hand side, which we will call $\Delta$ for convenience. Doing this is valid as by assumption $\left| \frac{x}{a} \right| < 1$ and $\left| \frac{x}{b} \right| < 1$, so the series expansions converge correctly. Then,
    \begin{align*}
        \Delta &= a \left(\sum_{k=0}^{\infty} \binom{q}{k} \left( \frac{x}{a} \right)^k \right) + b \left(\sum_{k=0}^{\infty} \binom{q}{k} \left( -\frac{x}{b} \right)^k \right) - (a+b) \\
        &= a \left(\sum_{k=1}^{\infty} \binom{q}{k} \left( \frac{x}{a} \right)^k \right) + b \left(\sum_{k=1}^{\infty} \binom{q}{k} \left( -\frac{x}{b} \right)^k \right) \\
        &= qx + a \left(\sum_{k=2}^{\infty} \binom{q}{k} \left( \frac{x}{a} \right)^k \right) -qx + b \left(\sum_{k=2}^{\infty} \binom{q}{k} \left( -\frac{x}{b} \right)^k \right) \\
        &=  a \left(\sum_{k=2}^{\infty} \binom{q}{k} \left( \frac{x}{a} \right)^k \right) + b \left(\sum_{k=2}^{\infty} \binom{q}{k} \left( -\frac{x}{b} \right)^k \right).
    \end{align*}

    We claim that $\left(\sum_{k=2}^{\infty} \binom{q}{k} \left( \frac{x}{a} \right)^k \right) \ge \frac{q(q-1)}{3} \left(\frac{x}{a} \right)^2$. The inequality is trivial if $\frac{x}{a} = 0$. If $\frac{x}{a} < 0$, note that every term of the sum is nonnegative, because $\binom{q}{k}$ is negative precisely when $k \ge 2$ is odd, from the condition that $q \in (1, 2)$, and $\left(\frac{x}{a} \right)^k$ is also negative precisely when $k$ is odd. Therefore, we have $\left(\sum_{k=2}^{\infty} \binom{q}{k} \left( \frac{x}{a} \right)^k \right) \ge \frac{q(q-1)}{2} \left(\frac{x}{a} \right)^2$ clearly. If $\frac{x}{a} > 0$, note that $\binom{q}{k} \left( \frac{x}{a} \right)^k > 0$ when $k \ge 2$ is even, and $\binom{q}{k} \left( \frac{x}{a} \right)^k < 0$ when $k \ge 2$ is odd. Furthermore, for any $k \ge 2$, we have $\left| \binom{q}{k} \left( \frac{x}{a} \right)^{k} \right| > \left| \binom{q}{k+1} \left( \frac{x}{a} \right)^{k+1} \right|$ because $\left| \binom{q}{k+1} \right| = \left|\binom{q}{k} \cdot \frac{q-k}{k+1} \right| = \left|\binom{q}{k} \right| \cdot \left|\frac{k-q}{k+1} \right| < \left|\binom{q}{k} \right|$ when $q \in (1, 2)$ and $\left|\left( \frac{x}{a} \right)^{k+1} \right| < \left|\left( \frac{x}{a} \right)^k \right|$. As such, because $\left(\sum_{k=2}^{\infty} \binom{q}{k} \left( \frac{x}{a} \right)^k \right)$ is an alternating series with the magnitude of summands decreasing, the sum is bounded below by \[\binom{q}{2} \left( \frac{x}{a} \right)^2 + \binom{q}{3} \left( \frac{x}{a} \right)^3 > \left(\binom{q}{2} + \binom{q}{3} \right) \left(\frac{x}{a} \right)^2 \ge \frac{2}{3} \binom{q}{2} \left(\frac{x}{a} \right)^2 \ge \frac{q(q-1)}{3} \left(\frac{x}{a} \right)^2\] when $q \in (1, 2)$, as claimed.

    Similarly, we can conclude that $\left(\sum_{k=2}^{\infty} \binom{q}{k} \left( -\frac{x}{b} \right)^k \right) \ge \frac{q(q-1)}{3} \left(- \frac{x}{b} \right)^2$, which is nonnnegative. Using these inequalities, we get the lower bound on our desired expression:
    \[ \Delta \ge a \left( \frac{q(q-1)}{3} \left(\frac{x}{a} \right)^2 \right) + b\cdot 0\ge \frac{q(q-1)}{3 a} \cdot x^2.\]
\end{proof}

Combining the above two inequalities, we make the following key claim, which is essentially a strengthened and more specific version of Lemma 2.10 in \cite{geneson}.

\begin{lem}\label{partone}
    For a fixed $q \in (1, 2)$, a nonempty set $S = \{(u_i, v_i) : 1 \le i \le k\}$ of points in $[0, 1] \times \mathbb{R}$ with $u_{i} < u_{i+1}$ for each $1 \le i \le k-1$, and another point $(x, y) \in [0, 1] \times \mathbb{R}$ such that $x \neq u_i$ for any $i$, we must have either \[J_q[f_{S \cup \{(x, y)\}}] - J_q[f_S] \ge \frac{q-1}{3} \cdot |y-f_S(x)|^q\] or \[J_q[f_{S \cup \{(x, y)\}}] - J_q[f_S] \ge \frac{(q-1)}{3 |m|^{2-q} \cdot d} \cdot (y-f_S(x))^2,\] where we let $d = \min_i |x-u_i|$ and $m = f_S'(x)$, the slope of the linear interpolation function at $x$. 
\end{lem}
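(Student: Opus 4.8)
The plan is to analyze the change in $q$-action when the new point $(x,y)$ is inserted into $S$, and to reduce the global computation to a single local interval. Let $u_i < x < u_{i+1}$ be the two neighbors of $x$ in $S$ (the case where $x$ lies to the left of $u_1$ or to the right of $u_k$ is handled separately and is easier, since then only one segment changes and the linear-interpolation slope at $x$ equals the slope of the nearest existing segment — in fact in that boundary case $f_S(x)$ is the constant value $v_1$ or $v_k$, $m=0$, and the first of the two claimed inequalities should follow directly from the one-sided version of the arguments below). Outside the interval $(u_i, u_{i+1})$, the function $f_{S\cup\{(x,y)\}}$ agrees with $f_S$, so $J_q[f_{S\cup\{(x,y)\}}] - J_q[f_S]$ equals the change in the contribution of that single interval. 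On $(u_i,u_{i+1})$, $f_S$ is a line of slope $m$; after inserting $(x,y)$ it becomes two lines, one on $(u_i,x)$ and one on $(x,u_{i+1})$. Writing $a = x - u_i$, $b = u_{i+1} - x$ (so $d = \min(a,b)$ and, since $u_i, u_{i+1}\in[0,1]$ with $u_i<u_{i+1}$, we have $0<a,b$ and $a+b\le 1$), and letting $e = y - f_S(x)$ be the signed prediction error, the old contribution is $(a+b)|m|^q$ and the new contribution is $a\,|m + e/a|^q + b\,|m - e/b|^q$.

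So the quantity to bound below is
\[
\Delta_0 \;=\; a\left|m+\tfrac{e}{a}\right|^q + b\left|m-\tfrac{e}{b}\right|^q - (a+b)|m|^q .
\]
The next step is a normalization: if $m=0$ this is immediately $a|e/a|^q + b|e/b|^q = |e|^q(a^{1-q}+b^{1-q}) \ge |e|^q \ge \tfrac{q-1}{3}|e|^q$, giving the first inequality, so assume $m\ne 0$. Dividing through by $|m|^q$ and substituting $t = e/(a|m|)$ on the first term and $t' = e/(b|m|)$ on the second, one sees that $\Delta_0 = |m|^q\big(a|1+\tfrac{e}{a m}|^q + b|1-\tfrac{e}{bm}|^q - (a+b)\big)$, which is exactly the left-hand side of Lemma~\ref{2qoutcor} and Lemma~\ref{2qin} with the substitution $x \leftarrow e/m$ (note $e/m$ has the same sign issues absorbed by the absolute values there). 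Now split into two cases according to the size of $|e/m|$ relative to $d = \min(a,b)$:

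\textbf{Case 1: $|e/m| \ge \min(a,b)$.} Apply Lemma~\ref{2qoutcor} with the roles of $a,b$ possibly swapped so that the smaller one plays the role of "$a$" in that lemma, and with $x \leftarrow e/m$ (which satisfies $|x|\ge a$ there); this yields $\Delta_0 \ge |m|^q \cdot \tfrac{q-1}{3}\,|e/m|^q = \tfrac{q-1}{3}|e|^q$, i.e. the first claimed inequality. \textbf{Case 2: $|e/m| < \min(a,b)$.} Apply Lemma~\ref{2qin} with the smaller of $a,b$ in the role of "$a$" and $x \leftarrow e/m$ (which satisfies $|x| < a$), giving $\Delta_0 \ge |m|^q \cdot \tfrac{q(q-1)}{3\min(a,b)}\,(e/m)^2 = \tfrac{q(q-1)}{3|m|^{2-q} d}\,e^2 \ge \tfrac{q-1}{3|m|^{2-q} d}\,e^2$, which is the second claimed inequality (using $q\ge 1$ to drop the factor $q$).

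The main obstacle — really the only delicate point — is bookkeeping around the asymmetry between $a$ and $b$: Lemmas~\ref{2qoutcor} and \ref{2qin} are stated under the hypothesis $a\le b$, whereas in the application either of $x-u_i$ or $u_{i+1}-x$ could be the smaller; one must be careful that after swapping, the sign of $e/m$ lines up with the "$+1$" versus "$-1$" factors, but since both lemmas' left-hand sides are symmetric under simultaneously swapping $(a,b)$ and negating $x$, this causes no trouble. A secondary point is confirming the constraint $a+b\le 1$ needed by Lemma~\ref{2qoutcor}, which holds because $a+b = u_{i+1}-u_i \le 1$. Assembling these cases gives the stated dichotomy.
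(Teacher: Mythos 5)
Your proposal is correct and follows essentially the same route as the paper: reduce to the single affected interval, write the change as $|m|^q\bigl(a\lvert 1+\tfrac{c}{ma}\rvert^q+b\lvert 1-\tfrac{c}{mb}\rvert^q-(a+b)\bigr)$ with $a\le b$ WLOG, and apply Lemma \ref{2qoutcor} when $\lvert c/m\rvert\ge a$ and Lemma \ref{2qin} when $\lvert c/m\rvert<a$. Your explicit handling of $m=0$ and the symmetry remark for swapping $a,b$ are fine (the paper treats the exterior case $x<u_1$ or $x>u_k$ by the same easy one-segment computation you sketch), so there is no gap.
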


\begin{proof}
    First, if $x < u_1$, as established in the proof of Lemma 2.10 in the paper \cite{geneson}, \[J_q[f_{S \cup \{(x, y)\}}] - J_q[f_S] = (u_1-x) \left|\frac{v_1-y}{u_1-x} \right|^q. \] Because $|u_1-x| \le 1$, the above expression is at least $|v_1-y|^q = |y-f_S(x)|^q \ge (q-1)|y-f_S(x)|^q$, hence satisfying the first inequality. The case of $x > u_k$ is similar. 

    Meanwhile, if $u_i < x < u_{i+1}$ for some integer $1 \le i \le k-1$, substituting $a=x-u_i,b=u_{i+1}-x,c=y-f_S(x)$, and $m= f'_S(x) = \frac{v_{i+1}-v_i}{u_{i+1}-u_i}=\frac{f_S(x)-v_i}{a}=\frac{v_{i+1}-f_S(x)}{b}$, we have \[J_q\left[f_{S \cup \{(x,y)\}}\right]-J_q[f_S] = |m|^q\left(a\left|1+\frac{c}{ma}\right|^q+b\left|1-\frac{c}{mb}\right|^q-(a+b)\right),\] as established in the proof of Lemma 2.10 in \cite{geneson}. Without loss of generality, assume that $a = |x -u_{i} | \le | x - u_{i+1}| = b$, as the other case follows from symmetry. Note that $a = d = \min_i |x - u_i|$. If $\frac{c}{m} \notin (-a, a)$, applying Lemma \ref{2qoutcor}, we have \[J_q\left[f_{S \cup \{(x,y)\}}\right]-J_q[f_S] \ge |m|^q \left(\frac{q-1}{3} \left| \frac{c}{m} \right|^q \right) =  \frac{q-1}{3} \cdot |c|^q = \frac{q-1}{3} \cdot |y-f_S(x)|^q, \] satisfying the first inequality. On the other hand, if $\frac{c}{m} \in (-a, a)$, applying Lemma \ref{2qin}, we have 
    \begin{align*}
        J_q\left[f_{S \cup \{(x,y)\}}\right]-J_q[f_S] &\ge |m|^q \left(\frac{q(q-1)}{3 a} \cdot \left| \frac{c}{m} \right|^2 \right) \\ 
        &\ge |m|^q \left(\frac{(q-1)}{3 d} \cdot \left| \frac{c}{m} \right|^2 \right) = \frac{q-1}{3|m|^{2-q} \cdot d} \cdot (y-f_S(x))^2.
    \end{align*} As a result, in the case where $\frac{c}{m} \in (-a, a)$, the second inequality is satisfied. Hence, in any case, either the first or the second inequality needs to hold. 
\end{proof}

Now, we proceed to the second part of our proof, where we prove another bound, Lemma \ref{twopointsix}, which we will combine at the end with Lemma \ref{partone} using Hölder's Inequality to prove our desired result. We first establish an inequality, which will be used in proving Lemma \ref{twopointsix}. 

\begin{lem}\label{twovariable}
    For all reals $p>1$ and $x \ge 2$, we have $$x^p - (x-1)^{p-1} \cdot x \ge p-1.$$
\end{lem}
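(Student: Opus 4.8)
The plan is to avoid differentiating the left‑hand side directly and instead factor it and apply the mean value theorem. Write
\[
g(x) = x^p - (x-1)^{p-1}\,x = x\bigl(x^{p-1}-(x-1)^{p-1}\bigr).
\]
Since $x\ge 2$, the interval $[x-1,x]$ lies in $(0,\infty)$, so the mean value theorem applied to $t\mapsto t^{p-1}$ on $[x-1,x]$ produces a point $\xi\in(x-1,x)$ with $x^{p-1}-(x-1)^{p-1}=(p-1)\xi^{p-2}$. Hence $g(x)=(p-1)\,x\,\xi^{p-2}$, and because $p-1>0$ the desired inequality $g(x)\ge p-1$ is equivalent to showing $x\,\xi^{p-2}\ge 1$.

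For the latter I would write $x\,\xi^{p-2}=\dfrac{x}{\xi}\,\xi^{p-1}$ (legitimate since $\xi>x-1\ge 1>0$). From $\xi<x$ we get $\tfrac{x}{\xi}\ge 1$, so $x\,\xi^{p-2}\ge \xi^{p-1}$; and from $\xi>x-1\ge 1$ together with $p-1>0$ we get $\xi^{p-1}\ge (x-1)^{p-1}\ge 1$. Chaining these,
\[
g(x)=(p-1)\,x\,\xi^{p-2}\ \ge\ (p-1)\,\xi^{p-1}\ \ge\ (p-1)(x-1)^{p-1}\ \ge\ p-1,
\]
which is exactly the claim. No case distinction on whether $p\lessgtr 2$ is needed, and the endpoint $x=2$ is covered by the same chain.

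The only mildly delicate point is the grouping $x\,\xi^{p-2}=(x/\xi)\,\xi^{p-1}$: the ``naive'' bound $\xi^{p-2}\ge(x-1)^{p-2}$ is worthless when $p<2$ (the quantity $(x-1)^{p-2}$ tends to $0$ as $x\to\infty$), so one has to distribute one power of $\xi$ against the factor $x$ and use $\xi<x$, keeping the remaining exponent $p-1$ positive so that monotonicity of $t\mapsto t^{p-1}$ applies. Beyond recognizing this regrouping, the argument is routine, so I do not anticipate any substantive obstacle.
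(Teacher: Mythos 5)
Your argument is correct, and it takes a genuinely different route from the paper. The paper splits into two cases: for $p\ge 2$ it uses monotonicity of $x^{p-1}-(x-1)^{p-1}$ in $x$ (checked by differentiation) plus the endpoint estimate $2^p-2\ge p-1$, while for $1<p<2$ it expands $\left(1-\tfrac{1}{x}\right)^{p-1}$ as a generalized binomial series, checks that every term of the resulting series is positive, and truncates at the first term to get $x^{p-1}(p-1)\ge p-1$. Your single application of the mean value theorem to $t\mapsto t^{p-1}$ on $[x-1,x]$, followed by the regrouping $x\,\xi^{p-2}=(x/\xi)\,\xi^{p-1}$ with $\xi<x$ and $\xi>x-1\ge 1$, handles all $p>1$ uniformly: each step in your chain $g(x)=(p-1)x\,\xi^{p-2}\ge(p-1)\xi^{p-1}\ge(p-1)(x-1)^{p-1}\ge p-1$ is valid, including at $x=2$. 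What your approach buys is brevity and the elimination of both the case split and the series manipulation (no convergence or sign bookkeeping for $\binom{p-1}{k}$); what the paper's approach buys is stylistic consistency with the neighboring lemmas in that section (notably Lemma \ref{2qin}), which rely on the same binomial-series technique, but as a standalone proof of this lemma your MVT argument is cleaner and equally rigorous. Your observation that one must distribute a power of $\xi$ against the factor $x$ (since for $p<2$ the exponent $p-2$ is negative and a direct comparison of $\xi^{p-2}$ with $(x-1)^{p-2}$ goes the wrong way) is exactly the right delicate point to flag.
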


\begin{proof}
    We consider two cases, dependent on whether $p \ge 2$ or $1 < p < 2$. If $p \ge 2$, then for all $x \ge 2$,
    \begin{align*}
    x^p-(x-1)^{p-1} \cdot x &= x \left(x^{p-1}-(x-1)^{p-1} \right) \ge x(2^{p-1}-1^{p-1}) \\
    &\ge 2 \cdot (2^{p-1}-1^{p-1}) = 2^p - 2 \ge p-1,
    \end{align*}
    where the second step follows from the function $f(x)=x^{p-1} - (x-1)^{p-1}$ being non-decreasing for $x \ge 2$, which can be verified by differentiation. If $1 < p < 2$, then for all $x \ge 2$, we have
    \begin{align*}
        x^p - (x-1)^{p-1} \cdot x &= x^p \left(1 - \left(1 - \frac{1}{x} \right)^{p-1} \right) = x^p \left(1 - \left(\sum_{k=0}^{\infty} \binom{p-1}{k} \left( -\frac{1}{x} \right)^{k} \right) \right) \\
        &= x^p \left(\sum_{k=1}^{\infty} (-1) \binom{p-1}{k} \left( -\frac{1}{x} \right)^{k} \right) = x^p \left(\sum_{k=1}^{\infty} (-1)^{k+1} \binom{p-1}{k} \left( \frac{1}{x} \right)^{k} \right)
    \end{align*}
    using the generalized binomial notation. The binomial series expansion in the second step is valid as $\left| \frac{1}{x} \right| < 1$. Furthermore, as $0 < p-1 < 1$, $\binom{p-1}{k}$ is negative for all even positive integers $k$ and positive for all odd positive integers $k$, so the expression $(-1)^{k+1} \binom{p-1}{k} \left( \frac{1}{x} \right)^{k}$ is positive for all positive integers $k$. As such, our expression is bounded below by \[x^p \left( \binom{p-1}{1} \cdot \frac{1}{x} \right) = x^{p-1} (p-1) \ge p-1\] when $x \ge 2$. 
\end{proof}

Now, we make the following definition.

\begin{defin}
    For a fixed $p \ge 1$ and a set $S = \{(u_i, v_i) : 1 \le i \le k\}$ of at least two points in $[0, 1] \times \mathbb{R}$ with $u_i < u_{i+1}$ for each $1 \le i \le k-1$, define \[H_{p, S} = \sum_{i=1}^{k-1} |v_{i+1}-v_{i}| \left(1 - |u_{i+1}-u_{i}|^{p-1} \right) .\]
\end{defin}

This seemingly contrived expression can be rewritten in various ways. For example, if we let $m_i = \frac{v_{i+1}-v_i}{u_{i+1}-u_i}$ be the slope of the line segment between $u_i$ and $u_{i+1}$ in $f_S$ for all $1 \le i \le k-1$, we have that \[H_{p, S} = \sum_{i=1}^{k-1} \left(|v_{i+1}-v_i| -|m_i| |u_{i+1}-u_i|^p \right) = J_1[f_S] - \left(\sum_{i=1}^{k-1} |m_i| |u_{i+1}-u_i|^p \right). \]

A key feature of $H_{p, S}$ which will be of later use is that whenever $f_S \in \mathcal{F}_1$, we have $0 \le H_{p, S} \le 1$. Indeed, when $J_1[f_S] \le 1$, we have \[H_{p, S} = J_1[f_S] - \left(\sum_{i=1}^{k-1} |m_i| |u_{i+1}-u_i|^p \right)  \le J_1[f_S] \le 1\] and \[H_{p, S} = J_1[f_S] -\left(\sum_{i=1}^{k-1} |m_i| |u_{i+1}-u_i|^p \right) \ge J_1[f_S] -\left(\sum_{i=1}^{k-1} |m_i| \right)  = 0.\] 

We prove that $H_{p, S}$ never decreases when we add new points into set $S$, and prove a lower bound on the amount that $H_{p, S}$ increases by upon the addition of a new point into $S$ in terms of $p$, the slope of $f_S$ at the $x$-coordinate of the new point, and the closest $x$-coordinate difference between the new point and a point in $S$. Note that these are precisely the quantities involved in the second inequality in Lemma \ref{partone}.

\begin{lem}\label{twopointfive}
    Consider a fixed real parameter $p \ge 1$, a set $S = \{(u_i, v_i) : 1 \le i \le k\}$ of at least two points in $[0, 1] \times \mathbb{R}$ with $u_i < u_{i+1}$ for each $1 \le i \le k-1$, and another $(x, y) \in [0, 1] \times \mathbb{R}$ with $x \neq u_i$ for any $1 \le i \le k$. If we let $d = \min_{1 \le i \le k}|x-u_i|$ and let $m = f'_S(x)$, which is the slope of the linear interpolation of $S$ at $x$, then we have \[H_{p, S \cup (x, y)} - H_{p, S} \ge (p-1) |m| d^p.\]
\end{lem}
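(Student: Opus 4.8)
The plan is to reduce the statement to the one-variable inequality already recorded in Lemma~\ref{twovariable}, after observing that inserting a single point into $S$ perturbs exactly one summand of $H_{p,\cdot}$. First I would dispense with the degenerate cases. If $p=1$, every factor $1-|u_{i+1}-u_i|^{p-1}$ equals $0$, so $H_{1,S'}=0$ for all admissible $S'$ and the claimed bound reads $0\ge 0$. If $p>1$ but $x<u_1$ (the case $x>u_k$ being symmetric), then $f_S$ is constant to the left of $u_1$, so $m=f_S'(x)=0$ and the right-hand side vanishes; on the other hand $S\cup(x,y)$ has exactly one new segment, joining $(x,y)$ to $(u_1,v_1)$, which contributes $|v_1-y|\bigl(1-|u_1-x|^{p-1}\bigr)\ge 0$ to $H_{p,\cdot}$ since $|u_1-x|\le 1$. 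Hence $H_{p,S\cup(x,y)}-H_{p,S}\ge 0$, as required.

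So I may assume $p>1$ and $u_i<x<u_{i+1}$ for some $i$. Set $a=x-u_i$ and $b=u_{i+1}-x$; then $a,b>0$, $a+b=u_{i+1}-u_i\le 1$, and after relabelling (the two cases being mirror images) I assume $a\le b$, so $d=a$. Passing from $S$ to $S\cup(x,y)$ replaces the single term $|v_{i+1}-v_i|\bigl(1-(a+b)^{p-1}\bigr)$ by $|y-v_i|\bigl(1-a^{p-1}\bigr)+|v_{i+1}-y|\bigl(1-b^{p-1}\bigr)$, while all other terms are unchanged. Since $0<a\le b<1$ forces $1-a^{p-1}\ge 1-b^{p-1}\ge 0$, the triangle inequality eliminates the dependence on $y$:
\[
|y-v_i|\bigl(1-a^{p-1}\bigr)+|v_{i+1}-y|\bigl(1-b^{p-1}\bigr)\ \ge\ \bigl(1-b^{p-1}\bigr)\bigl(|y-v_i|+|v_{i+1}-y|\bigr)\ \ge\ \bigl(1-b^{p-1}\bigr)|v_{i+1}-v_i|.
\]
Subtracting the old term and using $|v_{i+1}-v_i|=|m|(a+b)$ gives
\[
H_{p,S\cup(x,y)}-H_{p,S}\ \ge\ |v_{i+1}-v_i|\Bigl((a+b)^{p-1}-b^{p-1}\Bigr)\ =\ |m|\Bigl((a+b)^p-b^{p-1}(a+b)\Bigr).
\]

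It then remains to prove $(a+b)^p-b^{p-1}(a+b)\ge (p-1)a^p$. When $m=0$ (equivalently $v_i=v_{i+1}$) both sides of the target inequality are controlled trivially, so assume $m\ne 0$; dividing by $a^p$ and writing $\xi=(a+b)/a$, which satisfies $\xi\ge 2$ since $b\ge a$, the required estimate becomes precisely $\xi^p-(\xi-1)^{p-1}\xi\ge p-1$, which is the content of Lemma~\ref{twovariable}. Multiplying back through by $|m|a^p=|m|d^p$ completes the argument. The only genuine analytic content is thus isolated in Lemma~\ref{twovariable}; everything else is bookkeeping — identifying which linear piece of $f_S$ the abscissa $x$ falls into, and observing that the adversary's worst response is $y=v_i$, which kills the contribution on the shorter ($a$-)side and leaves exactly the one-variable quantity handled above. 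I expect the only place requiring care to be matching the algebra after substituting $|v_{i+1}-v_i|=|m|(a+b)$ to the exact form of Lemma~\ref{twovariable}, but this is a short computation rather than a real obstacle.
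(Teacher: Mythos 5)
Your proposal is correct and follows essentially the same route as the paper: dispose of the exterior cases where $m=0$, then in the interior case reduce the increment of $H_{p,\cdot}$ to the one-variable estimate $\xi^p-(\xi-1)^{p-1}\xi\ge p-1$ of Lemma~\ref{twovariable} with $\xi=(a+b)/d\ge 2$. The only (harmless) difference is that you eliminate the dependence on $y$ in one step via the weighted triangle inequality, whereas the paper first clamps $y$ to $[v_i,v_{i+1}]$ and then pushes it to $v_i$; the resulting bound is identical.
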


\begin{proof}
    First, if $x < u_1$, it suffices to prove that $H_{p, S \cup (x, y)} - H_{p, S} \ge 0$ as $m=0$. Indeed, we have \[H_{p, S \cup (x, y)} - H_{p, S} = |y-v_1|(1 - |x - u_1|^{p-1}) \ge 0.\] 
    
    The case where $x > u_k$ resolves similarly. Now, let $u_i < x < u_{i+1}$ for some $1 \le i \le k-1$. Without loss of generality assume that $v_i \le v_{i+1}$. For convenience, define \[\Delta_1 = \sum_{j=1}^{i-1}|v_{j+1}-v_j|(1 - |u_{j+1} - u_j|^{p-1}), \hspace{2ex} \Delta_2 = \sum_{j=i+1}^{k-1}|v_{j+1}-v_j|(1 - |u_{j+1} - u_j|^{p-1}).\]
    
    First, we claim that we can reduce to only proving the case where $y \in [v_i, v_{i+1}]$. Indeed, note that if $y > v_{i+1}$, we can reduce it to the case where $y=v_{i+1}$, as 
    \begin{align*}
        H_{p, S \cup (x, y)}  &= \Delta_1 + \Delta_2 +\sum_{j=i}^{i+1} |y - v_j|(1 - |x - u_j|^{p-1}) \\
        &\ge  \Delta_1 + \Delta_2 +\sum_{j=i}^{i+1} |v_{i+1} - v_j|(1 - |x - u_j|^{p-1}) = H_{p, S \cup (x, v_{i+1})},
    \end{align*}
    where the second step follows from the fact that $|y-v_i| \ge |v_{i+1} - v_i|$ and $|y - v_{i+1}| \ge 0 = |v_{i+1}- v_{i+1}|$. In the same way, the case where $y < v_{i}$ can be reduced to $y=v_i$. Assume from now on that $y \in [v_i, v_{i+1}]$. Also, assume that $d = |x - u_i| \le |x-u_{i+1}|$. The case where $x$ is nearer to $u_{i+1}$ than $u_i$ can be handled similarly. As $|y- v_i| + |y-v_{i+1}| = |v_{i+1} - v_i|$ and $1 - |x - u_{i+1}|^{p-1} \le 1 - |x - u_i|^{p-1}$, 
    \begin{align*}
        H_{p, S \cup (x, y)} &= \Delta_1 + \Delta_2 +\sum_{j=i}^{i+1} |y - v_j|(1 - |x - u_j|^{p-1}) \\
        &\ge \Delta_1 + \Delta_2 + \left(|y - v_{i}| + |y - v_{i+1}| \right) (1 - |x - u_{i+1}|^{p-1}) \\
        &= \Delta_1 + \Delta_2 + |v_{i+1}-v_i| \left( 1 - |x - u_{i+1}|^{p-1} \right) \\
        &= \Delta_1 + \Delta_2 + \sum_{j=i}^{i+1} |v_i - v_j| (1 - |x - u_j|^{p-1}) = H_{p, S \cup (x, v_i)},
    \end{align*}
    so it suffices to check that $H_{p, S \cup (x, v_i)} - H_{p, S} \ge (p-1) |m| d^p$. Let $\frac{|u_i - u_{i+1}|}{|x-u_i|} = \frac{|u_i - u_{i+1}|}{d} = \lambda$. As such, we can substitute $|x-u_i|, |x-u_{i+1}|,$ and $|u_i - u_{i+1}|$ with $d$, $\lambda d - d$, and $\lambda d$, respectively. We have $\lambda \ge 2$, from $x$ being closer to $u_i$ than $u_{i+1}$. Note that as 
    \begin{align*}
    H_{p, S \cup (x, v_i)} &= \Delta_1 + \Delta_2 + |v_{i+1}-v_i|\left( 1 - |x - u_{i+1}|^{p-1} \right) \\
    &= \Delta_1 + \Delta_2 + \left| \frac{v_{i+1}-v_i}{u_{i+1}-u_i} \right| \left( |u_{i+1}-u_i| - |x - u_{i+1}|^{p-1} |u_{i+1}-u_i| \right)  \\
    &= \Delta_1 + \Delta_2 + |m| (\lambda d -(\lambda d - d)^{p-1}  (\lambda d)) 
    \end{align*}
    and 
    \begin{align*}
        H_{p, S} &= \Delta_1 + \Delta_2 + |v_{i+1}-v_i|\left( 1 - |u_i - u_{i+1}|^{p-1} \right) \\
        &= \Delta_1 + \Delta_2 + |m|\left( |u_i - u_{i+1}| - |u_i - u_{i+1}|^{p} \right) \\
        &= \Delta_1 + \Delta_2 + |m| \left( \lambda d - (\lambda d)^{p} \right),
    \end{align*}
    we have that indeed
    \begin{align*}
        H_{p, S \cup (x, v_i)} - H_{p, S} &= |m|(\lambda d)^{p} -|m| ( \lambda d - d)^{p-1} (\lambda d) \\
        &= |m| d^p \left(\lambda^{p} - (\lambda - 1)^{p-1} \cdot \lambda \right) \ge |m| d^p (p-1),
    \end{align*}
    where the last step follows from Lemma \ref{twovariable}.

\end{proof}

Now, we establish the following key inequality.

\begin{lem}\label{twopointsix}
    Choose a sequence $(u_1, v_1), (u_2, v_2), \ldots$ of points in $[0, 1] \times \mathbb{R}$ with $u_i \neq u_j$ for any $i \neq j$, and define the set $S_n = \{(u_i, v_i) : 1 \le i \le n\}$ for every positive integer $n$. For each $i >1$, let $d_i = \min_{j < i} |u_i - u_j|$ and let $m_i = f_{S_{i-1}}' (u_i)$, the slope of the linear interpolation of the first $i-1$ points of the sequence at $u_i$. If the inequality $J_1[f_{S_i}] \le 1$, or equivalently $f_{S_i} \in \mathcal{F}_1$, holds for every positive integer $i$, then for any $p>1$, we have \[\sum_{i=2}^{\infty} |m_i| d_i^p \le \frac{1}{p-1}.\]
\end{lem}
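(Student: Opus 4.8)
The plan is to run a telescoping argument driven by the monotonicity estimate for $H_{p,S}$ from Lemma \ref{twopointfive}, combined with the two-sided bound $0 \le H_{p,S} \le 1$ that holds whenever $f_S \in \mathcal{F}_1$. First I would dispose of the $i=2$ term: since $S_1$ is a single point, $f_{S_1}$ is constant, so $m_2 = f_{S_1}'(u_2) = 0$ and $|m_2| d_2^p = 0$. Hence it suffices to bound $\sum_{i=3}^{\infty} |m_i| d_i^p$, and for every such $i$ the set $S_{i-1}$ has at least two points, so $H_{p, S_{i-1}}$ is defined.

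Next, for each fixed $i \ge 3$ I would apply Lemma \ref{twopointfive} to the set $S = S_{i-1}$ together with the new point $(u_i, v_i)$. The quantities $d = \min_{j < i}|u_i - u_j|$ and $m = f_{S_{i-1}}'(u_i)$ appearing in that lemma are exactly $d_i$ and $m_i$, so it yields $H_{p, S_i} - H_{p, S_{i-1}} \ge (p-1)|m_i| d_i^p$. Summing over $i = 3, \ldots, N$ the left-hand side telescopes to $H_{p, S_N} - H_{p, S_2}$, giving $(p-1)\sum_{i=3}^{N} |m_i| d_i^p \le H_{p, S_N} - H_{p, S_2}$.

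Finally I would use the hypothesis $f_{S_i} \in \mathcal{F}_1$ for all $i$, which gives $H_{p, S_N} \le 1$ (the upper bound recorded right after the definition of $H_{p,S}$) and $H_{p, S_2} \ge 0$. Therefore $(p-1)\sum_{i=3}^{N} |m_i| d_i^p \le 1$ for every $N$; letting $N \to \infty$ yields $\sum_{i=3}^{\infty} |m_i| d_i^p \le \frac{1}{p-1}$, and adding back the vanishing $i=2$ term gives the claim. I do not expect a serious obstacle here once Lemma \ref{twopointfive} is available; the only points that need care are the degenerate $i=2$ term (and the attendant fact that $H_{p,S}$ is defined only for sets of at least two points, so the telescope must begin at $S_2$ rather than $S_1$), and the uniformity of the partial-sum bound in $N$, which is what licenses passing to the infinite series.
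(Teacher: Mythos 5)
Your proposal is correct and matches the paper's own argument: both drop the $i=2$ term via $m_2=0$, apply Lemma \ref{twopointfive} to each $S_{i-1}$ and telescope, and conclude with the bounds $0 \le H_{p,S_i} \le 1$ coming from $f_{S_i} \in \mathcal{F}_1$. Your version is simply a slightly more careful write-up (explicit partial sums in $N$) of the same proof.
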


\begin{proof}
    Indeed, we have
    \begin{align*}
        \sum_{i=2}^{\infty} (p-1) |m_i| d_i^p  &= (p-1) |m_i| d_i^p + \sum_{i=3}^{\infty} (p-1) |m_i| d_i^p \\
        &= \sum_{i=3}^{\infty} (p-1) |m_i| d_i^p \le \sum_{i=3}^{\infty} \left(H_{p, S_{i}} - H_{p, S_{i-1}} \right) \le 1.
    \end{align*}
    
    Specifically, the second step follows from the fact $|m_2| = 0$, the third step follows from Lemma \ref{twopointfive}, and the last step follows from $0 \le H_{p, S_i} \le 1$ for any $i$, as $J_1[f_{S_i}] \le 1$ for any $i$.     
\end{proof}

We now combine Lemma \ref{twopointsix} with Lemma \ref{partone} to prove our desired upper bound on the error of the LININT learning algorithm when $(p, q) = (1+ \epsilon, 1+ \epsilon)$ for $\epsilon \in (0, 1)$.

\begin{thm}\label{twopointtwo}
    For $\epsilon \in (0, 1)$, we have $\mathscr L_{1+\epsilon}(\emph{\text{LININT}},\mathcal F_{1+\epsilon}) = O(\epsilon^{-1})$ for $\epsilon \in (0,1)$. 
\end{thm}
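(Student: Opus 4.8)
The plan is to analyze the LININT algorithm directly, writing $q=1+\epsilon$ throughout. Fix any finite input sequence and any target $f\in\mathcal F_{1+\epsilon}$, and let $S_t$ be the set of the first $t$ revealed input-output pairs, so that the trial-$t$ raw error is $e_t=|f_{S_t}(x_t)-f(x_t)|$ and revealing $(x_t,f(x_t))$ promotes $S_t$ to $S_{t+1}$. Put $\Delta_t=J_q[f_{S_{t+1}}]-J_q[f_{S_t}]$. By Lemma \ref{linintmin} each $\Delta_t\ge 0$, the increments telescope to give $\sum_t\Delta_t\le J_q[f]\le 1$, and in particular each single $\Delta_t\le 1$. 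Since $\mathcal F_{1+\epsilon}\subseteq\mathcal F_1$ we also get $J_1[f_{S_i}]\le 1$ for every $i$, so Lemma \ref{twopointsix} applies to this run (its conclusion holding for any finite prefix as well). Let $d_t,m_t$ be the quantities of Lemma \ref{partone} at trial $t$; up to a shift of index these coincide with the $d_i,m_i$ of Lemma \ref{twopointsix}.

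Applying Lemma \ref{partone} at every trial, I split the trials into a set $A$ on which the power-$q$ alternative $\Delta_t\ge\frac{q-1}{3}e_t^q$ holds and a set $B$ on which only the squared alternative $\Delta_t\ge\frac{q-1}{3|m_t|^{2-q}d_t}e_t^2$ is guaranteed. The set $A$ is immediate: $\sum_{t\in A}e_t^q\le\frac{3}{q-1}\sum_{t\in A}\Delta_t\le\frac{3}{q-1}=\frac{3}{\epsilon}=O(\epsilon^{-1})$.

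The crux is the set $B$. The observation that makes Lemma \ref{twopointsix} usable here is that $|m_t|^{2-q}d_t=\bigl(|m_t|\,d_t^{\,p_0}\bigr)^{2-q}$ with $p_0:=\tfrac1{2-q}>1$ (valid for $q\in(1,2)$), which is precisely a power of the quantity controlled by Lemma \ref{twopointsix}. Hence $e_t^q=(e_t^2)^{q/2}\le\bigl(\tfrac{3}{q-1}\bigr)^{q/2}\bigl(|m_t|d_t^{\,p_0}\bigr)^{(2-q)q/2}\Delta_t^{\,q/2}$. Summing over $t\in B$ and applying H\"older's inequality with the conjugate pair chosen so the factor $|m_t|d_t^{\,p_0}$ is raised to total power exactly $1$: Lemma \ref{twopointsix} contributes $\sum_{t\in B}|m_t|d_t^{\,p_0}\le\frac1{p_0-1}=\frac{2-q}{q-1}$, while the factor $\Delta_t$ ends up raised to a power that is $\ge 1$ for $q\le 2$, so since every $\Delta_t\le 1$ that second sum is still at most $\sum_t\Delta_t\le 1$. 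This yields $\sum_{t\in B}e_t^q\le\bigl(\tfrac{3}{q-1}\bigr)^{q/2}\bigl(\tfrac{2-q}{q-1}\bigr)^{(2-q)q/2}$, which after substituting $q=1+\epsilon$ becomes $\bigl(\tfrac{3}{\epsilon}\bigr)^{(1+\epsilon)/2}\bigl(\tfrac{1-\epsilon}{\epsilon}\bigr)^{(1-\epsilon^2)/2}$; since $\epsilon^{-\epsilon}=e^{-\epsilon\ln\epsilon}$ stays bounded as $\epsilon\to 0^+$, this is $O(\epsilon^{-1})$ uniformly on $(0,1)$. Adding the $A$- and $B$-contributions gives $\mathscr L_{1+\epsilon}(\text{LININT},\mathcal F_{1+\epsilon})=O(\epsilon^{-1})$.

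The step I expect to be the main obstacle is exactly this $B$-trial estimate: spotting the reparametrization $|m|^{2-q}d=(|m|\,d^{1/(2-q)})^{2-q}$ that aligns Lemma \ref{partone}'s weak alternative with the budget $\sum_i|m_i|d_i^{\,p_0}$ of Lemma \ref{twopointsix}, and then choosing the H\"older exponents so that this budget is spent at power exactly $1$ while $\sum_t\Delta_t\le 1$ is spent at a power $\ge 1$ — harmless precisely because each increment $\Delta_t$ is itself at most $1$. A further small but genuinely needed point is checking that the resulting $\epsilon$-dependent constant is truly $\Theta(\epsilon^{-1})$ rather than $\epsilon^{-1-o(1)}$, which comes down to the boundedness of $\epsilon^{-\epsilon}$.
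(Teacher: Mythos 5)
Your proposal is correct and follows essentially the same route as the paper: split the trials according to which alternative of Lemma \ref{partone} holds, bound the first group by telescoping $J_q$, and control the second group by combining that telescoping budget with the $\sum_i |m_i|\,d_i^{1/(2-q)}$ budget of Lemma \ref{twopointsix} via H\"older's inequality. The only difference is bookkeeping in the H\"older step: the paper uses the exponent pair $\tfrac{2}{q},\tfrac{2}{2-q}$ together with $|m_i|d_i\le 1$ (unit range of $f_{S_{i-1}}$) to reduce the $q$-th power of the budget term to first power, whereas you choose exponents so that the Lemma \ref{twopointsix} budget enters at power exactly one and absorb the leftover power of $\Delta_t$ using $\Delta_t\le 1$, which yields a slightly different but still $O(\epsilon^{-1})$ constant.
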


\begin{proof}
    Let $1 + \epsilon = q \in (1, 2)$ and suppose that $f \in \mathcal{F}_q$ is the function to be learned by LININT. Let $\sigma=(x_0, x_1, \ldots, x_n)$ be the sequence of inputs, all different from each other, with $n \ge 1$. Similar to \cite{geneson}, we define $S_i = \{(x_0, f(x_0)), \ldots, (x_n, f(x_n))\}$ for each $0 \le i \le n$, and let $\hat y_i \in \mathbb{R}$ for every $1 \le i \le n$ be the prediction of LININT corresponding to each $x_i$. Define the raw error of each round $e_i = |\hat y_i - f(x_i)|$, for each $1 \le i \le n$. Also, let $d_i = \min_{j<i} |x_i - x_j|$ and $m_i = f'_{S_{i-1}} (x_i)$ for each $i$. By Lemma \ref{partone}, for each round $1 \le i \le n$, we have either \[J_q[f_{S_i}] -  J_q[f_{S_{i-1}}] \ge \frac{q-1}{3} \cdot e_i^q\] or \[J_q[f_{S_i}] -  J_q[f_{S_{i-1}}] \ge \frac{q-1}{3 |m_i|^{2-q} \cdot d_i} \cdot e_i^2. \] Let set $I_1 \subseteq \{1, 2, \ldots, n\}$ consist of all positive integers $1 \le i \le n$ such that $e_i$ satisfy the first inequality. Analogously, let set $I_2 \subseteq \{1, 2, \ldots, n\}$ consist of all positive integers $1 \le i \le n$ such that $e_i, m_i, d_i$ satisfy the second inequality. Note that $I_1 \cup I_2 = \{1, 2, \ldots n\}$. 

    By Lemma \ref{linintmin} and \ref{partone}, we have \[1 \ge J_q[f] \ge J_q[f_{S_n}] = \sum_{i=1}^n (J_q[f_{S_i}] - J_q[f_{S_{i-1}}]) \ge \sum_{i \in I_1} (J_q[f_{S_i}] - J_q[f_{S_{i-1}}]) \ge \frac{q-1}{3} \cdot \sum_{i \in I_1} e_i^q.\] 

    Dividing both sides, we get $\sum_{i \in I_1} e_i^q \le \frac{3}{q-1}.$ Similarly, we also have \[1 \ge \sum_{i \in I_2}(J_q[f_{S_i}] - J_q[f_{S_{i-1}}]) \ge \frac{q-1}{3} \sum_{i \in I_2} \frac{e_i^2}{|m_i|^{2-q} \cdot d_i}.\] 
    
    Thus, $\sum_{i \in I_2} \frac{e_i^2}{|m_i|^{2-q} \cdot d_i} \le \frac{3}{q-1}$. Now, note that by Hölder's Inequality, 
    \begin{align*}
        \sum_{i \in I_2} e_i^q &= \sum_{i \in I_2} \left(\frac{e_i^q}{|m_i|^{\frac{q(2-q)}{2}} \cdot d_i^{\frac{q}{2}}} \right) \cdot \left( |m_i|^{\frac{q(2-q)}{2}} \cdot d_i^{\frac{q}{2}} \right) \\
        &\le \left(\sum_{i \in I_2} \frac{e_i^2}{|m_i|^{2-q} \cdot d_i} \right)^{\frac{q}{2}} \left(\sum_{i \in I_2} |m_i|^q d_i^{\frac{q}{2-q}} \right)^{\frac{2-q}{2}} \le \left(\frac{3}{q-1} \right)^{\frac{q}{2}} \left(\sum_{i \in I_2} |m_i|^q d_i^{\frac{q}{2-q}} \right)^{\frac{2-q}{2}}.
    \end{align*}

    Note that $|m_i| \cdot d_i^{\frac{1}{2-q}} \le |m_i| \cdot  d_i$, as $\frac{1}{2-q} > 1$. Furthermore, if $1 \le j \le i-1$ satisfies that $|x_i - x_j| = \min_{1 \le k \le i-1} |x_i - x_k|$, then \[|m_i| \cdot d_i = \left|\frac{f_{S_{i-1}}(x_i) - f_{S_{i-1}}(x_j)}{x_i - x_j } \right| \cdot |x_i - x_j| = |f_{S_{i-1}}(x_i) - f_{S_{i-1}}(x_j)| \le 1,\] as the range of $f_{S_{i-1}}$ is at most $1$ as $f_{S_{i-1}} \in \mathcal{F}_q$. As such, $|m_i| \cdot d_i^{\frac{1}{2-q}} \le 1$ for each $i$, so 
    
    \begin{align*}
        \sum_{i \in I_2} |m_i|^q d_i^{\frac{q}{2-q}} &= \sum_{i \in I_2} \left(|m_i| d_i^{\frac{1}{2-q}} \right)^q \le \sum_{i \in I_2} |m_i| d_i^{\frac{1}{2-q}} \le |m_1| d_1^{\frac{1}{2-q}} + \sum_{i=2}^n |m_i| d_i^{\frac{1}{2-q}} \\
        &\le 1 + \frac{1}{\frac{1}{2-q}-1} = 1 + \frac{2-q}{q-1} = \frac{1}{q-1},
    \end{align*}
    where the fourth step follows from Lemma \ref{twopointsix}, which is valid because $f_{S_i} \in \mathcal{F}_q$ which is in $\mathcal{F}_1$ for all $i$. Therefore, we have \[\sum_{i \in I_2} e_i^q \le \left(\frac{3}{q-1} \right)^{\frac{q}{2}} \left(\frac{1}{q-1} \right)^{\frac{2-q}{2}} \le \frac{3}{q-1} .\] Putting everything together, we have \[\mathscr L_{q}(\text{LININT},f, \sigma) =\sum_{i=1}^n e_i^q \le \sum_{i \in I_1} e_i^q + \sum_{i \in I_2} e_i^q \le \frac{6}{q-1},\] for any choice of $f \in \mathcal{F}_q$, any positive integer $n$, and any sequence of inputs $\sigma \in [0, 1]^{n+1}$. As such, we have $\mathscr L_{1+\epsilon}(\text{LININT}, \mathcal{F}_{1+\epsilon}) \le \frac{6}{\epsilon}$, for any $\epsilon \in (0, 1)$. 
\end{proof}

Note that from Geneson and Zhou \cite{geneson}, we have the following inequality. 

\begin{lem}[\cite{geneson}]
    For any $q>1$ and $p' > p > 1$, we have $\mathscr L_{p'}(\emph{\text{LININT}},\mathcal F_{q}) \le \mathscr L_{p}(\emph{\text{LININT}},\mathcal F_{q})$.
\end{lem}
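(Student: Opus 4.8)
The plan is to show that every \emph{counted} prediction error of LININT is at most $1$, so that passing from exponent $p$ to a larger exponent $p'$ can only shrink the running total error, function by function and sequence by sequence.

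First I would fix an arbitrary $f \in \mathcal{F}_q$ with $q > 1$ and an arbitrary finite input sequence $\sigma = (x_0, x_1, \ldots, x_n)$, and examine the run of LININT on $(f,\sigma)$. For each counted trial $i \ge 1$ write $S_{i-1} = \{(x_0, f(x_0)), \ldots, (x_{i-1}, f(x_{i-1}))\}$, so that $\hat y_i = f_{S_{i-1}}(x_i)$. From the piecewise-linear definition of $f_S$, the value $f_{S_{i-1}}(x_i)$ is always a convex combination of the recorded outputs $f(x_0), \ldots, f(x_{i-1})$: it equals the nearest recorded output when $x_i$ falls outside the span of $x_0, \ldots, x_{i-1}$, and a convex combination of two consecutive recorded outputs otherwise. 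Hence $\hat y_i \in [\inf_{[0,1]} f, \sup_{[0,1]} f]$, and $f(x_i)$ lies in this same interval, which the preliminaries record has length at most $1$ for every $f \in \mathcal{F}_q$. Therefore $e_i = |\hat y_i - f(x_i)| \le 1$ for every $i \ge 1$. (If the sequence repeats an input, LININT scores $e_i = 0$ on the repeat, so the bound $e_i \le 1$ holds in all cases.)

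Next, since $0 \le e_i \le 1$ and $p' > p$, we have $e_i^{p'} \le e_i^{p}$ for each $i$, whence $\mathscr L_{p'}(\text{LININT}, f, \sigma) = \sum_{i=1}^{n} e_i^{p'} \le \sum_{i=1}^{n} e_i^{p} = \mathscr L_{p}(\text{LININT}, f, \sigma)$. Taking the supremum over all $f \in \mathcal{F}_q$ and all finite input sequences $\sigma$ gives $\mathscr L_{p'}(\text{LININT}, \mathcal{F}_q) \le \mathscr L_{p}(\text{LININT}, \mathcal{F}_q)$, as claimed.

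The only step requiring any attention — and it is not a real obstacle — is the uniform bound $e_i \le 1$ on counted errors; this rests solely on the convexity property of the interpolant $f_S$ together with the range bound for $\mathcal{F}_q$ quoted in Section~\ref{one}, and involves no delicate estimates. Note also that the monotonicity is genuinely special to LININT and to $q > 1$: it is exactly the boundedness of $f$'s range that makes larger exponents favorable.
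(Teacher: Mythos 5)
Your proposal is correct: the paper does not prove this lemma itself but imports it from Geneson and Zhou, and your argument --- every counted LININT error is at most $1$ because the prediction $f_{S_{i-1}}(x_i)$ is a convex combination of observed values of a function whose range has length at most $1$, so $e_i^{p'} \le e_i^{p}$ termwise and the suprema compare --- is exactly the standard route by which the cited result is established. No gaps; nothing further is needed.
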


Combining this with Theorem \ref{twopointtwo}, we can get the following. 

\begin{thm}
    For $0 < \epsilon \le \delta < 1$, we have $\mathscr L_{1+\delta}(\emph{\text{LININT}},\mathcal F_{1+\epsilon}) = O(\epsilon^{-1})$ for $\epsilon \in (0,1)$. 
\end{thm}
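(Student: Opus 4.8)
The statement to prove is: for $0 < \epsilon \le \delta < 1$, we have $\mathscr L_{1+\delta}(\text{LININT},\mathcal F_{1+\epsilon}) = O(\epsilon^{-1})$. The plan is to combine the two immediately preceding results. By Theorem~\ref{twopointtwo} applied with parameter $\epsilon$, we already know $\mathscr L_{1+\epsilon}(\text{LININT},\mathcal F_{1+\epsilon}) = O(\epsilon^{-1})$; in fact the proof of Theorem~\ref{twopointtwo} gives the explicit bound $\mathscr L_{1+\epsilon}(\text{LININT},\mathcal F_{1+\epsilon}) \le \tfrac{6}{\epsilon}$. Now set $q = 1+\epsilon > 1$, $p = 1+\epsilon$, and $p' = 1+\delta$. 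Since $\delta \ge \epsilon$ we have $p' = 1+\delta \ge 1+\epsilon = p > 1$. If $\delta = \epsilon$ there is nothing to do; otherwise $p' > p > 1$ and the hypotheses of the preceding Lemma (the monotonicity statement from~\cite{geneson}) are met.

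The key step is then a single application of that monotonicity lemma: $\mathscr L_{p'}(\text{LININT},\mathcal F_{q}) \le \mathscr L_{p}(\text{LININT},\mathcal F_{q})$, i.e.
\[
\mathscr L_{1+\delta}(\text{LININT},\mathcal F_{1+\epsilon}) \le \mathscr L_{1+\epsilon}(\text{LININT},\mathcal F_{1+\epsilon}) \le \frac{6}{\epsilon}.
\]
This immediately yields $\mathscr L_{1+\delta}(\text{LININT},\mathcal F_{1+\epsilon}) = O(\epsilon^{-1})$, as desired. Intuitively, raising the exponent in the loss from $1+\epsilon$ to $1+\delta$ can only help the learner on the trials where its error is large (those with $e_i < 1$ contribute less, and each $e_i \le 1$ since the range of any function in $\mathcal F_q$ is at most $1$), so the worst-case loss cannot increase.

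There is essentially no obstacle here: the result is a two-line corollary obtained by chaining Theorem~\ref{twopointtwo} with the cited monotonicity lemma, after checking the trivial inequality $1+\delta > 1+\epsilon > 1$ that makes the lemma applicable. The only point worth a sentence of care is the degenerate case $\delta = \epsilon$, where the monotonicity lemma is vacuous and the claim is literally Theorem~\ref{twopointtwo}. After establishing this, the bound $\opt_{1+\delta}(\mathcal F_{1+\epsilon}) \le \mathscr L_{1+\delta}(\text{LININT},\mathcal F_{1+\epsilon}) = O(\epsilon^{-1})$ follows from the definition of $\opt$, and together with the symmetric case $\delta \le \epsilon$ (handled by the analogous argument with the roles of the two parameters interacting through $\opt_{1+\delta}(\mathcal F_{1+\delta}) \ge \opt_{1+\delta}(\mathcal F_{1+\epsilon})$ when $\epsilon \ge \delta$, via $\mathcal F_{1+\delta} \supseteq \mathcal F_{1+\epsilon}$) this will give Theorem~\ref{finitebound}.
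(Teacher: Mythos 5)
Your proposal is correct and is exactly the paper's argument: the theorem is obtained by chaining Theorem \ref{twopointtwo} (giving $\mathscr L_{1+\epsilon}(\text{LININT},\mathcal F_{1+\epsilon}) \le 6/\epsilon$) with the cited monotonicity lemma from \cite{geneson} applied with $p = 1+\epsilon$, $p' = 1+\delta$, $q = 1+\epsilon$. Your extra remarks on the degenerate case $\delta = \epsilon$ and on the subsequent derivation of the $\opt$ bounds match the paper's surrounding development.
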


As such, we have the following upper bound.

\begin{thm}\label{epsilonless}
    For $0 < \epsilon \le \delta < 1$, we have $\opt_{1+\delta}(\mathcal{F}_{1+\epsilon}) = O(\epsilon^{-1})$.
\end{thm}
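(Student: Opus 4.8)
The plan is to observe that this statement is essentially a restatement of the previous theorem through the definition of $\opt$. Recall that $\opt_p(\mathcal{F}) = \inf_A \mathscr L_p(A, \mathcal F)$, where the infimum ranges over all learning algorithms $A$. Hence, to produce an upper bound on $\opt_{1+\delta}(\mathcal F_{1+\epsilon})$, it suffices to exhibit a single algorithm whose worst-case error over $\mathcal F_{1+\epsilon}$ (with exponent $1+\delta$) is bounded by the target quantity.

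The algorithm I would use is LININT. By the preceding theorem, for $0 < \epsilon \le \delta < 1$ we have $\mathscr L_{1+\delta}(\text{LININT}, \mathcal F_{1+\epsilon}) = O(\epsilon^{-1})$. Therefore
\[
\opt_{1+\delta}(\mathcal F_{1+\epsilon}) = \inf_A \mathscr L_{1+\delta}(A, \mathcal F_{1+\epsilon}) \le \mathscr L_{1+\delta}(\text{LININT}, \mathcal F_{1+\epsilon}) = O(\epsilon^{-1}),
\]
which is exactly the claim.

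There is no real obstacle here: the entire content of the statement was already carried out in establishing Theorem \ref{twopointtwo} (the case $\delta = \epsilon$, giving $\mathscr L_{1+\epsilon}(\text{LININT}, \mathcal F_{1+\epsilon}) \le 6/\epsilon$) together with the monotonicity lemma of Geneson and Zhou that lets one raise the error exponent from $1+\epsilon$ to $1+\delta$ without increasing the LININT error. The only remaining step, passing from a bound on a specific algorithm to a bound on $\opt$, is immediate from the infimum in the definition of $\opt_p$. So the write-up will be a two-line deduction chaining these facts.
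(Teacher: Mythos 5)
Your proposal is correct and matches the paper exactly: the paper also obtains this bound by instantiating the infimum in the definition of $\opt_{1+\delta}$ with LININT and invoking the preceding theorem that $\mathscr L_{1+\delta}(\text{LININT},\mathcal F_{1+\epsilon}) = O(\epsilon^{-1})$ for $0 < \epsilon \le \delta < 1$. Nothing further is needed.
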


On the other hand, as $\mathcal{F}_q' \subseteq \mathcal{F}_q$ for all $q' > q \ge 1$ by Jensen's inequality, we have $\opt_p(\mathcal{F}_{q'}) \le \opt_p(\mathcal{F}_q)$ for all $q' > q \ge 1$. Thus, for all $0 < \delta \le \epsilon <1$, we have $\opt_{1+\delta}(\mathcal{F}_{1+\epsilon}) \le \opt_{1+\delta}(\mathcal{F}_{1+\delta}) = O(\delta^{-1})$. Combining this with Theorem \ref{epsilonless}, we have the following upper bound. 

\begin{thm}
    For $\delta, \epsilon \in (0, 1)$, we have $\opt_{1+\delta}(\mathcal{F}_{1+\epsilon}) = O(\min(\delta, \epsilon)^{-1})$.
\end{thm}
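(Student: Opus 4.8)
The plan is to split into the two regimes $\epsilon \le \delta$ and $\delta \le \epsilon$, handling each with a different monotonicity relation already in hand, and then stitch the two conclusions together.

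First I would settle the regime $0 < \epsilon \le \delta < 1$. Here the goal is $\opt_{1+\delta}(\mathcal F_{1+\epsilon}) = O(\epsilon^{-1})$. The core input is Theorem~\ref{twopointtwo}, giving $\mathscr L_{1+\epsilon}(\text{LININT},\mathcal F_{1+\epsilon}) = O(\epsilon^{-1})$ on the diagonal. To raise the first index from $1+\epsilon$ to $1+\delta$ while keeping the function class $\mathcal F_{1+\epsilon}$ fixed, I would invoke the lemma of Geneson and Zhou quoted just above, namely $\mathscr L_{p'}(\text{LININT},\mathcal F_q) \le \mathscr L_{p}(\text{LININT},\mathcal F_q)$ for $p' > p > 1$; with $p = 1+\epsilon$, $p' = 1+\delta$, $q = 1+\epsilon$ this yields $\mathscr L_{1+\delta}(\text{LININT},\mathcal F_{1+\epsilon}) \le \mathscr L_{1+\epsilon}(\text{LININT},\mathcal F_{1+\epsilon}) = O(\epsilon^{-1})$. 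Since $\opt_{1+\delta}(\mathcal F_{1+\epsilon}) \le \mathscr L_{1+\delta}(\text{LININT},\mathcal F_{1+\epsilon})$ by the definition of $\opt$ as an infimum over algorithms, this gives $\opt_{1+\delta}(\mathcal F_{1+\epsilon}) = O(\epsilon^{-1})$ in this regime, which is exactly Theorem~\ref{epsilonless}.

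Next I would handle the complementary regime $0 < \delta \le \epsilon < 1$, where the target bound is $O(\delta^{-1})$. Here I keep the first index $1+\delta$ fixed and shrink the function class. Since $1+\epsilon \ge 1+\delta$, Jensen's inequality gives $\mathcal F_{1+\epsilon} \subseteq \mathcal F_{1+\delta}$, hence $\opt_{1+\delta}(\mathcal F_{1+\epsilon}) \le \opt_{1+\delta}(\mathcal F_{1+\delta})$. Now $\opt_{1+\delta}(\mathcal F_{1+\delta})$ is the diagonal case handled by the previous regime (with $\epsilon$ replaced by $\delta$), so it is $O(\delta^{-1})$, and therefore $\opt_{1+\delta}(\mathcal F_{1+\epsilon}) = O(\delta^{-1})$ whenever $\delta \le \epsilon$.

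Finally I would combine the two cases. In the regime $\epsilon \le \delta$ we have $\min(\delta,\epsilon) = \epsilon$ and the bound $O(\epsilon^{-1}) = O(\min(\delta,\epsilon)^{-1})$; in the regime $\delta \le \epsilon$ we have $\min(\delta,\epsilon) = \delta$ and the bound $O(\delta^{-1}) = O(\min(\delta,\epsilon)^{-1})$. The two regimes overlap on the diagonal $\delta = \epsilon$, where both give $O(\epsilon^{-1})$, so the statements are consistent, and together they cover all $(\delta,\epsilon) \in (0,1)^2$. The implied constant is the same absolute constant coming from Theorem~\ref{twopointtwo} (concretely $6$, up to the constant in the Geneson--Zhou monotonicity lemma), so the $O(\cdot)$ is uniform in $\delta,\epsilon$. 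I do not anticipate a genuine obstacle here: the entire content has been front-loaded into Theorem~\ref{twopointtwo}, and the remaining work is purely bookkeeping with the two monotonicity relations (monotonicity of $\mathscr L_p$ in $p$, and containment of the classes $\mathcal F_q$ via Jensen). The only point requiring the slightest care is making sure the case split is exhaustive and that on the shared boundary $\delta = \epsilon$ both derivations agree, which they do.
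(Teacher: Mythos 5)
Your proposal is correct and matches the paper's own argument essentially step for step: the diagonal bound from Theorem~\ref{twopointtwo}, the Geneson--Zhou monotonicity of $\mathscr L_p(\text{LININT},\mathcal F_q)$ in $p$ for the regime $\epsilon \le \delta$, the containment $\mathcal F_{1+\epsilon} \subseteq \mathcal F_{1+\delta}$ via Jensen for the regime $\delta \le \epsilon$, and the final case combination. No gaps; this is the same route the paper takes.
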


As a corollary, we have now completely classified the values of $p, q \ge 1$ such that $\opt_p(\mathcal{F}_q)$ is finite, confirming a conjecture by \cite{geneson}.

\begin{cor}\label{twopointtwelve}
    The worst-case learning error $\opt_p(\mathcal{F}_q)$ is finite if and only if $p, q > 1$. 
\end{cor}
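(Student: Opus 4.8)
The plan is to prove the two implications separately. The reverse direction (infiniteness when $p = 1$ or $q = 1$) is a direct appeal to Kimber and Long, while the forward direction (finiteness whenever $p, q > 1$) is a short case analysis that stitches together the upper bounds established in this section with the prior results quoted in Section~\ref{one}.

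For the ``only if'' direction I would argue the contrapositive: if $p = 1$ or $q = 1$, then $\opt_p(\mathcal{F}_q) = \infty$. If $q = 1$, this is exactly the statement $\opt_p(\mathcal{F}_1) = \infty$ for every $p \ge 1$ from \cite{kl}. If $p = 1$, I would combine $\opt_1(\mathcal{F}_\infty) = \infty$ (also \cite{kl}) with the monotonicity $\opt_1(\mathcal{F}_\infty) \le \opt_1(\mathcal{F}_q)$ arising from $\mathcal{F}_\infty \subseteq \mathcal{F}_q$, which yields $\opt_1(\mathcal{F}_q) = \infty$ for every $q \ge 1$.

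For the ``if'' direction, fix $p, q > 1$ and split into four cases according to whether each of $p$ and $q$ lies in $(1, 2)$ or is at least $2$. When $p, q \ge 2$ we have $\opt_p(\mathcal{F}_q) = 1$ by \cite{kl}; when $p \ge 2$ and $q = 1 + \epsilon$ with $\epsilon \in (0,1)$ we have $\opt_p(\mathcal{F}_{1+\epsilon}) = O(\epsilon^{-1})$ by \cite{geneson}; when $p = 1 + \delta$ with $\delta \in (0,1)$ and $q \ge 2$ we have $\opt_{1+\delta}(\mathcal{F}_q) = \Theta(\delta^{-1/2})$ by \cite{geneson}; and when $p = 1 + \delta$, $q = 1 + \epsilon$ with $\delta, \epsilon \in (0,1)$ we invoke Theorem~\ref{finitebound}, which gives $\opt_{1+\delta}(\mathcal{F}_{1+\epsilon}) = O(\min(\delta,\epsilon)^{-1})$. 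Each of these four bounds is finite, so the corollary follows.

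The only genuinely new input — and hence the ``main obstacle'' — is entirely contained in the last case, $p, q \in (1, 2)$, which is precisely Theorem~\ref{finitebound} proved earlier in this section; the other three regimes are already settled in the literature. Thus, once Theorem~\ref{finitebound} is in hand the corollary reduces to bookkeeping: one need only observe that every pair $(p, q)$ with $p, q > 1$ falls into one of the four regimes above.
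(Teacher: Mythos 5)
Your proposal is correct and follows essentially the same route as the paper: the corollary is obtained by combining the new bound of Theorem~\ref{finitebound} for $p,q\in(1,2)$ with the previously known results of Kimber--Long and Geneson--Zhou covering all cases where $p\ge 2$ or $q\ge 2$, and the infiniteness when $p=1$ or $q=1$ comes from \cite{kl} exactly as you describe. The paper leaves this bookkeeping implicit, and your four-case split fills it in accurately.
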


\section{Online Learning of Polynomials}\label{2}

We prove the conjecture, raised by \cite{geneson}, that for all $p > 0$ and $q \ge 1$, $\opt_p(\mathcal{P}_q) = \opt_p(\mathcal{F}_q)$. Intuitively, this means that given a fixed $q$-action restriction on a smooth function, having the extra restriction of the function being a polynomial does not decrease the learner's error. The main idea of our proof is that the adversary can replicate any strategy it uses against the learner in the scenario of all smooth functions to the scenario of all smooth polynomials. As such, it can guarantee the same worst-case error in the latter scenario as in the former. Note that in this section, we are not concerned with the learner's strategy, whether it be LININT or another algorithm; we focus only on strategies used by the adversary against the learner, adapting based on whatever predictions the learner makes to maximize error for the learner. 

Our proof invokes the following well-known result on the polynomial approximation of continuous real-valued functions. 

\begin{thm}[Weierstrass Approximation Theorem]\label{weierstrass}
    For any continuous real-valued function $f$ defined on a real interval $[a, b]$ and any $\epsilon > 0$, there exists a polynomial $P$ such that for all $x \in [a, b]$, $|f(x)-P(x)| < \epsilon$. 
\end{thm}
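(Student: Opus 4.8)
The plan is to prove this via \emph{Bernstein polynomials}, which gives an explicit, self-contained argument. First I would reduce to the interval $[0,1]$: the affine map $t \mapsto a+(b-a)t$ is a bijection $[0,1] \to [a,b]$, and composing a polynomial in $t$ with its (affine) inverse yields a polynomial in $x$, so it suffices to approximate $g(t) = f(a+(b-a)t)$ uniformly on $[0,1]$ by polynomials. Since $[0,1]$ is compact, $g$ is bounded, say $|g| \le M$, and uniformly continuous, so for the given $\epsilon > 0$ there is $\delta > 0$ with $|g(s)-g(t)| < \epsilon/2$ whenever $|s-t| < \delta$.

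Next I would introduce, for each positive integer $n$, the Bernstein polynomial
\[
B_n(t) = \sum_{k=0}^{n} g\!\left(\tfrac{k}{n}\right) \binom{n}{k} t^k (1-t)^{n-k},
\]
which is visibly a polynomial of degree at most $n$. The key observation is the probabilistic reading: if $X_{n,t}$ is a Binomial$(n,t)$ random variable, then $B_n(t) = \mathbb{E}\big[g(X_{n,t}/n)\big]$, while $\sum_{k=0}^n \binom{n}{k} t^k (1-t)^{n-k} = 1$ gives $g(t) = \mathbb{E}\big[g(t)\big]$, so
\[
|B_n(t)-g(t)| \le \mathbb{E}\Big[\,\big|g(X_{n,t}/n)-g(t)\big|\,\Big].
\]
Then I would split this expectation according to whether $|X_{n,t}/n - t| < \delta$ or not: on the first event the integrand is below $\epsilon/2$ by uniform continuity, and on the complementary event it is at most $2M$, with probability controlled by Chebyshev's inequality using $\operatorname{Var}(X_{n,t}/n) = t(1-t)/n \le 1/(4n)$, so that this event has probability at most $1/(4n\delta^2)$. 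Hence $|B_n(t)-g(t)| \le \epsilon/2 + M/(2n\delta^2)$ for every $t \in [0,1]$, and choosing $n$ with $M/(2n\delta^2) < \epsilon/2$ makes $\sup_{t \in [0,1]} |B_n(t)-g(t)| < \epsilon$; pulling back through the affine change of variables produces the required polynomial $P$ on $[a,b]$.

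The step I expect to demand the most care is ensuring every estimate is \emph{uniform} in the point $t$ (equivalently $x$), which is exactly what the crude bound $t(1-t) \le 1/4$ buys us in the Chebyshev step; this is why the binomial/Chebyshev route is cleaner here than a direct combinatorial manipulation of the Bernstein sum. An alternative would be to convolve a continuous extension of $f$ with the Landau kernels $c_n(1-x^2)^n$ and run a standard approximate-identity argument, or simply to invoke the Stone--Weierstrass theorem; but the Bernstein construction keeps everything elementary and constructive, which is all that is needed for the way the theorem is applied in Section \ref{2}.
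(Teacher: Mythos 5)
Your Bernstein-polynomial argument is correct and complete: the affine reduction to $[0,1]$, the identity $B_n(t)=\mathbb{E}\bigl[g(X_{n,t}/n)\bigr]$ together with $\sum_k \binom{n}{k}t^k(1-t)^{n-k}=1$, and the split of the expectation via uniform continuity on $\{|X_{n,t}/n-t|<\delta\}$ and Chebyshev with $t(1-t)\le \tfrac14$ on the complement give exactly the uniform bound $\epsilon/2+M/(2n\delta^2)$, which is below $\epsilon$ for large $n$. Note, however, that the paper does not prove this statement at all: it is quoted as the classical Weierstrass Approximation Theorem and used as a black box in Section \ref{2} (in the proof of Lemma \ref{2.1}), so there is no in-paper argument to compare against. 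Your proposal therefore supplies more than the paper requires, namely a self-contained, constructive proof of the cited result; this is a perfectly standard route (Bernstein/probabilistic), and the alternatives you mention (Landau kernels, Stone--Weierstrass) would serve equally well, with Stone--Weierstrass being the generalization the paper itself points to in its discussion of extending the polynomial result to other families.
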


Using the Weierstrass Approximation Theorem, we prove that given any set of known points $S$, the adversary can find a polynomial $P$ that is arbitrarily close to the points in $S$ and has $q$-action bounded above by $\epsilon$ more than the $q$-action of $f_S$ for any $\epsilon >0$. 

\begin{lem}\label{2.1}
    Given any $q \ge 1$, $\epsilon > 0$, and a set $S = \{(u_i, v_i): 1 \le i \le m\}$ of points in $[0, 1] \times \mathbb{R}$ such that $u_1 < \cdots < u_m$ and its corresponding linear interpolation function $f_S$, there exists a polynomial $P: [0, 1] \rightarrow \mathbb{R}$ such that $|P(u_i) - v_i| = |P(u_i) - f_S(u_i)| < \epsilon$ for all $1 \le i \le m$, and \[J_q[P] < J_q[f_S] + \epsilon.\]
\end{lem}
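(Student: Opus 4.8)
The plan is to approximate not $f_S$ itself but its derivative. A polynomial uniformly close to $f_S$ may oscillate wildly and have arbitrarily large $q$-action, so Theorem~\ref{weierstrass} cannot usefully be applied to $f_S$ directly. Instead, observe that $f_S'$ is the step function equal to $c_i := \frac{v_{i+1}-v_i}{u_{i+1}-u_i}$ on each $(u_i,u_{i+1})$ and equal to $0$ on $[0,u_1)\cup(u_m,1]$, so that $J_q[f_S]=\sum_{i=1}^{m-1}|c_i|^q(u_{i+1}-u_i)$; write $M:=\max_i|c_i|$. (When $m\le 1$, $f_S$ is constant and $P\equiv v_1$ already works, so assume $m\ge 2$.) I will regularize $f_S'$ into a continuous function, apply Weierstrass to that, and then integrate.

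First, since $f_S'$ is discontinuous at $u_1,\dots,u_m$, I replace it, on an interval of small width $\tau$ around each $u_i$, by the straight line joining the two neighbouring constant values, leaving it unchanged elsewhere; for $\tau$ small these $m$ ramps are pairwise disjoint, and the result is a continuous $g:[0,1]\to\mathbb R$ with $|g|\le M$ everywhere and $g=f_S'$ off the ramps. Each ramp contributes at most $\tau M^q$ to $\int_0^1|g|^q$, so $\int_0^1|g|^q\le J_q[f_S]+mM^q\tau$; and since $|g-f_S'|\le 2M$ and vanishes off the ramps, $\bigl|\int_0^{u_i}g-\int_0^{u_i}f_S'\bigr|\le 2mM\tau$ for every $i$, where $\int_0^{u_i}f_S'=f_S(u_i)-f_S(0)=v_i-v_1$.

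Next, apply Theorem~\ref{weierstrass} to the continuous function $g$ to obtain a polynomial $p$ with $\|p-g\|_\infty<\tau'$, and set $P(x):=v_1+\int_0^x p(t)\,dt$, which is a polynomial with $P'=p$. Because $t\mapsto|t|^q$ is Lipschitz on $[0,M+1]$ with a constant $L=L(q,M)$ and $|p|,|g|\le M+1$ once $\tau'<1$, we obtain $J_q[P]=\int_0^1|p|^q\le\int_0^1|g|^q+L\tau'\le J_q[f_S]+mM^q\tau+L\tau'$. Moreover $P(u_i)=v_1+\int_0^{u_i}g+\int_0^{u_i}(p-g)$, so $|P(u_i)-v_i|=|P(u_i)-f_S(u_i)|\le 2mM\tau+\tau'$ for every $i$. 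Choosing $\tau,\tau'$ small enough that $mM^q\tau+L\tau'<\epsilon$ and $2mM\tau+\tau'<\epsilon$ finishes the proof.

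The only genuine obstacle is the discontinuity of $f_S'$: it is what forces the two-stage construction — regularize, then approximate the regularized derivative — rather than a one-line appeal to Weierstrass, and it is also why the antiderivative detour is needed to retain control of the $q$-action. Everything else reduces to the elementary fact that $\int_0^1|\cdot|^q$ and the maps $h\mapsto\int_0^{u_i}h$ are continuous under uniform perturbation of a uniformly bounded function.
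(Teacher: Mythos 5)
Your proposal is correct and follows essentially the same route as the paper: regularize the step function $f_S'$ into a continuous ramp function, apply the Weierstrass theorem to that regularization, and take the antiderivative (plus $v_1$) as $P$, controlling both $J_q[P]$ and the values $P(u_i)$ by the two small parameters. The only differences are cosmetic bookkeeping (a Lipschitz bound for $|t|^q$ in place of the paper's explicit $(c_1+\epsilon_3)^q - c_1^q$ estimate, and two-sided rather than one-sided ramps).
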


\begin{proof}
    Consider the derivative of $f_S$, which when considered as a graph, consists of disjoint horizontal line segments, broken off at each $u_i$, where it is not defined. We modify $f_S'$ to make it continuous, so Theorem \ref{weierstrass} can be used. Define $d_0, d_1, \ldots, d_m$ to be the slopes of the segments of $f_S$ in order:
    \[d_i = \begin{cases}
        0 & i = 0 \\
        \frac{v_{i+1}-v_i}{u_{i+1}-u_i} & 1 \le i \le m-1 \\
        0 & i = m.
    \end{cases}\]
    
    Now, for a sufficiently small positive $\epsilon_2 < \min_{1 \le i <  j \le m} |u_i - u_j|$, define the function $F'_S : [0, 1] \rightarrow \mathbb{R}$ such that  \[F_S'(x) = \begin{cases}
        d_0 & x \le u_1 - \epsilon_2 \\ d_{i-1} + \frac{(x - (u_i - \epsilon_2))(d_{i}-d_{i-1})}{\epsilon_2} & x \in (u_i-\epsilon_2,u_{i}] \\ d_i & x \in (u_{i}, u_{i+1}-\epsilon_2] \\
        d_m & x > u_m.
    \end{cases}\]

    It is easy to see that $F'_S$ is continuous for all $x \in (0, 1)$. Indeed, intuitively, the graph of $F'_S$ is essentially the graph of $f'_S$ but with the disjoint horizontal segments ``connected".
    
    We claim that for sufficiently small $\epsilon_2$, $\int_{0}^x |F'_S(t)|^q \text{d}t$ gets arbitrarily close to $ \int_{0}^x |f'_S(t)|^q \text{d}t$ for any $x \in [0, 1]$ and any $q \ge 1$. Indeed, $F'_S(t)$ only differs from $f'_S(t)$ when $t \in (u_i - \epsilon_2, u_i]$ for some $1 \le i \le m$. For a fixed $i$, when $t$ is in this range, $f'_S(t)$ is precisely $ d_{i-1}$, whereas $F'_S(t)$ is between $d_{i-1}$ and $d_i$. As such, $||f'_S(t)|^q - |F'_S(t)|^q|$ is bounded above by $c = \max_{1 \le i < j \le m} \left( |d_i|^q - |d_j|^q \right)$, which is finite, for any $q$. As such, for any $x \in [0, 1]$,

    \begin{align*}
        \left| \int_{0}^x |F'_S(t)|^q \text{d}t - \int_{0}^x |f'_S(t)|^q \text{d}t \right|  &\le   \int_{0}^x \left ||f'_S(t)|^q - |F'_S(t)|^q\right| \text{d}t \\
        &\le \sum_{i=1}^m  \int_{u_i - \epsilon_2}^{u_i} \left ||f'_S(t)|^q - |F'_S(t)|^q \right|\text{d}t  \\
        &\le \sum_{i=1}^m \int_{u_i -\epsilon_2}^{u_i} c \text{d}t = mc\epsilon_2,
    \end{align*}

    \noindent which gets arbitrarily close to $0$ if we set $\epsilon_2$ sufficiently small. We will use this inequality later on. Note that as a corollary, setting $x=1$ above yields that $\int_{0}^1 |F'_S(t)|^q \text{d}t$ gets arbitrarily close to $J_q[f_S]$ when we set $\epsilon_2$ sufficiently small—we will also use this fact later on.
    
    Now, for sufficiently small $\epsilon_3 > 0$ (which we will specify later), consider a polynomial $Q$ such that for all $x \in [0, 1]$, $|F'_S(x) - Q(x)| < \epsilon_3$, which is possible by Theorem \ref{weierstrass} as $F'_S$ is continuous for all $x \in [0, 1]$. Subsequently, let $P(x) \equiv \int_0^{x} Q(t) \text{d}t + v_1$, which is clearly a polynomial. 
    
    We claim that $J_q[P] < J_q[f_S] + \epsilon$ for any $\epsilon > 0$, given that our choices of $\epsilon_2$ and $\epsilon_3$ are sufficiently small. In particular, define $c_1 = \max_{x \in [0, 1]}|F'_S(x)| = \max_{0 \le i \le m} |d_i|$. Now, pick $\epsilon_2 < \frac{\epsilon}{2mc}$ and let $\epsilon_3$ be sufficiently small such that $(c_1 + \epsilon_3)^q - c_1^q < \frac{\epsilon}{2}$. Note that this implies that $(|F'_S(x)| + \epsilon_3)^q - (|F'_S(x)|)^q < \frac{\epsilon}{2}$ for all $x \in [0, 1]$, as $0 \le |F'_S(x)| \le c_1$ and the function $(y+\epsilon_3)^q - y^q$ is increasing for all $y \ge 0$ and any positive $\epsilon$.

    As such, we have the upper bound
    \begin{align*}
        J_q[P] &= \int_0^1 |Q(x)|^q \text{d}x \le \int_0^1 (|F'_S(x)| + \epsilon_3)^q \text{d}x \le \int_0^1 \left(|F'_S(x)|^q + \frac{\epsilon}{2} \right) \text{d}x \\
        &= \int_0^1 |F'_S(x)|^q \text{d}x + \frac{\epsilon}{2} \le J_q[f_S] + \left(\int_0^1 |F'_S(x)|^q \text{d}x - J_q[f_S] \right) + \frac{\epsilon}{2} \\
        &= J_q[f_S] + \left(\int_0^1 |F'_S(x)|^q \text{d}x - \int_{0}^1 |f'_S(x)|^q \text{d}x \right) + \frac{\epsilon}{2} \\
        &\le J_q[f_S] + mc\epsilon_2 +  \frac{\epsilon}{2} < J_q[f_S] + \frac{\epsilon}{2} + \frac{\epsilon}{2} = J_q[f_S] + \epsilon.
    \end{align*}

    Furthermore, we also claim that $|P(u_i) - v_i| < \epsilon$ for any $\epsilon >0$, given that our choices of $\epsilon_2$ and $\epsilon_3$ are sufficiently small. Indeed, as long as $\epsilon_3 < \frac{\epsilon}{2}$ and $\epsilon_2 < \frac{\epsilon}{2mc}$, we have for each $1 \le i \le m$ that
    \begin{align*}
       P(u_i) &= \int_0^{u_i} Q(t) \text{d}t + v_1 \le \int_0^{u_i} \left(F'_S(t) + \epsilon_3 \right) \text{d}t + v_1 \le \int_0^{u_i} \left(F'_S(t)\right) \text{d}t + \epsilon_3 + v_1 \\
       &\le \int_0^{u_i} \left(f'_S(t)\right) \text{d}t + mc\epsilon_2 + \epsilon_3 + v_1 = (v_i - v_1) + v_1 + mc\epsilon_2 + \epsilon_3  \\
       &= v_i + mc\epsilon_2 + \epsilon_3 < v_i + \frac{\epsilon}{2} + \frac{\epsilon}{2} = v_i + \epsilon
    \end{align*}
    and 
    \begin{align*}
       P(u_i) &= \int_0^{u_i} Q(t) \text{d}t + v_1 \ge \int_0^{u_i} \left(F'_S(t) - \epsilon_3 \right) \text{d}t + v_1 \ge \int_0^{u_i} \left(F'_S(t) \right) \text{d}t - \epsilon_3 + v_1 \\
       &\ge \int_0^{u_i} \left(f'_S(t)\right) \text{d}t - mc\epsilon_2 - \epsilon_3 + v_1 = (v_i - v_1) + v_1 - mc\epsilon_2 - \epsilon_3  \\
       &= v_i - mc\epsilon_2 - \epsilon_3 < v_i - \frac{\epsilon}{2} - \frac{\epsilon}{2} = v_i - \epsilon.
    \end{align*}
    Therefore, for any $\epsilon > 0$, we can find a polynomial $P$ such that $|P(u_i)-v_i|< \epsilon$ for all $1 \le i \le m$ and $J_q[P] < J_q[f_S] + \epsilon$. \\
\end{proof}

For a finite set of points $S \subseteq [0, 1] \times \mathbb{R}$, we now prove a fact about constructing a function that passes through all points in $S$ by taking a weighted average of a special set of $2^{|S|}$ functions that each do not necessarily pass through the points in $S$. We will later combine this with Lemma \ref{2.1} to implicitly construct a polynomial passing through all points in $S$ by taking a weighted average of $2^{|S|}$ polynomials that each approximate but do not necessarily pass through the points in $S$. We use the convention that $[m] := \{1, 2, \ldots, m\}$.

\begin{lem}\label{weightedaverage}
    Suppose there is a set $S = \{(u_i, v_i): 1 \le i \le m\}$ of $m$ points in $[0, 1] \times \mathbb{R}$ such that $u_1 < \ldots < u_m$. If we have $2^m$ functions $f_X : [0, 1] \times \mathbb{R}$, corresponding to each subset $X \subseteq [m]$, such that for any $X \subseteq [m]$, it holds for any integer $1 \le i \le m$ with $i \in X$ that $f_X(u_i) > v_i$, and it holds for any integer $1 \le i \le m$ with $i \notin X$ that $f_X(u_i) < v_i$, then there exists a weighted average of the functions $f \equiv \sum_{X \subseteq [m]} w_X f_X$, with $0 \le w_X \le 1$ for all $X \subseteq [m]$ and $\sum_{X \subseteq [m]} w_X = 1$, such that $f(u_i) = v_i$ for all $1 \le i \le m$. 
\end{lem}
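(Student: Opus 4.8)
The plan is to induct on $m$, the number of points in $S$. Only the real numbers $f_X(u_i)$ matter, so write $g_X(i) = f_X(u_i) - v_i$; the hypothesis then says $g_X(i) > 0$ when $i \in X$ and $g_X(i) < 0$ when $i \notin X$, and we want weights $w_X \ge 0$ with $\sum_{X \subseteq [m]} w_X = 1$ and $\sum_{X \subseteq [m]} w_X g_X(i) = 0$ for every $i$. For the base case $m = 1$ there are just two functions, $f_{\emptyset}$ and $f_{\{1\}}$, with $f_{\emptyset}(u_1) < v_1 < f_{\{1\}}(u_1)$; the weights $w_{\{1\}} = \frac{v_1 - f_{\emptyset}(u_1)}{f_{\{1\}}(u_1) - f_{\emptyset}(u_1)}$ and $w_{\emptyset} = 1 - w_{\{1\}}$ lie in $(0,1)$ and work by a one-line computation.

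For the inductive step, assume the statement for $m - 1$ points and group the $2^m$ subsets of $[m]$ into the $2^{m-1}$ pairs $\{Y,\, Y \cup \{m\}\}$ with $Y \subseteq [m-1]$. Since $f_Y(u_m) < v_m < f_{Y \cup \{m\}}(u_m)$, there is a unique $\lambda_Y \in (0,1)$ with $\lambda_Y f_Y(u_m) + (1 - \lambda_Y) f_{Y \cup \{m\}}(u_m) = v_m$; set $\widetilde f_Y = \lambda_Y f_Y + (1 - \lambda_Y) f_{Y \cup \{m\}}$, so that $\widetilde f_Y(u_m) = v_m$ exactly. The key observation is that for each $i \le m - 1$ one has $i \in Y$ if and only if $i \in Y \cup \{m\}$, so $g_Y(i)$ and $g_{Y \cup \{m\}}(i)$ have the same strict sign, and therefore $\widetilde f_Y(u_i) - v_i$, being a nontrivial convex combination of these two numbers, has that same sign: positive iff $i \in Y$. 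Hence the $2^{m-1}$ functions $\widetilde f_Y$, viewed on the smaller point set $\{(u_i, v_i) : 1 \le i \le m - 1\}$, satisfy exactly the hypotheses of the lemma with $m - 1$ in place of $m$.

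Applying the inductive hypothesis produces weights $\mu_Y \ge 0$ with $\sum_{Y \subseteq [m-1]} \mu_Y = 1$ such that $f = \sum_{Y \subseteq [m-1]} \mu_Y \widetilde f_Y$ satisfies $f(u_i) = v_i$ for $1 \le i \le m - 1$; and since every $\widetilde f_Y$ already passes through $(u_m, v_m)$, also $f(u_m) = \sum_{Y} \mu_Y v_m = v_m$. Re-expanding, $f = \sum_{Y \subseteq [m-1]} \bigl( \mu_Y \lambda_Y f_Y + \mu_Y (1 - \lambda_Y) f_{Y \cup \{m\}} \bigr)$, so setting $w_Y = \mu_Y \lambda_Y$ and $w_{Y \cup \{m\}} = \mu_Y (1 - \lambda_Y)$ gives nonnegative weights with $\sum_{X \subseteq [m]} w_X = \sum_{Y \subseteq [m-1]} \mu_Y (\lambda_Y + 1 - \lambda_Y) = 1$, which is the required convex combination.

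I expect no serious obstacle: the content is the elementary fact that a collection of $2^m$ points, one in each open orthant of $\mathbb{R}^m$, has the origin in its convex hull, and the induction above is essentially the standard proof of it. The only point demanding care is the sign bookkeeping in the inductive step — checking that collapsing the $m$-th coordinate leaves every strict inequality at $u_1, \dots, u_{m-1}$ intact — which works precisely because the coordinate being eliminated is the one on which $Y$ and $Y \cup \{m\}$ differ. One could instead invoke Brouwer's fixed point theorem or the Poincaré--Miranda theorem applied to the map $w \mapsto \bigl(\sum_X w_X g_X(i)\bigr)_i$ on the simplex, but the inductive argument is more self-contained and avoids any appeal to topology.
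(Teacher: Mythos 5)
Your proposal is correct and takes essentially the same route as the paper: both argue by induction on $m$, split the $2^m$ subsets according to whether they contain $m$, and use sign-preserving convex combinations, with the base case handled identically. The only difference is the order of the two averaging steps — you first collapse each pair $\{Y,\,Y\cup\{m\}\}$ to pass exactly through $(u_m,v_m)$ and then invoke the inductive hypothesis on the resulting $2^{m-1}$ functions, whereas the paper first applies the inductive hypothesis within each of the two halves (subsets containing $m$, subsets not containing $m$) and then averages the two resulting functions at $u_m$ — a harmless reordering of the same argument.
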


\begin{proof}
    We proceed by induction. For the base case of $m=1$, we are given two functions $f_{\emptyset}, f_{\{1\}} : [0, 1] \rightarrow \mathbb{R}$, and one point $(u_1, v_1)$ that the desired weighted average should pass through. As $f_{\{1\}}(u_1) > v_1$ and $f_{\emptyset}(u_1) < v_1$, we can clearly assign two weights $0 \le w_{\{1\}}, w_{\emptyset} \le 1$ with sum $1$ such that $w_{\{1\}}f_{\{1\}}(u_1) + w_{\emptyset}f_{\emptyset} = v_1$. \\

    Proceeding with the inductive step, we assume that the result is true for $m-1$, and prove that it is true for $m$ as well. Let the set of points be $S = \{(u_i, v_i): 1 \le i \le m\}$, and functions be $f_X: [0, 1] \times \mathbb{R}$ where $X \subseteq [m]$. Partition the $2^m$ functions into two groups, $A$ and $B$, such that $f_X$ goes into $A$ if $m \in X$, and goes into $B$ if $m \notin X$. Clearly, each of the two groups have $2^{m-1}$ functions. Note that for every subset $Y \subseteq [m-1]$, there exists a function in each of $A$ and $B$ such that the output at $u_i$ is greater than $v_i$ precisely when $i \in Y$, and the output at $u_i$ is less than $v_i$ precisely when $i \notin Y$ (specifically, the function $f_{Y \cup \{m\}}$ in $A$, and the function $f_Y$ in $B$). By the inductive hypothesis, there exists a weighted average $F_1 \equiv \sum_{X \subseteq [m], m \in X} w_X f_X$ of the functions in $A$ such that $F_1 (u_i) = v_i$ for all $1 \le i \le m-1$. Similarly, there exists a weighted average $F_2 \equiv \sum_{X \subseteq [m], m \notin X} w_X f_X$ of the functions in $B$ such that $F_2 (u_i) = v_i$ for all $1 \le i \le m-1$. \\

    Note that as all of the functions in group $A$ satisfy $f(u_m) > v_m$, the weighted average of these functions, $F_1$, must satisfy $F_1(u_m) > v_m$ as well. Similarly, as all of the functions in the functions in group $B$ satisfy $f(u_m) < v_m$, the weighted average, $F_2$, must satisfy $F_2(u_m) < v_m$. As such, there exists a weighted average of the functions $F_1$ and $F_2$, $f \equiv W_1 F_1 + W_2 F_2$, with $0 \le W_1, W_2 \le 1$ and $W_1 + W_2 = 1$, such that $f(u_m) = v_m.$ Furthermore, note that as $F_1(u_i) = F_2(u_i) = v_i$ for all $1 \le i \le m-1$, $f(u_i) = v_i$ for all $1 \le i \le m-1$ as well. Thus, $f(u_i) = v_i$ for all $ 1 \le i \le m$. Furthermore, as $f$ is the weighted average of two weighted averages of functions $f_X$ where $X \subseteq [m]$, $f$ is a weighted average of functions $f_X$ where $X \subseteq [m]$, and is hence our desired function.
\end{proof}

We now establish that any function that is a weighted average of a set of functions that each have $q$-action less than $1$ also has $q$-action less than $1$. 

\begin{lem}\label{powermean}
    For any $q \ge 1$, if we have $n \ge 1$ continuous functions $f_1, f_2, \ldots, f_n : [0, 1] \rightarrow \mathbb{R}$ such that $J_q[f_i] < 1$ for all $1 \le i \le n$, then for any weighted average of the functions $f \equiv \sum_{i=1}^n w_i f_i$, with $0 \le w_i \le 1$ for all $1 \le i \le n$ and $\sum_{i=1}^n w_i = 1$, we have $J_q[f] < 1.$
\end{lem}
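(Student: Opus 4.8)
The plan is to reduce the statement to a single pointwise inequality, namely the convexity of the map $t \mapsto |t|^q$ on $\mathbb{R}$, which holds precisely because $q \ge 1$. Since the weights $w_1, \ldots, w_n$ are nonnegative and sum to $1$, Jensen's inequality applied at almost every point $x \in [0,1]$ gives
\[ |f'(x)|^q = \left| \sum_{i=1}^n w_i f_i'(x) \right|^q \le \sum_{i=1}^n w_i |f_i'(x)|^q. \]
Integrating this over $[0,1]$ and using linearity of the integral yields
\[ J_q[f] = \int_0^1 |f'(x)|^q \, \text{d}x \le \sum_{i=1}^n w_i \int_0^1 |f_i'(x)|^q \, \text{d}x = \sum_{i=1}^n w_i \, J_q[f_i]. \]
First I would set this computation up carefully: a finite convex combination of absolutely continuous functions is absolutely continuous, so $f' = \sum_i w_i f_i'$ holds almost everywhere and $J_q[f]$ is well-defined as written. (In the intended application the $f_i$ are polynomials, hence smooth, so these regularity considerations are automatic.)

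Then I would close the argument with the required strict inequality. Since $J_q[f_i] < 1$ for every $i$ and $\sum_{i=1}^n w_i = 1$, the convex combination is bounded by the largest term:
\[ \sum_{i=1}^n w_i \, J_q[f_i] \le \max_{1 \le i \le n} J_q[f_i] < 1, \]
so $J_q[f] < 1$, as desired.

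The main (and essentially only) obstacle here is bookkeeping rather than substance: one must check that differentiating the finite sum term-by-term is legitimate almost everywhere — which follows from absolute continuity of each $f_i$ — and that the final inequality is strict, for which it suffices to dominate the convex combination by $\max_i J_q[f_i]$. No delicate estimate, series expansion, or approximation is needed; the lemma is a direct consequence of Jensen's inequality for the convex function $|\cdot|^q$.
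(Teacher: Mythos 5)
Your proposal is correct and follows essentially the same route as the paper: a pointwise convexity estimate $\bigl|\sum_i w_i f_i'(x)\bigr|^q \le \sum_i w_i |f_i'(x)|^q$ (the paper phrases it as the triangle inequality followed by the weighted power mean inequality, you phrase it as Jensen's inequality for $|\cdot|^q$), then integration and the observation that a finite convex combination of quantities strictly below $1$ is strictly below $1$. Your extra remarks on absolute continuity and on bounding the convex combination by the maximum are fine and do not change the argument in any essential way.
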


\begin{proof}
    We can see that
    \begin{align*}
        J_q[f] &= \int_0^1 |f'(x)|^q \text{d}x = \int_0^1 \left| \sum_{i=1}^n w_i f_i'(x) \right|^q \text{d}x \le \int_0^1 \left( \sum_{i=1}^n \left| w_i \right| \left|f_i'(x) \right| \right)^q \text{d}x\\
        &\le \int_0^1 \left(\left( \sum_{i=1}^n w_i \left|f_i'(x) \right|^q \right)^\frac{1}{q} \right)^q \text{d}x = \int_0^1 \left( \sum_{i=1}^n w_i \left|f_i'(x) \right|^q \right) \text{d}x \\
        &= \sum_{i=1}^n \left(\int_0^1 w_i \left|f_i'(x) \right|^q  \text{d}x \right) = \sum_{i=1}^n w_i J_q[f_i] < \sum_{i=1}^n w_i = 1,
    \end{align*}
where the fourth step follows by the Weighted Power Mean Inequality, as $q \ge 1$. 
\end{proof}

Putting everything together, we have the following key result.

\begin{lem}\label{epsilonpolynomial}
    Given a set $S = \{(u_i, v_i) : 1 \le i \le m\}$ of $m$ points in $[0, 1] \times \mathbb{R}$ with $u_1 < \cdots < u_m$ and any $q \ge 1$, if there exists an absolutely continuous function $f : [0, 1] \rightarrow \mathbb{R}$ with $J_q[f] < 1$ such that $f(u_i) = v_i$ for all $1 \le i \le m$, then there exists a polynomial $P : [0, 1] \rightarrow \mathbb{R}$ with $J_q[P] < 1$ such that $P(u_i) = v_i$ for all $1 \le i \le m$. 
\end{lem}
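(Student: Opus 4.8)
The plan is to assemble the statement from the four preceding lemmas: Lemma \ref{linintmin} (to pass from $f$ to $f_S$), Lemma \ref{2.1} (polynomial approximation with controlled $q$-action), Lemma \ref{weightedaverage} (forcing exact interpolation via a weighted average of one-sided approximants), and Lemma \ref{powermean} (the $q$-action of a weighted average stays below $1$). First I would observe that since $f$ passes through every point of $S$ and $J_q[f] < 1$, Lemma \ref{linintmin} gives $J_q[f_S] \le J_q[f] < 1$; set $\eta_0 = 1 - J_q[f_S] > 0$, which is the slack I will spend on perturbations and approximation error.

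Second, for each subset $X \subseteq [m]$ I would construct a polynomial $P_X$ with $J_q[P_X] < 1$ realizing the sign pattern $P_X(u_i) > v_i$ for $i \in X$ and $P_X(u_i) < v_i$ for $i \notin X$. To do this, put $\sigma_i^X = +1$ if $i \in X$ and $\sigma_i^X = -1$ otherwise, fix a small $\gamma > 0$, and consider the perturbed point set $S_X = \{(u_i, v_i + \sigma_i^X \gamma) : 1 \le i \le m\}$. On each segment the slope of $f_{S_X}$ equals the corresponding slope of $f_S$ plus a term of size $O(\gamma/\min_j|u_{j+1}-u_j|)$, so $J_q[f_{S_X}] \to J_q[f_S]$ as $\gamma \to 0$; as there are only finitely many $X$, one can fix a single $\gamma$ with $J_q[f_{S_X}] < 1 - \eta_0/2$ for all $X$. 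Then apply Lemma \ref{2.1} to $S_X$ with accuracy parameter $\epsilon' < \min(\gamma, \eta_0/2)$ to get a polynomial $P_X$ with $|P_X(u_i) - (v_i + \sigma_i^X \gamma)| < \epsilon'$ for every $i$ and $J_q[P_X] < J_q[f_{S_X}] + \epsilon' < 1$. Since $\epsilon' < \gamma$, the first property forces $P_X(u_i) > v_i$ exactly when $\sigma_i^X = +1$ and $P_X(u_i) < v_i$ exactly when $\sigma_i^X = -1$, which is precisely the hypothesis of Lemma \ref{weightedaverage}.

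Third, I would apply Lemma \ref{weightedaverage} to the family $\{P_X\}_{X \subseteq [m]}$ to obtain weights $w_X \in [0,1]$ with $\sum_X w_X = 1$ such that $P := \sum_{X \subseteq [m]} w_X P_X$ satisfies $P(u_i) = v_i$ for all $i$; being a finite linear combination of polynomials, $P$ is itself a polynomial. Finally, since each $P_X$ is continuous with $J_q[P_X] < 1$, Lemma \ref{powermean} gives $J_q[P] < 1$, which finishes the proof.

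The only genuinely delicate point — the ``main obstacle'' — is the coupled choice of the perturbation size $\gamma$ and the approximation accuracy $\epsilon'$: $\gamma$ must be small enough that all $2^m$ perturbed linear interpolants have $q$-action bounded away from $1$, while $\epsilon'$ must simultaneously be strictly smaller than $\gamma$ (to keep each $P_X$ on the correct side of $v_i$ at every $u_i$) and small enough that $J_q[P_X]$ still falls below $1$. It is also worth noting that Lemma \ref{powermean} is stated with the strict inequality $J_q < 1$, matching exactly what the construction produces, so no boundary case needs separate treatment.
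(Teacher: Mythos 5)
Your proposal is correct and follows essentially the same route as the paper: pass to $f_S$ via Lemma \ref{linintmin}, perturb the targets by $\pm\gamma$ for each sign pattern $X$ (the paper's $\epsilon_1$), verify the perturbed interpolants keep $q$-action below $1$, invoke Lemma \ref{2.1} with accuracy strictly smaller than both the perturbation and the remaining slack (the paper's $\epsilon_2$), and finish with Lemmas \ref{weightedaverage} and \ref{powermean}. The coupling of the two small parameters that you flag as the delicate point is handled in exactly the same way in the paper's proof.
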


\begin{proof}
    Assume that $m \ge 2$, as the case where there is only one point in $S$ is trivial. Note that by Lemma \ref{linintmin}, the condition that $J_q[f] < 1$ implies that $J_q[f_S] < 1$. Now, pick a  small $\epsilon_1 > 0$ (which we will specify later), and define the set $S_X$, for each subset $X \subseteq [m]$, as follows: \[S_X = \{(u_i, v_i + \epsilon_1): 1 \le i \le m, i \in X\} \cup \{(u_i, v_i - \epsilon_1): 1 \le i \le m, i \notin X\}.\] As such, each $S_X$ contains exactly $m$ points, each of whose $y$-values are within $\epsilon_1$ from the $y$-values of the corresponding points in $S$. For each $X$, we claim that setting $\epsilon_1>0$ sufficiently small, we can guarantee $J_q[f_{S_X}] < 1$. Indeed, if we let $d_0, d_1, \ldots, d_m$ denote the slopes of the segments of $f_S$ in order, and define $d_0', d_1', \ldots, d_m'$ to be the slopes of the segments of $f_{S_X}$ in order, notice that $d_0' = d_m' = 0$, and for each $1 \le i \le m-1$, we have either $d_i' = d_i$, $d_i' = d_i + \frac{2\epsilon_1}{u_{i+1}-u_i}$, or $d_i' = d_i - \frac{2\epsilon_1}{u_{i+1}-u_i}$. As such, in any case, we have \[|d_i'| \le |d_i| + \frac{2\epsilon_1}{u_{i+1} - u_i} \le |d_i| + \frac{2\epsilon_1}{\min_{1 \le i \le m-1} |u_{i+1} - u_i| } = |d_i| + C \epsilon_1,\] if we substitute $C = \frac{2}{\min_{1 \le i \le m-1} |u_{i+1} - u_i|}$, which is fixed for a set $S$. Notice that for each $i$, we can set $\epsilon_1$ sufficiently small such that the inequality $\left(|d_i| + C\epsilon_1 \right)^q - |d_i|^q < \frac{1 - J_q[f_S]}{m}$ holds, as $\frac{1 - J_q[f_S]}{m}$ is positive. Hence, let $\epsilon_1$ be sufficiently small such that the inequality $\left(|d_i| + C\epsilon_1 \right)^q - |d_i|^q < \frac{1 - J_q[f_S]}{m}$ holds for each $1 \le i \le m-1$. 
    
    Now, notice that
    \begin{align*}
    J_q[f_{S_X}] - J_q[f_S] &= \sum_{i=1}^{m-1} |u_{i+1} - u_i| |d_i'|^q - \sum_{i=1}^{m-1} |u_{i+1} - u_i| |d_i|^q = \sum_{i=1}^{m-1} |u_{i+1} - u_i| (|d_i'|^q - |d_i|^q) \\
    &\le \sum_{1 \le i \le m-1, |d_i'| > |d_i|} |u_{i+1} - u_i| \left(|d_i'|^q - |d_i|^q \right)  \\ &\le \sum_{1 \le i \le m-1, |d_i'| > |d_i|} \left(|d_i'|^q - |d_i|^q \right) \\
    &\le \sum_{1 \le i \le m-1, |d_i'| > |d_i|} \left(\left(|d_i| + C\epsilon_1 \right)^q - |d_i|^q \right) < m \cdot \frac{1 - J_q[f_S]}{m} = 1 - J_q[f_{S}],
    \end{align*}
    implying that $J_q[f_{S_X}] < 1$. Therefore, we can pick sufficiently small $\epsilon_1$ such that for each $X \subseteq [m]$, we have $J_q[f_{S_X}] < 1$. 
    
    Now, for each $X \subseteq [m]$, pick a sufficiently small $\epsilon_2 > 0$ such that $\epsilon_2 < \epsilon_1$ and $\epsilon_2 < 1 - J_q[f_{S_X}]$, and let $P_X: [0, 1] \rightarrow \mathbb{R}$ be a polynomial such that $$|P_X(u_i) - f_{S_X}(u_i)| < \epsilon_2 \text{ for all } 1 \le i \le m \text{, and } J_q[P_X] < J_q[f_{S_X}] + \epsilon_2,$$ which must be possible by Lemma \ref{2.1}. By the first condition, we can see that whenever $1 \le i \le m$ satisfies $i \in X$, we have \[P_X(u_i) > f_{S_X}(u_i) - \epsilon_2 > f_{S_X}(u_i) - \epsilon_1 = v_i,\] and whenever $i \notin m$, we have \[P_X(u_i) < f_{S_X}(u_i) + \epsilon_2 < f_{S_X}(u_i) + \epsilon_1 = v_i. \] Furthermore, the second condition yields that $J_q[P_X] < 1$ as well. 
    
    As such, we have $2^m$ polynomials $P_X$, corresponding to each subset $X \subseteq [m]$, such that for each $X$, it holds for any integer $1 \le i \le m$ with $i \in X$ that $P_X(u_i) > v_i$, and it holds for any integer $1 \le i \le m$ with $i \notin X$ that $P_X(u_i) < v_i$. By Lemma \ref{weightedaverage}, there exists a weighted average of the polynomials, another polynomial, $P \equiv \sum_{X \subseteq [m]} w_X P_X$, with weights $0 \le w_X \le 1$ for each $X$ and $\sum_{X \subseteq [m]} w_X = 1$, such that $P(u_i) = v_i$ for all $1 \le i \le m$. Furthermore, by Lemma \ref{powermean}, as $J_q[P_X] < 1$ for all $X$, we know that $J_q[P] < 1$ as well. 
\end{proof}

Now, for any $q \ge 1$, let $\mathcal{F}'_q$ denote the class of functions $f: [0, 1] \rightarrow \mathbb{R}$ such that $J_q[f] < 1$. As such, $\mathcal{F}'_q \subseteq \mathcal{F}_q$. Similarly, let $\mathcal{P}'_q$ denote the class of polynomials $P: [0, 1] \rightarrow \mathbb{R}$ such that $J_q[P] < 1$. As such, $\mathcal{P}'_q \subseteq \mathcal{P}_q$ and $\mathcal{P}'_q \subseteq \mathcal{F}'_q$. We now establish the following key result that looks very close to our desired final result. 

\begin{lem}\label{actionlessthanone}
    For all $p>0$ and $q \ge 1$, we have $\opt_p(\mathcal{P}'_q) = \opt_p(\mathcal{F}'_q)$.
\end{lem}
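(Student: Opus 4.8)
The plan is to establish the two inequalities $\opt_p(\mathcal{P}'_q) \le \opt_p(\mathcal{F}'_q)$ and $\opt_p(\mathcal{P}'_q) \ge \opt_p(\mathcal{F}'_q)$ separately. The first is immediate from $\mathcal{P}'_q \subseteq \mathcal{F}'_q$: for every learning algorithm $A$ the supremum defining $\mathscr{L}_p(A, \mathcal{P}'_q)$ ranges over a subfamily of the functions appearing in the supremum defining $\mathscr{L}_p(A, \mathcal{F}'_q)$, so $\mathscr{L}_p(A, \mathcal{P}'_q) \le \mathscr{L}_p(A, \mathcal{F}'_q)$, and taking the infimum over $A$ gives the claim.

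For the reverse inequality, the key observation is that the loss $\mathscr{L}_p(A, f, \sigma) = \sum_{t=1}^m |\hat y_t - f(x_t)|^p$ depends on $f$ only through its values $f(x_0), \dots, f(x_m)$ at the queried inputs, since each prediction $\hat y_t$ is a deterministic function of $(x_0, f(x_0), \dots, x_{t-1}, f(x_{t-1}), x_t)$ alone. As recorded at the start of Section \ref{standard}, we may moreover assume the inputs in $\sigma$ are pairwise distinct, because a repeated query contributes zero to the loss (the learner already knows that value). So fix any algorithm $A$, any $f \in \mathcal{F}'_q$, and any input sequence $\sigma = (x_0, \dots, x_m)$ with distinct entries; writing the inputs in increasing order as $u_1 < \dots < u_{m+1}$ and setting $v_i = f(u_i)$, the function $f$ itself verifies the hypothesis of Lemma \ref{epsilonpolynomial} (it is absolutely continuous with $J_q[f] < 1$ and passes through all of the points). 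Hence there is a polynomial $P \in \mathcal{P}'_q$ with $P(u_i) = v_i$ for all $i$, i.e. $P$ agrees with $f$ at every queried input, and since the loss only sees those values we get $\mathscr{L}_p(A, P, \sigma) = \mathscr{L}_p(A, f, \sigma)$.

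Therefore $\mathscr{L}_p(A, \mathcal{P}'_q) = \sup_{P \in \mathcal{P}'_q, \sigma} \mathscr{L}_p(A, P, \sigma) \ge \sup_{f \in \mathcal{F}'_q, \sigma} \mathscr{L}_p(A, f, \sigma) = \mathscr{L}_p(A, \mathcal{F}'_q)$ for every $A$; taking the infimum over $A$ yields $\opt_p(\mathcal{P}'_q) \ge \opt_p(\mathcal{F}'_q)$, and combining with the first inequality completes the proof. Phrased in the adversarial language of this section, this says the adversary can replay, against any polynomial learner, the very same inputs and revealed values it would have used against a general-function learner: Lemma \ref{epsilonpolynomial} guarantees that whatever finite record of points the adversary has committed to remains consistent with some polynomial in $\mathcal{P}'_q$.

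I do not expect a genuine obstacle here, since all of the analytic content has been pushed into Lemma \ref{epsilonpolynomial} (which itself rests on Weierstrass approximation, the weighted-average construction of Lemma \ref{weightedaverage}, and the power-mean bound of Lemma \ref{powermean}). The only points needing care are the reduction to distinct inputs and the observation --- essentially a matter of unwinding the definition of the learner --- that swapping $f$ for the interpolating polynomial $P$ changes neither the learner's sequence of predictions nor any raw error $e_t$, hence not the loss.
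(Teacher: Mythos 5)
Your proposal is correct and takes essentially the same approach as the paper: the paper likewise derives the lemma from Lemma~\ref{epsilonpolynomial}, phrased as the adversary being able to reveal the same sequence of points in either learning scenario, which is exactly your observation that the loss depends on $f$ only through its values at the queried inputs. Your write-up simply makes the supremum/infimum bookkeeping explicit and handles the easy direction via the inclusion $\mathcal{P}'_q \subseteq \mathcal{F}'_q$, whereas the paper folds both directions into the informal strategy-replication argument.
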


\begin{proof}

Note that by Lemma \ref{epsilonpolynomial}, given any finite set $S$ of points in $[0, 1] \times \mathbb{R}$, there exists a polynomial $P \in \mathcal{P}'_q$ passing through all the points in $S$ if and only if there exists a general smooth function $f \in \mathcal{F}'_q$ passing through all the points in $S$. 

We claim that this means that the adversary can replicate any strategy that it uses in the learning scenario of $\mathcal{F}'_q$ to maximize the learner's error to the learning scenario of $\mathcal{P}'_q$, and vice versa. This would imply that the worst-case learning error is equal in the two scenarios.

To prove this, note that for any sequence of points $(x_0, y_0), (x_1, y_1), \ldots, (x_m, y_m)$ that the adversary reveals in the learning scenario of $\mathcal{F}'_q$, the same sequence of points can be revealed in the learning scenario of $\mathcal{P}'_q$ (and vice versa as well), as the existence of a general function $f \in \mathcal{F}'_q$ passing through the sequence of points implies the existence of a polynomial $P \in \mathcal{P}'_q$ passing through the same sequence of points (and vice versa as well, trivially). From here, it follows that $\opt_p(\mathcal{P}'_q) = \opt_p(\mathcal{F}'_q)$.
\end{proof}

Now, we prove that the following equality holds.
\begin{lem}
    For all $p>0$ and $q \ge 1$, we have $\opt_p(\mathcal{F}'_q) = \opt_p(\mathcal{F}_q)$.
\end{lem}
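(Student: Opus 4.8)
The plan is as follows. One direction is immediate: since $\mathcal{F}'_q \subseteq \mathcal{F}_q$, and since for any learner $A$ the quantity $\mathscr L_p(A,\mathcal F)$ is defined as a supremum over $f$ ranging in $\mathcal F$, enlarging the class can only increase $\mathscr L_p(A,\cdot)$; taking the infimum over $A$ gives $\opt_p(\mathcal{F}'_q) \le \opt_p(\mathcal{F}_q)$. The entire content is therefore the reverse inequality $\opt_p(\mathcal{F}_q) \le \opt_p(\mathcal{F}'_q)$, which I would establish by a \emph{scaling reduction on the learner's side} — in contrast to the adversary-side replication used in Lemma \ref{actionlessthanone}.

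Fix an arbitrary learning algorithm $A$ and an arbitrary $\epsilon \in (0,1)$. From $A$ I would build a learning algorithm $A_\epsilon$ for $\mathcal{F}_q$ as follows: when the hidden function is some $f \in \mathcal{F}_q$ and the query at trial $t$ is $x_t$ with true feedback $f(x_t)$, the algorithm $A_\epsilon$ internally simulates $A$ on the \emph{shrunk} function $g := (1-\epsilon)f$. The key point is that $g \in \mathcal{F}'_q$, because
\[ J_q[g] = \int_0^1 |(1-\epsilon)f'(x)|^q \, \text{d}x = (1-\epsilon)^q J_q[f] \le (1-\epsilon)^q < 1. \]
At trial $t$, $A_\epsilon$ has already seen the true feedbacks $f(x_0),\ldots,f(x_{t-1})$, so it can reconstruct the feedbacks $g(x_i) = (1-\epsilon)f(x_i)$ that $A$ would receive while learning $g$; it feeds these to $A$, obtains $A$'s prediction $\hat z_t$, and outputs $\hat y_t := \hat z_t/(1-\epsilon)$. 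Thus $A_\epsilon$ is a legitimate online algorithm, as it uses only past true feedback. Since $\hat y_t - f(x_t) = (1-\epsilon)^{-1}\big(\hat z_t - (1-\epsilon)f(x_t)\big) = (1-\epsilon)^{-1}\big(\hat z_t - g(x_t)\big)$, the raw errors satisfy $|\hat y_t - f(x_t)| = (1-\epsilon)^{-1}|\hat z_t - g(x_t)|$, hence for every input sequence $\sigma$,
\[ \mathscr L_p(A_\epsilon, f, \sigma) = (1-\epsilon)^{-p}\, \mathscr L_p(A, g, \sigma) \le (1-\epsilon)^{-p}\, \mathscr L_p(A, \mathcal{F}'_q), \]
where the last step uses $g \in \mathcal{F}'_q$.

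Taking the supremum over all $f \in \mathcal{F}_q$ and all $\sigma$ gives $\mathscr L_p(A_\epsilon,\mathcal{F}_q) \le (1-\epsilon)^{-p}\mathscr L_p(A,\mathcal{F}'_q)$, whence $\opt_p(\mathcal{F}_q) \le \mathscr L_p(A_\epsilon,\mathcal{F}_q) \le (1-\epsilon)^{-p}\mathscr L_p(A,\mathcal{F}'_q)$. As $A$ was arbitrary, taking the infimum over $A$ yields $\opt_p(\mathcal{F}_q) \le (1-\epsilon)^{-p}\opt_p(\mathcal{F}'_q)$, and letting $\epsilon \to 0^+$ (so that $(1-\epsilon)^{-p}\to 1$, valid for any $p>0$) gives $\opt_p(\mathcal{F}_q) \le \opt_p(\mathcal{F}'_q)$. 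Combining with the easy direction completes the proof. (The case $\opt_p(\mathcal{F}'_q) = \infty$ needs no separate treatment, since it is already forced by the inclusion $\mathcal{F}'_q \subseteq \mathcal{F}_q$. The same scaling argument in fact shows more generally that $\opt_p(\{f : J_q[f]\le c\}) = c^{p/q}\opt_p(\mathcal{F}_q)$, but this is not needed here.)

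I do not expect a genuine obstacle. The single point requiring care is getting the direction of the scaling right — the $\mathcal{F}_q$-function must be shrunk \emph{into} $\mathcal{F}'_q$ and the resulting prediction inflated back — together with the observation that $J_q$ scales by the factor $(1-\epsilon)^q$, so a function on the boundary $J_q[f]=1$ is pushed strictly inside $\mathcal{F}'_q$. Everything else (that $A_\epsilon$ respects the online constraint, that raw errors scale by exactly $(1-\epsilon)^{-1}$, and the final limit) is routine.
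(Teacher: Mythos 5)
Your proposal is correct and follows essentially the same route as the paper: both rest on the scaling argument — shrink by a factor $c=1-\epsilon<1$ so that the function lands strictly inside $\mathcal{F}'_q$, note that raw errors scale by $c$ and hence the total error by $c^p$, and let $c\to 1$ — combined with the trivial inclusion $\mathcal{F}'_q\subseteq\mathcal{F}_q$ for the other direction. The only difference is presentational: the paper phrases this class-side via $c\mathcal{F}_q\subseteq\mathcal{F}'_q$ and the asserted identity $\opt_p(c\mathcal{F}_q)=c^p\opt_p(\mathcal{F}_q)$, whereas you realize the needed inequality explicitly through the simulated learner $A_\epsilon$, which is a slightly more rigorous rendering of the same idea.
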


\begin{proof}
    For a real number $c>0$, define the family of functions $c\mathcal{F}_q$ to contain precisely the functions $g:[0, 1] \rightarrow \mathbb{R}$ such that $g \equiv cf$, where $f \in \mathcal{F}_q$. Note that $c\mathcal{F}_q$ contains exactly the continuous functions $g: [0, 1] \rightarrow \mathbb{R}$ with $J_q[g] \le c$. As such, for every $0 < c < 1$, we have that $c\mathcal{F}_q \in \mathcal{F}'_q$. Thus, $\opt_p(c\mathcal{F}_q) \le \opt_p(\mathcal{F}'_q)$ for any $0<c<1$. 

    Furthermore, we claim that \[\opt_p(c\mathcal{F}_q) = c^p \opt_p(\mathcal{F}_q)\] for any $c > 0$. Indeed, as the family $c\mathcal{F}_q$ is precisely the family $\mathcal{F}_q$ scaled by a factor of $c$, the raw prediction errors of each round in the learning scenario of $c\mathcal{F}_q$ are effectively scaled by a factor of $c$ from the raw prediction errors of the learning scenario of $\mathcal{F}_q$. Yet, as the raw prediction errors are raised to the $p^{\text{th}}$ power, the total error function gets scaled by a factor of $c^p$, yielding the equation. Thus, we have \[c^p \opt_p(\mathcal{F}_q) \le \opt_p(\mathcal{F}'_q)\] for all $0 < c < 1$. Taking $c \rightarrow 1$, we can see that  $\opt_p(\mathcal{F}'_q)$ cannot be any less than $\opt_p(\mathcal{F}_q)$. Yet, as $\mathcal{F}'_q \in \mathcal{F}_q$, we also have $\opt_p(\mathcal{F}'_q) \le \opt_p(\mathcal{F}_q)$. We thus have $\opt_p(\mathcal{F}'_q) = \opt_p(\mathcal{F}_q)$.  
\end{proof}

Using the same idea, we can get a similar result concerning $\mathcal{P}'_q$ and $\mathcal{P}_q$.

\begin{lem}
    For all $p > 0$ and $q \ge 1$, we have $\opt_p(\mathcal{P}'_q) = \opt_p(\mathcal{P}_q)$.
\end{lem}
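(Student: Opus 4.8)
The plan is to repeat, almost word for word, the scaling argument that was just used to prove $\opt_p(\mathcal{F}'_q) = \opt_p(\mathcal{F}_q)$, with the class of polynomials playing the role of the class of all smooth functions. For a real $c > 0$, define $c\mathcal{P}_q$ to be the family of all functions $g : [0,1] \to \mathbb{R}$ of the form $g \equiv cP$ with $P \in \mathcal{P}_q$. Since a constant multiple of a polynomial is again a polynomial, every member of $c\mathcal{P}_q$ is a polynomial, and since $J_q[cP] = c^q J_q[P]$, the family $c\mathcal{P}_q$ is precisely the set of polynomials $g$ with $J_q[g] \le c^q$. In particular, for every $0 < c < 1$ we have $c^q < 1$, hence $c\mathcal{P}_q \subseteq \mathcal{P}'_q$, and therefore $\opt_p(c\mathcal{P}_q) \le \opt_p(\mathcal{P}'_q)$, using that $\opt_p$ is monotone under inclusion of function classes (the defining supremum is taken over a larger set).

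Next I would record the scaling identity $\opt_p(c\mathcal{P}_q) = c^p\,\opt_p(\mathcal{P}_q)$ for every $c > 0$, by exactly the reasoning given for $c\mathcal{F}_q$: scaling the target function by $c$ scales every raw round error $e_t$ by $c$, hence scales $\mathscr{L}_p$ by $c^p$; moreover the legal moves of the two players correspond bijectively under this scaling (a sequence of revealed points $(x_t, y_t)$ is consistent with some $P \in \mathcal{P}_q$ if and only if $(x_t, cy_t)$ is consistent with $cP \in c\mathcal{P}_q$), so the infimum over algorithms of the worst case also scales by $c^p$. Combining the two facts gives $c^p\,\opt_p(\mathcal{P}_q) = \opt_p(c\mathcal{P}_q) \le \opt_p(\mathcal{P}'_q)$ for all $0 < c < 1$; letting $c \to 1^{-}$ yields $\opt_p(\mathcal{P}_q) \le \opt_p(\mathcal{P}'_q)$. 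For the reverse inequality, $\mathcal{P}'_q \subseteq \mathcal{P}_q$ gives $\opt_p(\mathcal{P}'_q) \le \opt_p(\mathcal{P}_q)$ directly, and the two inequalities together give the claimed equality.

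The argument has no substantial obstacle; the only points needing a line of justification are the monotonicity of $\opt_p$ under class inclusion and the homogeneity $\opt_p(c\mathcal{P}_q) = c^p \opt_p(\mathcal{P}_q)$, both immediate from the definitions of $\mathscr{L}_p$ and $\opt_p$ and both already invoked in the preceding lemma for the non-polynomial classes. The one thing worth an explicit remark is that $c\mathcal{P}_q$ really is a class of polynomials, so that the inclusion $c\mathcal{P}_q \subseteq \mathcal{P}'_q$ makes sense — this holds trivially since polynomials are closed under scalar multiplication. Thus the proof is essentially a transcription of the previous one, and I would present it that concisely.
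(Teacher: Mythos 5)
Your proposal is correct and is exactly the argument the paper intends: the paper states this lemma without proof, remarking only that it follows ``using the same idea'' as the preceding lemma, namely the scaling argument with $c\mathcal{P}_q$, the homogeneity $\opt_p(c\mathcal{P}_q)=c^p\opt_p(\mathcal{P}_q)$, the inclusion $c\mathcal{P}_q\subseteq\mathcal{P}'_q$ for $0<c<1$, and the limit $c\to 1$ together with $\mathcal{P}'_q\subseteq\mathcal{P}_q$. Your added observation that polynomials are closed under scalar multiplication (so $c\mathcal{P}_q$ really consists of polynomials, with $J_q[cP]=c^qJ_q[P]$) is the one point where the polynomial case needs a word, and you handle it correctly.
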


Combining all of the above, we get our result.

\begin{thm}
    For any $p>0$ and $q \ge 1$, we have $\opt_p(\mathcal{P}_q) = \opt_p(\mathcal{F}_q)$.
\end{thm}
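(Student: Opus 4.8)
The plan is to concatenate the three equalities established immediately above, which were set up precisely so that the open classes $\mathcal{F}'_q$ and $\mathcal{P}'_q$ (those with $J_q < 1$) serve as a bridge between the polynomial and the general settings. We have $\opt_p(\mathcal{P}_q) = \opt_p(\mathcal{P}'_q)$ from the lemma relating the strict polynomial class to $\mathcal{P}_q$, then $\opt_p(\mathcal{P}'_q) = \opt_p(\mathcal{F}'_q)$ by Lemma \ref{actionlessthanone}, and finally $\opt_p(\mathcal{F}'_q) = \opt_p(\mathcal{F}_q)$ from the lemma relating the strict class to $\mathcal{F}_q$, so
\[\opt_p(\mathcal{P}_q) = \opt_p(\mathcal{P}'_q) = \opt_p(\mathcal{F}'_q) = \opt_p(\mathcal{F}_q),\]
which is exactly the claim. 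No further estimates are needed at this stage.

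For orientation it is worth recalling where the content lives. The substantive input is Lemma \ref{epsilonpolynomial}: a finite point set $S \subseteq [0,1]\times\mathbb{R}$ is interpolated by some $f \in \mathcal{F}'_q$ if and only if it is interpolated by some $P \in \mathcal{P}'_q$. This is what forces the adversary's two games to coincide — the adversary's only freedom is to choose a finite sequence of revealed input/output pairs subject to remaining consistent with the class, and Lemma \ref{epsilonpolynomial} says those consistency constraints are identical for $\mathcal{F}'_q$ and $\mathcal{P}'_q$, hence so are the worst-case error values. The passage from the open classes to the closed classes $\mathcal{F}_q,\mathcal{P}_q$ is then a homogeneity/limiting argument: $c\mathcal{F}_q \subseteq \mathcal{F}'_q$ for $c<1$, $\opt_p(c\mathcal{F}_q) = c^p\,\opt_p(\mathcal{F}_q)$, and letting $c \to 1^-$ squeezes $\opt_p(\mathcal{F}'_q)$ between $c^p\opt_p(\mathcal{F}_q)$ and $\opt_p(\mathcal{F}_q)$; the identical computation handles polynomials since $c\mathcal{P}_q \subseteq \mathcal{P}'_q$ as well.

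If I were building the whole package from scratch, the main obstacle would be Lemma \ref{epsilonpolynomial} itself: producing, near any prescribed point set, polynomials that lie strictly on a chosen side of each target value while keeping the $q$-action under control, and then correcting them to pass through the targets exactly. The device is to smooth the kinks of $f_S'$ into a continuous function, invoke Weierstrass approximation on that derivative (Lemma \ref{2.1}), generate one such polynomial $P_X$ for each of the $2^{|S|}$ sign patterns $X$, and take the convex combination supplied by Lemma \ref{weightedaverage}, whose $q$-action stays below $1$ by the weighted power-mean bound (Lemma \ref{powermean}). Granting those lemmas, the final theorem is immediate and requires only the three-term chain of equalities above.
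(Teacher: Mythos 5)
Your proposal is correct and matches the paper's own argument: the theorem is obtained exactly by chaining $\opt_p(\mathcal{P}_q) = \opt_p(\mathcal{P}'_q) = \opt_p(\mathcal{F}'_q) = \opt_p(\mathcal{F}_q)$ via Lemma \ref{actionlessthanone} and the two scaling lemmas relating the strict ($J_q < 1$) classes to $\mathcal{P}_q$ and $\mathcal{F}_q$. Your accompanying remarks on where the real content lies (Lemma \ref{epsilonpolynomial} and the $c \to 1^-$ limiting argument) also reflect the paper's structure accurately.
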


\section{Online Learning of Smooth Functions with Noisy Feedback}\label{3}

We start with the proof of Theorem \ref{1.5}, a fundamental result on the number of initial feedback rounds necessary in the noisy model. 

\begin{thm}
    For any integer $\eta \ge 1$, if incorrect feedback can be given up to $\eta$ times, then at least $2\eta+1$ initial rounds must be thrown out for the learner to guarantee finite error in its first prediction that counts towards error evaluation.
\end{thm}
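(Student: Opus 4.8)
The plan is to establish the claim in contrapositive form via an explicit adversary strategy: I will show that for every $k \le 2\eta$, if only the first $k$ rounds are thrown out, then the adversary can force the learner's raw error on round $k$ (its first counted prediction) to be arbitrarily large, so no learner can guarantee a finite bound there. Specializing to $k = 2\eta$ then shows that at least $2\eta+1$ initial rounds are necessary. The key fact I would exploit is that constant functions always lie in $\mathcal{F}_q$ (indeed $J_q[f] = 0$ when $f$ is constant), so until the very last moment the adversary retains the freedom to declare $f \equiv C$ for an arbitrary real $C$.

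Concretely, I would have the adversary fix $k+1$ distinct inputs $x_0, \ldots, x_k \in [0,1]$, and on the thrown-out rounds $0, \ldots, k-1$ report the value $+C$ on $\lfloor k/2 \rfloor$ of them and $-C$ on the remaining $\lceil k/2 \rceil$ of them, where $C > 0$ is a large parameter fixed at the end. The learner must then commit to some prediction $\hat y_k \in \mathbb{R}$ on round $k$; since $\hat y_k$ lies on one side of $0$, the adversary responds adaptively, committing to $f \equiv -C$ (and truthfully reporting $-C$ on round $k$) if $\hat y_k \ge 0$, and to $f \equiv C$ otherwise. In either case $|\hat y_k - f(x_k)| \ge C$, hence $e_k^p \ge C^p$. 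It remains to verify legality: if the adversary settles on $f \equiv -C$, the only false reports are the $\lfloor k/2 \rfloor$ rounds that announced $+C$, while if it settles on $f \equiv C$, they are the $\lceil k/2 \rceil$ rounds that announced $-C$; since $k \le 2\eta$, both counts are at most $\eta$, so the lie budget is respected and the chosen constant function is a genuine member of $\mathcal{F}_q$. Letting $C \to \infty$ then defeats any purported finite bound on the first counted prediction, which completes the argument with $k = 2\eta$.

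I do not expect a substantial obstacle; the one point requiring care is the lie-count bookkeeping, namely checking that splitting the thrown-out reports as evenly as possible keeps \emph{both} candidate constant functions simultaneously within the $\eta$-lie budget, so that the learner cannot tell them apart. This is exactly the role of the threshold: for $k \le 2\eta$ an even split leaves both $\lfloor k/2 \rfloor$ and $\lceil k/2 \rceil$ at most $\eta$, whereas once $k \ge 2\eta+1$ one side of an even split would require more than $\eta$ lies — and any uneven split is already distinguishable at $\eta$ lies — so the symmetry on which the adversary relies is broken precisely when $2\eta+1$ rounds are discarded.
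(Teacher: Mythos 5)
Your proposal is correct and follows essentially the same route as the paper: the adversary spends the $k \le 2\eta$ discarded rounds reporting two conflicting constants split as evenly as possible, then adaptively commits to whichever constant function (both legal members of $\mathcal{F}_q$, each requiring at most $\eta$ lies) is far from the learner's first counted prediction, forcing error at least $C$; the paper's version uses the constants $0$ and $C$ with threshold $C/2$ instead of your symmetric $\pm C$, a purely cosmetic difference. One remark: the paper's proof additionally establishes the converse, that discarding $2\eta+1$ rounds does suffice (the median $v_{\eta+1}$ of the reported values confines $f$ to an interval of length $2$), which the literal statement does not demand but which the paper needs to justify defining $\optag_{p,\eta}(\mathcal{F}_q)$ starting from round $2\eta+2$.
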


\begin{proof}
    We first show that after receiving $2\eta$ initial rounds of feedback, the learner cannot guarantee finite error on its next trial. Consider any sequence of inputs $\sigma=(x_0, x_1, \ldots, x_{2\eta-1}) \in [0, 1]^{2\eta}$. Let the adversary claim that $f(x_i) = 0$ for all $0 \le i \le \eta-1$, and that $f(x_i) = C$ for all $\eta \le i \le 2\eta-1$ for some arbitrarily large $C$. Now, suppose the learner is queried on the value of $f(x_{2\eta})$. If $\hat y_{x_{2\eta}} \ge \frac{C}{2}$, then let the function be $f(x) \equiv 0$, which is valid because the adversary would have lied exactly $\eta$ times. Similarly, if $\hat y_{x_{2\eta}} \le \frac{C}{2}$, let the function be $f(x) \equiv C$, which would also be valid as it induces $\eta$ lies. In any case, the raw error $|\hat y_{x_{2\eta}} - f(x_{2\eta})| \ge \frac{C}{2}$, which can be made arbitrarily large. 

    Now, we establish that throwing out $2\eta+1$ initial rounds suffices. We prove a stronger statement, that after $2\eta+1$ initial rounds, the learner is able to bound the value of the function $f$ at any input within a closed interval of length $2$. Suppose that in the first $2\eta+1$ rounds, the learner receives the set of points $S=\{(x_i, y_i): 1 \le i \le 2\eta+1\}$, where $x_i$ and $y_i$ denote the queried input and adversary feedback, respectively, in the $i^{\text{th}}$ round. 

    Let $(v_1, v_2, \ldots, v_{2\eta+1})$ be a permutation of $(y_1, y_2, \ldots, y_{2\eta+1})$, such that $v_1 \le \cdots \le v_{2\eta+1}$. Now, note that as the adversary may lie at most $\eta$ times, at least $\eta+1$ points among the $2\eta+1$ points in $S$ must align with the actual function. As such, at least $\eta+1$ elements of $\{v_1, \ldots, v_{2\eta+1}\}$ must be in the range of $f$. Note that any $\eta+1$-element subset of $\{v_1, \ldots, v_{2\eta+1}\}$ must contain a member $v_j$ with $j \ge \eta+1$, so the range of $f$ must contain some $v_j \ge v_{\eta+1}$. As the difference between the greatest and least values of $f$ is at most $1$, from the fact that $f \in \mathcal{F}_q$, we must have $f(x) \ge v_j-1 \ge v_{\eta+1} - 1$ for all $x \in [0, 1]$. On the other hand, as any $\eta+1$-element subset of $\{v_1, \ldots, v_{2\eta+1}\}$ must also contain a member $v_k$ with $k \le \eta+1$, the range of $f$ must also contain some $v_k \le v_{\eta+1}$, from which it follows that $f(x) \le v_{\eta+1} + 1$ for all $x \in [0, 1]$. As such, we have that $f(x)$ must be within $[v_{\eta+1}-1, v_{\eta+1}+1]$ for any $x \in [0, 1]$, as promised.
\end{proof}

This fundamental result motivates our definition of the worst-case error $\optag_{p, \eta} (\mathcal{F}_q)$ in Section \ref{agnostic} to only count from the $2\eta+2^{\text{th}}$ round. 

We now move on to proving some bounds on $\optag_{p, \eta} (\mathcal{F}_q)$. We first establish the following fundamental result, characterizing the values of $\eta, p, q$, such that the worst-case error is finite. 

\begin{thm}
    For any integer $\eta \ge 1$, the value of $\optag_{p, \eta} (\mathcal{F}_q)$ is finite if and only if $p, q > 1$. 
\end{thm}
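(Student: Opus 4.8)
The ``only if'' direction reduces to the standard model. I claim that $\optag_{p,\eta}(\mathcal F_q)\ge\opt_p(\mathcal F_q)$ for every integer $\eta\ge 1$; since $\opt_p(\mathcal F_q)=\infty$ whenever $p=1$ or $q=1$ (Corollary \ref{twopointtwelve}), this settles one direction. To prove the claim, fix any learner $A$ for the noisy model; let the adversary never lie, query a single fixed point $a\in[0,1]$ on rounds $0,\dots,2\eta$ (revealing $f(a)$ each time), and on rounds $\ge 2\eta+1$ replay an optimal standard‑model adversary whose anchor point is $a$. After round $2\eta$ the learner's state is exactly ``knows $f(a)$'', i.e.\ the standard‑model state after round $0$, so $A$ induces a standard‑model learner $A'$; as $\opt_p(\mathcal F_q)=\inf_{A'}\mathscr L_p(A',\mathcal F_q)=\infty$ forces $\mathscr L_p(A',\mathcal F_q)=\infty$ for \emph{every} $A'$, the adversary drives $A$ to unbounded error, whence $\optag_{p,\eta}(\mathcal F_q)=\infty$.

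For the ``if'' direction, assume $p,q>1$ and exhibit a learner with finite worst‑case error. After the uncounted rounds $0,\dots,2\eta$ it computes the interval $I_0=[v_{\eta+1}-1,v_{\eta+1}+1]$ as in the proof of Theorem \ref{1.5}; regardless of which initial feedback was truthful, $f([0,1])\subseteq I_0$, so any prediction inside $I_0$ has error at most $2$. The learner then runs LININT in \emph{phases}: it keeps a working set $W$, reset to $\emptyset$ at the start of each phase; on a counted round it predicts $f_W(x_t)$ (automatically in $I_0$, or the midpoint of $I_0$ if $W=\emptyset$), receives $y_t$, appends $(x_t,\mathrm{clip}_{I_0}(y_t))$ to $W$, and if this makes $J_q[f_W]>1$ — or if it ever gets two distinct feedback values at one point within the current phase — it resets $W=\emptyset$, ending the phase. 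Truthful feedback is unchanged by clipping, so by Lemma \ref{linintmin} any working set built from truthful points has $J_q\le 1$; hence every reset certifies a lie occurring within that phase, and since phases occupy disjoint blocks of rounds there are at most $\eta$ resets, hence at most $\eta+1$ phases.

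The crux is to bound the error inside one phase. Before the phase's terminating round the monitoring enforces $J_q[f_W]\le 1$, hence also $J_1[f_W]\le 1$ and $\mathrm{range}(f_W)\le 1$ by Jensen's inequality, and $J_q[f_W]$ is nondecreasing as points are appended (the lower bounds in Lemma \ref{partone} are nonnegative in every case). These are precisely the hypotheses the proof of Theorem \ref{twopointtwo} uses: set $r:=\min(p,q,3/2)\in(1,2)$, so $r\le q$ gives $J_r[f_W]\le 1$; apply Lemma \ref{partone} with exponent $r$ to each truthful, non‑repeated round $i$ of the phase (other than the first one) to get $J_r[f_{W_i}]-J_r[f_{W_{i-1}}]\ge\frac{r-1}{3}e_i^{\,r}$ or $\ge\frac{r-1}{3|m_i|^{2-r}d_i}e_i^{\,2}$; sum these against the telescoping budget $\sum_i\big(J_r[f_{W_i}]-J_r[f_{W_{i-1}}]\big)=J_r[f_W]\le 1$ (the lie‑round increments being $\ge0$); and feed the second‑case terms through Hölder's inequality and Lemma \ref{twopointsix} exactly as in the proof of Theorem \ref{twopointtwo} (the bound $|m_i|d_i\le1$ now coming from $\mathrm{range}(f_{W_{i-1}})\le1$). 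This yields $\sum e_i^{\,r}\le\frac{6}{r-1}$ over those rounds, hence $\sum e_i^{\,p}\le 2^{\,p-r}\frac{6}{r-1}$ since $e_i\le2$ and $p\ge r$. Every remaining counted round — a phase‑start round (error $\le1$), a reset‑triggering round, a lie round, or a repeated query — either contributes $0$ (a repeat of a previously truthful query) or at most $2^p$, and one checks there are only $O(\eta)$ of them (each fresh lie being chargeable once). Summing over the $\le\eta+1$ phases gives $\optag_{p,\eta}(\mathcal F_q)=O_{p,q}(\eta)<\infty$, completing the characterization.

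\textbf{Main obstacle.} The delicate point is the per‑phase bound: one must verify that the whole machinery of Section \ref{standard} (Lemmas \ref{2qoutcor}–\ref{twopointsix} and the Hölder step) survives when $W$ is contaminated by up to $\eta$ adversarial points. The resolution is conceptual rather than computational — the self‑imposed invariant $J_q[f_W]\le 1$ is exactly what those lemmas need, false points only inflate the right‑hand‑side $q$‑action budget, and the only step that genuinely uses the points of $W$ lying on one admissible function is the appeal to Lemma \ref{linintmin} justifying the resets, which is legitimate — but this must be checked carefully, step by step. A secondary nuisance is that the standard‑model simplification ``a repeated query yields zero error'' fails under noise, so repeated queries require the separate bookkeeping sketched above.
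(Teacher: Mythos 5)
Your proposal is correct, and both halves share the paper's skeleton: the ``only if'' direction is the same reduction to the standard model (the paper simply asserts $\optag_{p,\eta}(\mathcal F_q)\ge\opt_p(\mathcal F_q)=\infty$; your anchored-adversary simulation spells this out, and the implicit need to anchor the known lower-bound constructions of \cite{kl} at the fixed point $a$ is a detail at the same level of rigor as the paper's ``clearly''), and the ``if'' direction is, as in the paper, a restart scheme: run a standard-model learner, reset once a lie is certified, keep all per-round errors bounded via the interval $[v_{\eta+1}-1,v_{\eta+1}+1]$ from Theorem \ref{1.5}, and observe there are at most $\eta+1$ phases. Where you genuinely diverge is the per-phase engine. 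The paper treats a (near-)optimal standard algorithm as a black box with lie-detection trigger ``apparent cumulative error exceeds $\opt_p(\mathcal F_q)$ (finite by Corollary \ref{twopointtwelve}) or a value leaves the allocated interval''; correctness of the trigger follows from the black-box guarantee alone, no interaction with the internals of Section \ref{standard} is needed, and repeated queries cause no issue. You instead run clipped LININT with the explicit budget monitor $J_q[f_W]\le 1$, certify lies via Lemma \ref{linintmin}, and re-derive the per-phase bound by rerunning Lemmas \ref{partone} and \ref{twopointsix} and the H\"older step of Theorem \ref{twopointtwo} on a working set contaminated by clipped lies — which does go through, since those lemmas are statements about arbitrary point sets and need only the monitored budget, monotonicity of $J_q$ under adding points, and $\mathrm{range}(f_W)\le 1$, all of which your invariant supplies; the nonnegative increments at lie rounds only shrink the truthful rounds' share of the budget. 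Your route costs extra verification (the contamination check and the separate bookkeeping for repeated queries, which the black-box route sidesteps) but buys an explicit $O_{p,q}(\eta)$ bound valid for all $p,q>1$, which is more than the paper extracts from this theorem: its quantitative $\Theta(\eta)$ result (Lemma \ref{upper43}) is proved only for $p,q\ge 2$.
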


\begin{proof}
    Clearly, if $p=1$ or $q=1$, $\optag_{p, \eta} (\mathcal{F}_q) \ge \opt_p (\mathcal{F}_q) = \infty$. When $p, q > 1$, the key idea is to replicate the optimal strategy in the standard scenario for the same $p, q$, as if the adversary cannot lie, until something wrong occurs, indicating a lie—the total apparent error exceeds $\opt_p(\mathcal{F}_q)$ (which is finite by Theorem \ref{twopointtwelve}), or the learner gives feedback not in the allocated range $[v_{\eta+1}-1, v_{\eta+1}+1]$. Then, the learner forgets all previous feedback, starts all over again, and repeats. As this can happen at most $\eta+1$ times, and we can bound the error at every stage, the total error is bounded. 
\end{proof}

We now prove Theorem \ref{agnosticlowerbound}. Our proof is split into two parts: an upper bound through Lemma \ref{upper43}, and a lower bound through Lemma \ref{lower44}. The upper bound uses a similar but more sophisticated strategy as the previous result. 

\begin{lem}\label{upper43}
    For any $\eta \ge 1$, $p, q \ge 2$ we have $\optag_{p, \eta}(\mathcal F_q) \le 12\eta + 6$.
\end{lem}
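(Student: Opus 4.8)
The plan is to give an explicit learner and a potential/accounting argument. Recall from Theorem \ref{1.5} that after the $2\eta+1$ thrown-out rounds the learner knows a value $v_{\eta+1}$ such that $f(x) \in [v_{\eta+1}-1, v_{\eta+1}+1]$ for all $x \in [0,1]$. The learner will run the standard-model optimal strategy for $\mathcal F_q$ (which, since $p,q \ge 2$, guarantees total $p$-th-power error at most $\opt_p(\mathcal F_q) = 1$ by the Kimber–Long result, when the feedback is honest), but now in the presence of lies. We will break the run into \emph{phases}: a phase runs the honest-case optimal algorithm on the feedback received since the phase began, and the phase \emph{ends} (a ``reset'') the first time the learner detects an inconsistency — either the accumulated apparent $p$-th-power error within the phase exceeds $1$, or the revealed value lies outside $[v_{\eta+1}-1,v_{\eta+1}+1]$, or (most importantly) the revealed value is inconsistent with the minimal-$q$-action interpolation of the honest points seen so far in that phase, i.e. $J_q[f_S] > 1$ for the current point set $S$ of the phase. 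On a reset the learner discards all feedback from the ended phase and begins fresh. The key combinatorial observation is that \emph{each reset consumes at least one lie}: a phase can only end because some revealed value was false, so there are at most $\eta+1$ phases in total.

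The main work is then to bound the error contributed by each phase. Within a phase, there are two types of rounds: rounds whose feedback turns out to be honest, and rounds whose feedback is a lie. For the honest rounds of a phase, the run is exactly a legal run of the honest-case optimal learner on a function of $\mathcal F_q$ restricted to those inputs, so their total $p$-th-power error is at most $\opt_p(\mathcal F_q) = 1$; summing over at most $\eta+1$ phases gives at most $\eta+1$ from honest rounds. For the lying rounds, I bound the error of each such round individually: since $f(x_t) \in [v_{\eta+1}-1,v_{\eta+1}+1]$ and the learner's prediction $\hat y_t$ also stays in (a slight enlargement of) this interval — because the learner is running an algorithm designed to keep its guesses consistent with the range constraint — the raw error $e_t = |\hat y_t - f(x_t)|$ is at most $2$ (or some absolute constant depending on how the honest-case learner is normalized), so $e_t^p \le 2^p$; but we want a bound linear in $\eta$, so instead I will argue that within a single phase the learner need only make one ``bad'' counted prediction before the inconsistency is detected, OR I will simply use that there are at most $\eta$ lies total, each contributing at most a constant $C$ to the error sum, giving $C\eta$. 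Combining: $\optag_{p,\eta}(\mathcal F_q) \le (\eta+1)\cdot 1 + \eta \cdot C$; choosing the reset thresholds and the bound on $\hat y_t$ so that $C$ and the per-phase honest budget together come out to $12\eta+6$ (e.g., honest budget $\le 6$ per phase via a looser constant, lie cost $\le 6$ each) yields the stated bound $12\eta+6$.

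The step I expect to be the main obstacle is controlling the error on the lying rounds and on the round where a lie is \emph{revealed to have been a lie} but only detected later: a single lie early in a phase can corrupt several subsequent honest-looking predictions before the $J_q$-inconsistency or error-overflow trips the reset. The crux is to show that the number of counted predictions that can be ``spoiled'' by one lie before detection is $O(1)$, or — more robustly — to redesign the reset condition so that it fires essentially immediately: I would have the learner, at each round, check whether the \emph{set of honest-or-not points seen so far in the phase} (all of them, treating the new revealed value as possibly honest) still admits an extension in $\mathcal F_q$; equivalently whether $J_q[f_S] \le 1$. Lemma \ref{linintmin} guarantees $f_S$ is the min-$q$-action interpolant, so this check is exact and the learner resets at the first round where no valid function passes through all phase points. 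Since each phase then contains at most one lie-caused reset trigger, the number of spoiled counted rounds per phase is small and bounded, and the per-phase error is a constant, completing the argument. I would also need to verify the easy bookkeeping that the first $2\eta+1$ rounds are exempt so the initial range-localization is free.
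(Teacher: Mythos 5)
There is a genuine gap, and it is exactly at the step you flagged as the obstacle. Your only localization of $f$ is the global one from the $2\eta+1$ discarded rounds, $f(x)\in[v_{\eta+1}-1,v_{\eta+1}+1]$, an interval of width $2$. Consequently the best per-round bound your scheme yields on a counted raw error is $e_t\le 2$ (and even that requires the unjustified assumption that the mimicked standard-model algorithm keeps its predictions inside a ``slight enlargement'' of that interval), so a single bad round can cost $e_t^p\le 2^p$. Since the lemma claims a bound $12\eta+6$ that is uniform in $p\ge 2$, an accounting of the form $(\eta+1)\cdot 1+\eta\cdot C$ with $C$ depending on $p$ like $2^p$ cannot be repaired by ``choosing the reset thresholds'' --- the constants do not come out, they blow up with $p$. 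The paper's proof supplies precisely the missing ingredient: it partitions $[0,1]$ into four intervals of length $\tfrac14$, waits until $2\eta+1$ reported values have been seen \emph{inside} a given interval, and takes their median $c$; because $q\ge 2$, any $f\in\mathcal F_q$ that attains the value $c$ somewhere in a length-$\tfrac14$ interval satisfies $f(x)\in[c-\tfrac12,c+\tfrac12]$ throughout that interval (otherwise $J_q[f]\ge (1/4)^{1-q}(1/2)^q\ge 1$). With this width-$1$ localization, every counted trial --- mimicked or not, lied-to or not --- has raw error at most $1$, hence $p$-th power at most $1$ independently of $p$; the cost of waiting is the separately budgeted $4(2\eta+1)=8\eta+4$ from non-mimicked trials, and the mimicked trials contribute $4\eta+2$ via the stage argument.

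A second, smaller but real, flaw is your claim that ``the honest rounds of a phase are exactly a legal run of the honest-case optimal learner on those inputs.'' They are not: the learner's predictions on honest rounds depend on the lie-contaminated feedback it has already absorbed in that phase, so the standard-model guarantee does not directly bound the actual error on the honest rounds. The correct bookkeeping (as in the paper) is to bound the \emph{perceived} error per stage by the reset threshold plus one trial, and then observe that perceived and actual errors differ only on the at most $\eta$ lying trials, with each discrepancy at most $1$ because both the reported and the true values lie in the width-$1$ localized interval. Your phase/reset skeleton and the observation that each reset consumes a lie (hence at most $\eta+1$ phases) do match the paper's strategy, including the consistency check via $J_q[f_S]$; but without the four-interval localization and the perceived-versus-actual correction, the argument does not deliver a bound of the stated form.
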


\begin{proof}
    Split the domain $[0, 1]$ into four intervals: $I_1 = [0, \tfrac{1}{4})$, $I_2= [\tfrac{1}{4}, \tfrac{1}{2})$, $I_3=[\tfrac{1}{2}, \tfrac{3}{4})$, $I_4=[\tfrac{3}{4}, 1]$. We first claim that whenever the learner has adversary feedback on at least $2\eta+1$ points in any interval $I_j$, it can determine a constant $c$ such that for all $x \in I_j$, $f(x) \in [c-\tfrac{1}{2}, c+\tfrac{1}{2}]$. 

    To prove this, suppose that the learner receives the feedback set $S = \{(x_i, y_i): 1 \le i \le 2\eta+1\}$, where each $x_i \in I_j$ and $y_1 < \cdots < y_{2\eta+1}$. Indeed, letting $c= y_{\eta+1}$, as there must be a true point in $S$ with output at least $c$, and a true point in $S$ with output at most $c$, there must exist a $X_1 \in I_j$ with $f(X_1) = c$ by the Intermediate Value Theorem. 

    Now, if there exists any point $X_2 \in I_j$ with $f(X_2) > c + \frac{1}{2}$, then 
    \[J_q[f] > J_q\left[f_{\{(X_1, c), (X_2, c+\tfrac{1}{2})\}} \right] = \left|X_2 - X_1 \right| \left|\frac{\frac{1}{2}}{X_2 - X_1} \right|^q \ge \left(\frac{1}{4} \right)^{1-q} \left(\frac{1}{2} \right)^q \ge 1 \] as $|X_2-X_1| \le \tfrac{1}{4}$ and $q \ge 2$; contradiction. Similarly there cannot be a point $X_2 \in I_j$ with $f(X_2) < c - \frac{1}{2}$, so for all $X_2 \in I_j$, $f(X_2) \in [c-\frac{1}{2}, c+\frac{1}{2}]$, as claimed.

    Now, let $A$ be an optimal learning algorithm in the standard scenario for the same $p, q$, so that it always guarantees a total error value of at most $\opt_p(\mathcal{F}_q) = 1$. Consider the learning algorithm $\mathcal{A}$ trying to learn a function $f \in \mathcal{F}_q$ in the noisy scenario, following this strategy:
    
    Upon the end of the initial $2\eta+1$ rounds, first record the value $v$ such that $f$ can be bounded within $[v-1, v+1]$, previously shown to be possible. Now, let $\mathcal{A}$ enter Stage $1$. For any trial $t$ requesting the value of $f(x_t)$ for $x_t \in I_j$ for some $j$, if the learner knows at least $2\eta+1$ points already in $I_j$, predict exactly as algorithm $A$ would; otherwise, predict $v$. Furthermore, call the trials of the former kind \textit{mimicked trials}.

    Now, let Stage 1 continue until a \textit{mimicked trial}, with $x_t \in I_j$ for some $j$, where one of the following events happen: 1) The adversary reveals a point $(x_t, y)$ such that $y \notin [c-\frac{1}{2}, c+\frac{1}{2}]$, where $c$ is the constant such that $f$ is bounded within $[c-\frac{1}{2}, c+\frac{1}{2}]$ over $I_j$; 2) the algorithm $A$ predicts $\hat y$ with $\hat y \notin [c-\frac{1}{2}, c+\frac{1}{2}]$; or 3) the total error the learner perceives throughout all \textit{mimicked trials} of the Stage exceeds $\opt_p(\mathcal{F}_q)=1$.
    
    Once one of the three events happen, let $\mathcal{A}$ exit Stage 1, forget all previous adversary feedback, predict $v$ for the immediate next round, and enter Stage 2, repeating the same process until one of the three events happen again, whereupon it enters Stage 3, etc. Clearly, whenever one of the three events occur, a lie must have occurred since the end of the last stage. Thus, the learning process includes at most $\eta+1$ stages. 
    
    Now, onto error bounding. As there are $4$ intervals, each with at most $2\eta+1$ \textit{non-mimicked trials} which generate at most $1$ error, the total error from \textit{non-mimicked trials} is at most $4(2\eta+1) = 8\eta+4$.

    Now, onto bounding \textit{mimicked trials}. Note that every trial generates at most $1$ error. For each stage besides stage $\eta+1$, the total perceived error from the \textit{mimicked trials}, not counting the stage's last trial, is at most $1 + \opt_p(\mathcal{F}_q)=2$. The last trial of every stage generates an actual error of at most $1$.
    
    Over all stages besides stage $\eta+1$, there are at most $\eta$ \textit{mimicked trials} where the perceived error is not the actual error. The perceived error differs from the actual error by at most $1$ in these trials, since the correct value is in $[c-\frac{1}{2},c+\frac{1}{2}]$. Thus, the actual error from the first $\eta$ stages is at most $(2+1) \eta + \eta = 4\eta$.

    Finally, if the learning process reaches stage $\eta+1$, the adversary cannot lie any more, so the error from that stage is at most $1+\opt_p(\mathcal{F}_q)=2$. Therefore, in total, the sum of $p^{\text{th}}$ powers of actual errors for \textit{mimicked trials} over all stages is at most $4\eta+2$. Adding the error from \textit{non-mimicked trials}, we get at most $(4\eta + 2) + (8\eta+4) = 12\eta+6$.
\end{proof}

\begin{lem}\label{lower44}
    For any $\eta \ge 1$, $p, q \ge 2$, we have $\optag_{p, \eta} (\mathcal{F}_q) \ge 2\eta+1$.
\end{lem}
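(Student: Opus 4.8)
The plan is to exhibit an explicit adversary strategy that forces total counted error at least $2\eta+1$ against every learner. Keep the true function as simple as possible: it will be $f(x)=bx$ for a sign $b\in\{-1,+1\}$ chosen only at the very end, and all $\eta$ lies are spent on a single query location, arranged so that the learner cannot determine $b$ until after the last round that matters.

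In the $2\eta+1$ thrown-out rounds the adversary queries $x=0$ every time and reveals the value $0$; since the true function will satisfy $f(0)=0$, none of these reveals is a lie, so all $\eta$ lies remain available, and after these rounds the learner can conclude $f(0)=0$ and (since every $f\in\mathcal F_q$ has range at most $1$) that $f(1)\in[-1,1]$, but nothing more. For the $2\eta+1$ counted rounds the adversary queries $x=1$ each time. In the first $2\eta$ of these rounds it reveals the value $+1$ exactly $\eta$ times and $-1$ exactly $\eta$ times, in any order; in the final counted round, after seeing the learner's prediction, it picks $b\in\{-1,+1\}$ maximizing $\sum_k|c_k-b|^p$ over the $2\eta+1$ predictions $c_1,\dots,c_{2\eta+1}$, sets $f\equiv bx$, and reveals $f(1)=b$. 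Exactly $\eta$ reveals (the $\eta$ assertions of $-b$) are then false, so this is a legal adversary, and $f\equiv bx\in\mathcal F_q$ for every $q\ge1$.

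Two things need checking. First, throughout rounds $1,\dots,2\eta$ the running count of $+1$-reveals and of $-1$-reveals never exceeds $\eta$, so neither sign has been contradicted more than $\eta$ times; hence the learner can never rule out either sign and must make all $2\eta+1$ counted predictions while $f(1)\in\{-1,+1\}$ remains consistent. Second, for the error bound, convexity of $t\mapsto t^p$ gives $|c-1|^p+|c+1|^p\ge 2$ for every real $c$, so $\max\!\big(\sum_k|c_k-1|^p,\ \sum_k|c_k+1|^p\big)\ge\tfrac12\sum_k\big(|c_k-1|^p+|c_k+1|^p\big)\ge 2\eta+1$, and the adversary's choice of $b$ realizes the larger sum; hence the counted error is at least $2\eta+1$.

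The key point — the reason the bound is $2\eta+1$ rather than merely $\eta+1$ — is that the adversary reveals the eventually-true value $\eta+1$ times, interleaved with the $\eta$ lies about the opposite value, instead of asserting it only once at the end: the repetitions keep a cautious learner from committing early, so it is forced to hedge (optimally, predict $0$) and incur error $1$ in each of $2\eta+1$ consecutive rounds before it may safely settle. This balancing of "how many times must the true value be repeated before the learner may trust it" against "how long can the lies keep the learner hedging" is the only delicate part; the rest is routine. (The construction in fact gives the bound for all $p,q\ge1$; the hypotheses $p,q\ge2$ are needed only so that, together with Lemma \ref{upper43}, one obtains the two-sided estimate $\optag_{p,\eta}(\mathcal F_q)=\Theta(\eta)$.)
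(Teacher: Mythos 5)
Your proposal is correct and is essentially the same argument as the paper's: the same adversary (spend the $2\eta+1$ discarded rounds revealing $f(0)=0$, then query $x=1$ for $2\eta+1$ counted rounds, revealing $+1$ and $-1$ each $\eta$ times before committing to the sign $b$ that maximizes $\sum_k|c_k-b|^p$, with $f(x)=bx$), and the same convexity bound $|c-1|^p+|c+1|^p\ge 2$ to conclude that one of the two sums is at least $2\eta+1$. Your parenthetical observations (that the learner cannot distinguish the two signs before the last round, and that the construction works for all $p,q\ge 1$) are minor refinements of the paper's presentation, not a different route.
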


\begin{proof}
    Fix any algorithm $A$ for learning $\mathcal{F}_q$. Consider the following adversary strategy. For the first $2\eta+1$ rounds, repeatedly query the learner at $0$, and reveal the output to be $0$. Let $f(0)=0$, so that no lies have been used up yet. For the next $2\eta + 1$ rounds, the adversary repeatedly queries the learner at $1$. For the first $\eta$ of these rounds, reveal the value of $f(1)$ to be $-1$. Then, for the next $\eta$ rounds, reveal the value of $f(1)$ to be $1$. Then, on the next round, reveal the true value of $f(1)$ to be either $1$ or $-1$, whichever makes the total error function larger. Note that no matter what we choose, exactly $\eta$ lies have been used up. We claim that we can always guarantee a total error of at least $2\eta+1$. Indeed, note that \[\sum_{i=1}^{2\eta+1} |\hat y_i + 1|^p + \sum_{i=1}^{2\eta+1} |\hat y_i - 1|^p =\sum_{i=1}^{2\eta+1} (|\hat y_i + 1|^p + |\hat y_i - 1|^p) \ge 2(2\eta+1),\] where the last step comes from the fact that for any $p \ge 2$ and any $\hat y_i \in \mathbb{R}$, $|\hat y_i + 1|^p + |\hat y_i - 1|^p \ge 2$. To see why this is true, notice that if $\hat y_i \notin [-1, 1]$ the result is obvious; otherwise, Jensen's inequality gives the result. 
    
    This means at least one of $\sum_{i=1}^{2\eta+1} |\hat y_i + 1|^p$ and $\sum_{i=1}^{2\eta+1} |\hat y_i - 1|^p$ is at least $2\eta+1$. As the adversary can force an error of at least $2\eta+1$ regardless of the learner's predictions, $\optag_{p, \eta} (\mathcal{F}_q) \ge 2\eta+1$.
\end{proof}

Combining the bounds, we obtain a precise bound on $\optag_{p, \eta}(\mathcal{F}_q)$ for any $p, q \ge 2$ and $\eta \ge 1$. 

\begin{thm}
    For any $\eta \ge 1$, $p, q \ge 2$, we have that $\optag_{p, \eta}(\mathcal{F}_q) = \Theta(\eta)$. 
\end{thm}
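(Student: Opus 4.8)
The plan is to read this statement off directly from the two preceding lemmas, which between them trap $\optag_{p,\eta}(\mathcal{F}_q)$ between two affine functions of $\eta$. Concretely, the strategy is: (i) invoke Lemma \ref{upper43} to get the upper bound $\optag_{p,\eta}(\mathcal{F}_q) \le 12\eta + 6$; (ii) invoke Lemma \ref{lower44} to get the lower bound $\optag_{p,\eta}(\mathcal{F}_q) \ge 2\eta + 1$; (iii) observe that $2\eta + 1 \le \optag_{p,\eta}(\mathcal{F}_q) \le 12\eta + 6$ holds for every integer $\eta \ge 1$ and all $p, q \ge 2$, so both $\optag_{p,\eta}(\mathcal{F}_q) = O(\eta)$ and $\optag_{p,\eta}(\mathcal{F}_q) = \Omega(\eta)$, i.e. $\optag_{p,\eta}(\mathcal{F}_q) = \Theta(\eta)$.

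The content behind the upper bound, already carried out in Lemma \ref{upper43}, is the four-interval mimicking argument: the learner partitions $[0,1]$ into four pieces of length $\tfrac14$, predicts the bounding constant $v$ until it has collected at least $2\eta+1$ feedback points in an interval, then mimics an optimal standard-model algorithm $A$ within that interval, restarting a new ``stage'' (and discarding past feedback) whenever an inconsistency betrays a lie; since at most $\eta+1$ stages can occur, and the per-stage and per-interval errors are each $O(1)$, the total is $O(\eta)$. The content behind the lower bound, carried out in Lemma \ref{lower44}, is the two-point forcing argument at $x=0$ and $x=1$: after a harmless warm-up at $0$, the adversary queries $1$ repeatedly, reporting $-1$ for $\eta$ rounds and $+1$ for $\eta+1$ rounds, and using the elementary inequality $|\hat y+1|^p + |\hat y-1|^p \ge 2$ (valid for $p\ge 2$) to force error at least $2\eta+1$ no matter which final value it commits to.

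I do not expect any remaining obstacle: all the substantive work lies in Lemmas \ref{upper43} and \ref{lower44}, and the present theorem is purely their conjunction. The only thing to be careful about is matching hypotheses — both lemmas are stated for $p, q \ge 2$ and $\eta \ge 1$, which are exactly the hypotheses here — and noting that a bound of the form $a\eta + b$ with $a, b > 0$ and $\eta \ge 1$ is $\Theta(\eta)$.

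\begin{proof}
    By Lemma \ref{upper43}, for all $\eta \ge 1$ and $p, q \ge 2$ we have $\optag_{p,\eta}(\mathcal{F}_q) \le 12\eta + 6$, so $\optag_{p,\eta}(\mathcal{F}_q) = O(\eta)$. By Lemma \ref{lower44}, under the same hypotheses we have $\optag_{p,\eta}(\mathcal{F}_q) \ge 2\eta + 1$, so $\optag_{p,\eta}(\mathcal{F}_q) = \Omega(\eta)$. Combining the two, $\optag_{p,\eta}(\mathcal{F}_q) = \Theta(\eta)$.
\end{proof}
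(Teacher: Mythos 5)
Your proposal is correct and matches the paper exactly: the theorem is stated as the immediate conjunction of Lemma \ref{upper43} (upper bound $12\eta+6$) and Lemma \ref{lower44} (lower bound $2\eta+1$), which together give $\Theta(\eta)$. No further argument is needed.
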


\section{Discussion}\label{discussion}

With the results of this paper, we have completed the characterization of the ordered pairs $(p, q)$ for which $\opt_p(\mathcal{F}_q)$ is finite. This problem has been open since Kimber and Long \cite{kl} defined the model of online learning of smooth functions. For the standard learning scenario, a natural next step would be to establish lower and upper bounds on $\opt_p(\mathcal{F}_q)$ that match up to a constant factor for every $p, q \ge 1$. 

The paper \cite{geneson} established precisely that $\opt_p(\mathcal{F}_q) = 1$ for all $(p, q)$ with either $p, q \ge 2$ or $1 < q < 2$ and $p \ge 2 + \frac{1}{q-1}$. Another direction would be to identify more pairs of $(p, q)$ such that $\opt_p(\mathcal{F}_q) = 1$. 

Having studied the online learning of polynomials and established that polynomials in $\mathcal{F}_q$ are not any easier to learn than general functions in $\mathcal{F}_q$, it remains to investigate the worst-case error of learning other special subsets of $\mathcal{F}_q$. In particular, we invite future research on generalizing the ideas for our proof that $\opt_p(\mathcal{P}_q) = \opt_p(\mathcal{F}_q)$ to other special subsets $\mathcal{A}_q$ of $\mathcal{F}_q$. Indeed, the crux of our proof for $\mathcal{P}_q$ was a connection with the Weierstrass Approximation Theorem, which allowed us to use polynomials to uniformly approximate any continuous real-valued function. Yet, there exists a generalization of the Weierstrass Approximation Theorem, the Stone-Weierstrass Theorem. This theorem offers a condition on general subsets of the set of continuous functions over $[0, 1]$, which if satisfied implies that the functions in the subset can uniformly approximate every continuous real function over $[0, 1]$. As such, the Stone-Weierstrass Theorem may be a gateway to proving that more special subsets $\mathcal{A}_q$ of $\mathcal{F}_q$ are as hard to learn as $\mathcal{F}_q$ itself. Specifically, we conjecture that it is not easier to learn sums of exponential functions in $\mathcal{F}_q$, or trigonometric polynomials in $\mathcal{F}_q$, than $\mathcal{F}_q$ in general. 

In the noisy learning scenario, we have shown that $\optag_{p, \eta} (\mathcal{F}_q) = \Theta(\eta)$ for any $\eta$, and all $p, q \ge 2$. However, we have yet to obtain precise values of $\optag_{p, \eta} (\mathcal{F}_q)$ at any particular values of $p, q, \eta$. This could be an interesting research direction, alongside establishing upper bounds on $\optag_{p, \eta} (\mathcal{F}_q)$ when $p < 2$ or $q < 2$.

Motivated by our lower bound of $2\eta+1$ for all $p, q \ge 2$, we make the following conjecture about the behavior of the noisy learning scenario as the parameter $p$ approaches infinity.

\begin{conj}
    For any $\eta \ge 1$, $q \ge 1$, we have $\lim_{p \rightarrow \infty} \optag_{p, \eta} (\mathcal{F}_q) = 2\eta+1$.
\end{conj}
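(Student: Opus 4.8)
The plan is to prove the matching bounds $2\eta+1\le\optag_{p,\eta}(\mathcal F_q)$ for all large $p$ and $\optag_{p,\eta}(\mathcal F_q)\le 2\eta+1+o(1)$ as $p\to\infty$, for every fixed $\eta\ge 1$ and $q>1$. (For $q=1$ the quantity $\optag_{p,\eta}(\mathcal F_1)$ equals $+\infty$ for every $p$ by Theorem~\ref{agnosticupperbound}, so the statement is only meaningful for $q>1$, which we assume.) The lower bound is essentially already available: the adversary strategy in the proof of Lemma~\ref{lower44} only ever uses the witness functions $f_{\{(0,0),(1,\pm 1)\}}$, which lie in $\mathcal F_q$ for every $q\ge 1$, and the single analytic input it needs, $|\hat y+1|^p+|\hat y-1|^p\ge 2$, holds for all $p\ge 1$ (for $\hat y\in[-1,1]$ by convexity of $t\mapsto t^p$ on $[0,\infty)$ applied to $u=1+\hat y$ and $w=1-\hat y$ with $u+w=2$, and trivially otherwise). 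Hence $\optag_{p,\eta}(\mathcal F_q)\ge 2\eta+1$ for all $p\ge 1$ and $q>1$, so $\liminf_{p\to\infty}\optag_{p,\eta}(\mathcal F_q)\ge 2\eta+1$.

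For the upper bound I would fix $\varepsilon>0$ and, for every sufficiently large $p$, exhibit a learner whose worst-case error is at most $2\eta+1+\varepsilon$. As in the proof of Theorem~\ref{1.5}, the learner spends the $2\eta+1$ discarded initial rounds to fix a constant $v$ with $f(x)\in[v-1,v+1]$ for all $x\in[0,1]$. On each counted round it predicts the Chebyshev centre of the current feasible value set: writing $G_t=\{g\in\mathcal F_q:\ g\text{ disagrees with at most }\eta\text{ of the points revealed in rounds }<t\}$ and $R_t=\{g(x_t):g\in G_t\}$, it predicts $\hat y_t=\tfrac12(\inf R_t+\sup R_t)$. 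Two facts are immediate and essential: since at most $\eta$ lies have been told so far, $f\in G_t$ and hence $f(x_t)\in R_t$; and every $g\in G_t$ agrees with at least $\eta+1$ of the $2\eta+1$ initial points, which (exactly as in Theorem~\ref{1.5}) forces $g([0,1])\subseteq[v-1,v+1]$, so $R_t\subseteq[v-1,v+1]$ has length at most $2$. Therefore $\hat y_t$ and $f(x_t)$ lie in a common interval of length $\le 2$ whose centre is $\hat y_t$, giving a per-round error of at most $1$ uniformly in $p$; and on rounds where enough mutually consistent data near $x_t$ has accumulated, $R_t$ is far shorter and the error is $o(1)$.

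The heart of the proof is then to show that these counted errors satisfy $\sum_t|\hat y_t-f(x_t)|^p\le 2\eta+1+\varepsilon$ for $p$ large. The plan is to split the counted rounds into the at most $\eta$ rounds on which a lie was actually told — each contributing at most $1$ by the per-round bound — and the remaining honest rounds, and to control the honest rounds by a budget argument patterned on the $J_q$-bookkeeping of Section~\ref{standard} and of \cite{geneson}. On any maximal stretch of consecutive honest rounds, the feasible set $G_t$ is governed by the honest data revealed on that stretch (up to the bounded influence of the at most $\eta$ earlier lies), essentially as the feasible set in the lie-free standard model, where $\opt_p(\mathcal F_q)=1$ once $p\ge 2+\frac1{q-1}$; the plan is to show that on such a stretch the Chebyshev-centre predictor accumulates total $p$-th-power error at most $1+\varepsilon'$ for $p$ large, the budget $J_q[f]\le 1$ permitting at most one round of the stretch to carry error near $1$ while the $p$-th powers of all other errors on the stretch sum to a quantity that vanishes as $p\to\infty$. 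A single lie may ``reset'' such a stretch and temporarily re-widen $R_t$; the quantitative crux is that each lie costs at most $2$ units of error overall — at most $1$ on the lying round itself, plus at most $1$ more from the re-widened feasible range before the lie is exposed by an inconsistency among the revealed points or by the perceived error on the stretch exceeding $1$. Adding a base of $1$ to at most $2$ per lie gives $\le 2\eta+1+\varepsilon$, so $\limsup_{p\to\infty}\optag_{p,\eta}(\mathcal F_q)\le 2\eta+1$, and combining with the lower bound proves the conjecture for $q>1$.

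The main obstacle is exactly this budget argument. Because the at most $\eta$ lies are placed adaptively, the adversary can try to time a lie so as to simultaneously maximise the error on the lying round and leave the feasible value set $R_t$ as wide as possible afterwards, so one must track the whole feasible set $G_t$ — rather than a single interpolating function as in the standard LININT analysis — and bound, in one accounting, how much a single lie can enlarge $R_t$ against how quickly subsequent honest rounds force it to contract again. Getting the charges to come out to exactly $2$ per lie and a base of $1$, rather than the $O(1)$ constants of Lemma~\ref{upper43} (whose losses occur precisely at the interval localisation and the restart bookkeeping), is the delicate part; the refinement will need the Chebyshev-centre predictor and the $p\to\infty$ annihilation of sub-unit errors to be exploited together, with the auxiliary thresholds ($\varepsilon'$ and the cutoff for ``error near $1$'') chosen and fixed before $p$ is sent to infinity.
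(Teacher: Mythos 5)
This statement is left as an open conjecture in the paper -- there is no proof there to compare against -- so the only question is whether your proposal actually closes it, and it does not. The lower-bound half is fine: it is exactly the adversary of Lemma~\ref{lower44}, and your observation that $|\hat y+1|^p+|\hat y-1|^p\ge 2$ already holds for all $p\ge 1$ by convexity is correct, giving $\optag_{p,\eta}(\mathcal F_q)\ge 2\eta+1$ for every $p$. (You are also right that the conjecture as literally stated must exclude $q=1$, since $\optag_{p,\eta}(\mathcal F_1)=\infty$.) But the upper bound, which is the entire content of the conjecture, is only a plan, and its central quantitative claims are asserted rather than proved: that a maximal honest stretch under the Chebyshev-centre predictor accumulates $p$-th-power error at most $1+\varepsilon'$, and that each lie costs at most $2$ units in total ($1$ on the lying round plus $1$ from the re-widened feasible range). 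You acknowledge this yourself in the final paragraph, and the acknowledgement is warranted.

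Concretely, the step that would fail as written is the reduction of honest stretches to the lie-free standard model. The learner never knows which rounds were honest, so the feasible set $G_t$ is a union over all $\binom{\text{revealed points}}{\le\eta}$ lie-patterns; a function in $G_t$ may ignore any $\eta$ honest points, so $R_t$ need not contract the way the single interpolant $f_S$ does in the LININT analysis, and the standard-model fact $\opt_p(\mathcal F_q)=1$ for $p\ge 2+\frac1{q-1}$ (itself a nontrivial theorem of Geneson--Zhou, not a consequence of per-round boundedness) does not transfer by inspection. Likewise, a single adaptively placed lie can keep $R_{s}$ wide at \emph{several} subsequent query points before any inconsistency or error-budget overflow exposes it; bounding its total downstream contribution by one unit of $p$-th-power error is precisely the potential-function argument that is missing, and it is where the paper's own Lemma~\ref{upper43} loses a constant factor ($12\eta+6$ rather than $2\eta+1$). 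Until that accounting is carried out, the proposal establishes only $\liminf_{p\to\infty}\optag_{p,\eta}(\mathcal F_q)\ge 2\eta+1$ together with the $O(\eta)$ upper bound already in the paper; the conjecture remains open.
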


It would also be interesting to investigate online learning of smooth functions with other modes of adversary feedback. In particular, delayed ambiguous reinforcement has been studied for the online learning of classifiers \cite{Auer, Feng}. Another possibility would be feedback on the absolute value of prediction error. 

\section{Acknowledgements}
I would like to thank my PRIMES mentor, Dr.~Jesse Geneson, for introducing me to this research topic, providing new ideas for research directions, and offering useful feedback on my proofs. I am immensely grateful for his continued patience throughout the entirety of my work on this paper. I would also like to thank the MIT PRIMES-USA program for providing this wonderful research opportunity.

\end{document}